\newtheorem{theorem}{Theorem}[section]
\newtheorem{lemma}[theorem]{Lemma}
\newcommand\norm[1]{\left\lVert#1\right\rVert}
\DeclareMathOperator*{\argmax}{arg\,max}
\DeclareMathOperator*{\argmin}{arg\,min}
\title{Wasserstein-2 Generative Networks}
\author{%
  Alexander Korotin \\
  Skolkovo Institute of Science and Technology\\
  \textit{Moscow, Russia} \\
  \texttt{a.korotin@skoltech.ru} \\
   \And
  Vage Egiazarian \\
  Skolkovo Institute of Science and Technology\\
  \textit{Moscow, Russia} \\
  \texttt{vage.egiazarian@skoltech.ru} \\
   \And
  Arip Asadulaev \\
  Information Technologies, \\
  Mechanics and Optics University\\
  \textit{Saint Petersburg, Russia} \\
  \texttt{aripasadulaev@itmo.ru} \\
   \And
  Alexander Safin \\
  Skolkovo Institute of Science and Technology\\
  \textit{Moscow, Russia} \\
  \texttt{a.safin@skoltech.ru} \\
   \And
  Evgeny Burnaev \\
  Skolkovo Institute of Science and Technology\\
  \textit{Moscow, Russia} \\
  \texttt{e.burnaev@skoltech.ru} \\
}
\begin{document}

\maketitle

\begin{abstract}
We propose a novel end-to-end non-minimax algorithm for training optimal transport mappings for the quadratic cost (Wasserstein-2 distance). The algorithm uses input convex neural networks and a cycle-consistency regularization to approximate Wasserstein-2 distance. In contrast to popular entropic and quadratic regularizers, cycle-consistency does not introduce bias and scales well to high dimensions. From the theoretical side, we estimate the properties of the generative mapping fitted by our algorithm. From the practical side, we evaluate our algorithm on a wide range of tasks: image-to-image color transfer, latent space optimal transport, image-to-image style transfer, and domain adaptation.
\end{abstract}

% It optimizes a non-minimax objective simplifying model training. 
\section{Introduction}
Generative learning framework has become widespread over the last couple of years tentatively starting with the  introduction of generative adversarial networks (GANs) by \cite{goodfellow2014generative}. The framework aims to define a stochastic procedure to sample from a given complex probability distribution $\mathbb{Q}$ on a space $Y\subset \mathbb{R}^{D}$, e.g. a space of images. The usual generative pipeline includes sampling from tractable distribution $\mathbb{P}$ on space $\mathcal{X}$ and applying a generative mapping $g:\mathcal{X}\rightarrow\mathcal{Y}$ that transforms $\mathbb{P}$ into the desired $\mathbb{Q}$.

\begin{wrapfigure}{r}{0.52\linewidth}
     \centering
     \vspace{-5mm}\begin{subfigure}[b]{0.25\columnwidth}
         \centering
         \includegraphics[width=\textwidth]{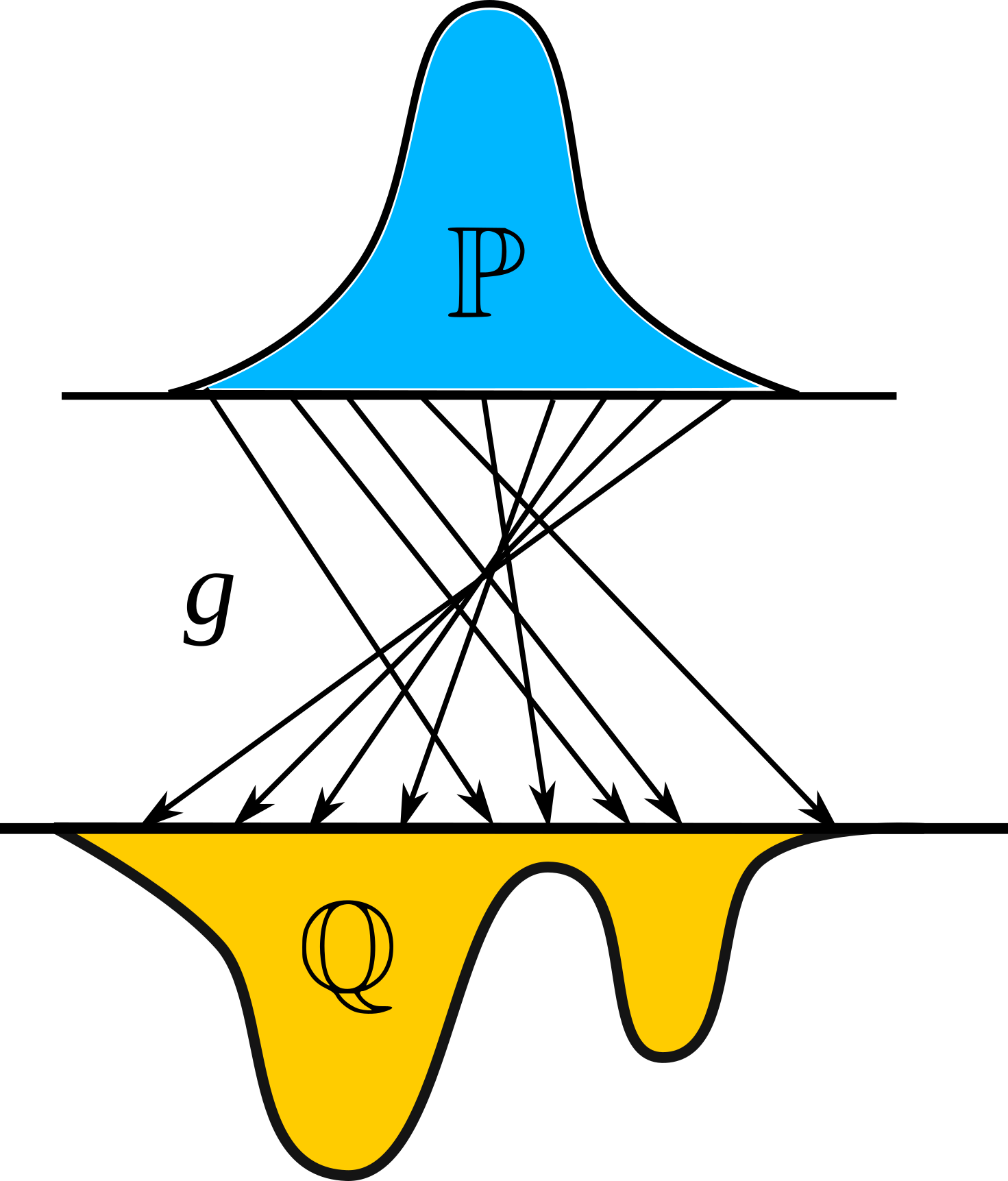}
         \caption{An Arbitrary Mapping.}
         \label{fig:arbitrary-mapping}
     \end{subfigure}
    %  \hfill
     \vspace{-3mm}\begin{subfigure}[b]{0.25\columnwidth}
         \centering
         \includegraphics[width=\textwidth]{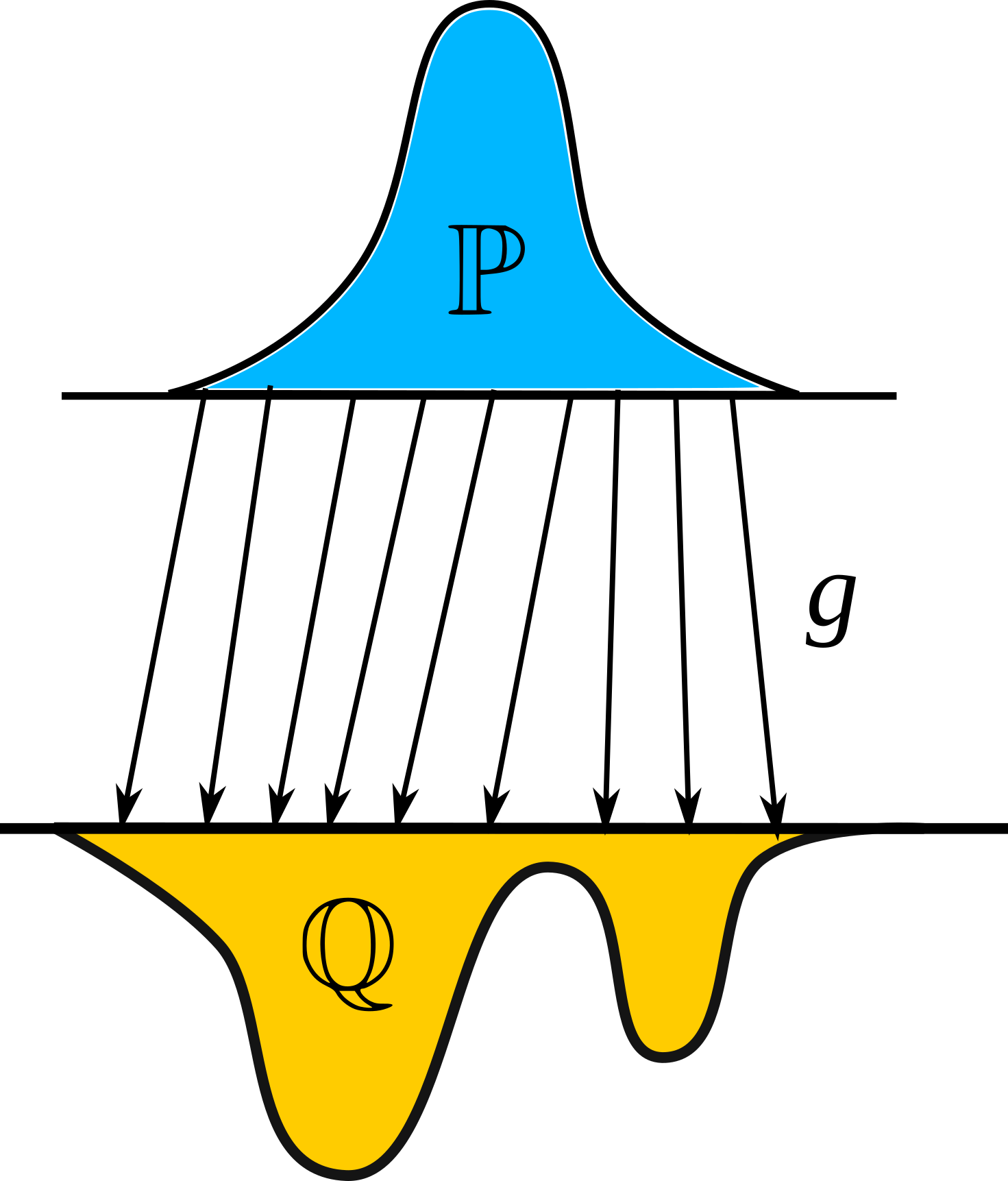}
         \caption{The Monotone Mapping.}
         \label{fig:monotone-mapping}
     \end{subfigure}
    \vspace{2mm}\caption{Two possible generative mappings that transform distribution $\mathbb{P}$ to distribution $\mathbb{Q}$.}
    \label{fig:not-optimal-optimal}
\end{wrapfigure}In many cases for probability distributions $\mathbb{P}, \mathbb{Q}$, there may exist several different generative mappings. For example, the mapping in Figure \ref{fig:monotone-mapping} seems to be better than the one in Figure \ref{fig:arbitrary-mapping} and should be preferred: the mapping in Figure \ref{fig:monotone-mapping} is straightforward, well-structured and invertible.

Existing generative learning approaches mainly do not focus on the structural properties of the generative mapping. For example, GAN-based approaches, such as $f$-GAN by \cite{nowozin2016f, yadav2017stabilizing}, W-GAN by \cite{arjovsky2017wasserstein} and others \cite{li2017mmd,mroueh2017fisher}, approximate generative mapping by a neural network with a problem-specific architecture.

The reasonable question is how to find a generative mapping $g\circ \mathbb{P}=\mathbb{Q}$ that is \textbf{well-structured}. Typically, the better the structure of the mapping is, the easier it is to find such a mapping. There are many ways to define what the well-structured mapping is. But usually, such a mapping is expected to be continuous and, if possible, invertible. One may note that when $\mathbb{P}$ and $\mathbb{Q}$ are both one-dimensional (${\mathcal{X},\mathcal{Y}\subset \mathbb{R}^{1}}$), the only class of mappings ${g:\mathcal{X}\rightarrow \mathcal{Y}}$ satisfying these properties are monotone mappings\footnote{We consider only monotone \textbf{increasing} mappings. Decreasing mappings have analogous properties.}, i.e. ${\forall{x,x'}\in\mathcal{X}\text{ }(x\neq x')}$ satisfying $\big(g(x)-g(x')\big)\cdot \big(x-x'\big)> 0.$
The intuition of $1$-dimensional spaces can be easily extended to $\mathcal{X},\mathcal{Y}\subset \mathbb{R}^{D}$. We can require the similar condition to hold true: $\forall x,x'\in\mathcal{X}$ ($x\neq x'$)
\begin{equation}
   \langle g(x)-g(x'),x-x'\rangle > 0. 
   \label{multidim-monotonicity}
\end{equation}
The condition \eqref{multidim-monotonicity} is called monotonicity, and every surjective function satisfying this condition is invertible. In one-dimensional case, for any pair of continuous $\mathbb{P},\mathbb{Q}$ with non-zero density there exists a unique monotone generative map  given by $g(x)=F_{\mathbb{Q}}^{-1}\big(F_{\mathbb{P}}(x)\big)$ \cite{mccann1995existence}, where $F_{(\cdot)}$ is the cumulative distribution function of $\mathbb{P}$ or $\mathbb{Q}$. However, for $D>1$ there might exist more than one generative monotone mapping. For example, when $\mathbb{P}=\mathbb{Q}$ are standard 2-dimensional Gaussian distributions, all rotations by angles $-\frac{\pi}{2}<\alpha<\frac{\pi}{2}$ are monotone and preserve the distribution.

One may impose uniqueness by considering only maximal \cite{peyre2018mathematical} monotone mappings $g:\mathcal{X}\rightarrow\mathcal{Y}$ satisfying $\forall N=2,3\dots$ and $N$ distinct points $x_{1},\dots,x_{N}\in\mathcal{X}$ ($N+1\equiv 1$):
\vspace{-2mm}\begin{equation}
\sum_{n=1}^{N}\langle g(x_{n}),x_{n}-x_{n+1}\rangle>0.
\label{cycle-monotonity}
\end{equation}

\vspace{-4mm}The condition \eqref{cycle-monotonity} is called \textbf{cycle monotonicity} and also implies "usual"\ monotonicity \eqref{multidim-monotonicity}.

Importantly, for almost every two continuous probability distributions $\mathbb{P},\mathbb{Q}$ on $\mathcal{X}=\mathcal{Y}=\mathbb{R}^{D}$ there exists a unique cycle monotone mapping $g:\mathcal{X}\rightarrow\mathcal{Y}$ satisfying $g\circ \mathbb{P}=\mathbb{Q}$, see \cite{mccann1995existence}. Thus, instead of searching for arbitrary generative mapping, one may significantly reduce the considered approximating class of mappings by using only cycle monotone ones.

According to \cite{rockafellar1966characterization}, every cycle monotone mapping $g$ is contained in a sub-gradient of some convex function ${\psi: \mathcal{X}\rightarrow \mathbb{R}}$. Thus, every convex class of functions may produce cycle monotone mappings (by considering sub-gradients of these functions). In practice, deep \textbf{input convex neural networks} (ICNNs, see \cite{amos2017input}) can be used as a class of convex functions.

Formally, to fit a cycle monotone generative mapping, one may apply any existing approach, such as GANs \cite{goodfellow2014generative}, with the set of generators restricted to gradients of ICNN. However, GANs typically require solving a minimax optimization problem.

It turns out that the cycle monotone generators are strongly related to \textbf{Wasserstein-2} distance ($\mathbb{W}_{2}$). The approaches by \cite{taghvaei20192,makkuva2019optimal} use dual form of $\mathbb{W}_{2}$ to find the \textbf{optimal} generative mapping which is cycle monotone. The predecessor of both approaches is the gradient-descent algorithm for computing $\mathbb{W}_{2}$ distance by \cite{chartrand2009gradient}. The drawback of all these methods is similar to the one of GANs -- their optimization objectives are minimax.

Cyclically monotone generators require that both spaces $\mathcal{X}$ and $\mathcal{Y}$ have the same dimension, which poses no practical limitation. Indeed, it is possible to combine a generative mapping with a decoder of a pre-trained autoencoder, i.e. train a generative mapping into a latent space. % (Figure \ref{fig:latent-ot}).
% \begin{figure}[!htb]
%     \centering
%     \includegraphics[width=\linewidth]{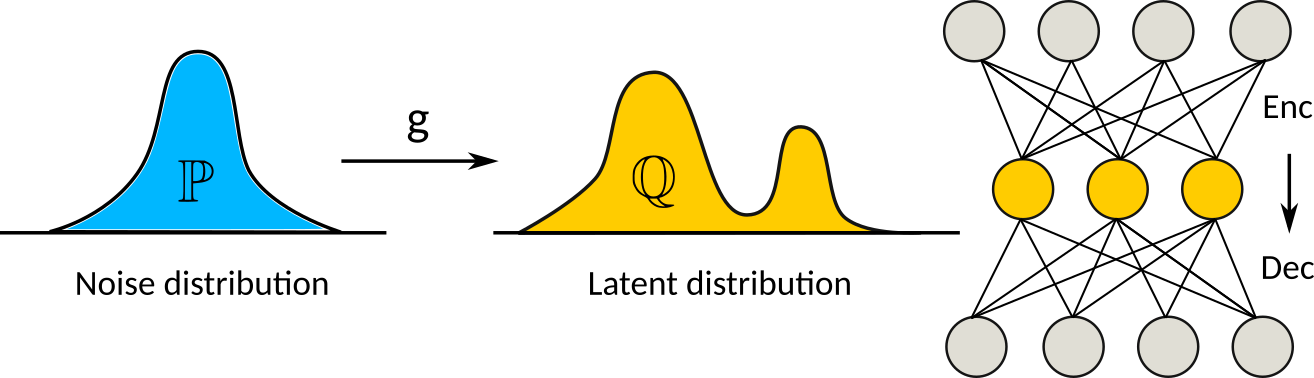}
%     \caption{Generative mapping into a latent space of an auto-encoder.}
%     \label{fig:latent-ot}
% \end{figure}
% \begin{figure}[!h]
%      \centering
%      \begin{subfigure}[b]{0.48\columnwidth}
%          \centering
%          \includegraphics[width=\textwidth]{body/pics/latent_ot.png}
%          \caption{Latent space mass transport.}
%          \label{fig:latent-ot}
%      \end{subfigure}
%      \hfill
%      \begin{subfigure}[b]{0.48\columnwidth}
%          \centering
%          \includegraphics[width=\linewidth]{}
%          \caption{Image-to-image style transfer.}
%          \label{fig:image-ot}
%      \end{subfigure}
%     \caption{Generative learning tasks with input and output spaces having equal dimensions.}
%     \label{fig:same-dimension-cases}
% \end{figure}
% \vspace{-2.5mm}
It should be also noted that the cases with equal dimensions of $\mathcal{X}$ and $\mathcal{Y}$ are common in computer vision. The typical example is image-to-image style transfer when both the input and the output images have the same size and number of channels. Other examples include image-to-image color transfer, domain adaptation, etc.
% (see Figure \ref{fig:image-ot})
% \begin{figure}[!htb]
%     \centering
%     \includegraphics[width=\linewidth]{body/pics/image_ot.png}
%     \caption{Image-to-image style translation.}
%     \label{fig:image-ot}
% \end{figure}

In this paper, we develop the concept of cyclically monotone generative learning. The \textbf{main contributions} of the paper are as follows:
\begin{enumerate}[leftmargin=*]\setlength\itemsep{0em}
    \item Developing an end-to-end non-minimax algorithm for training cyclically monotone generative maps, i.e. optimal maps for quadratic transport cost (Wasserstein-2 distance). 
    \item Proving theoretical bound on the approximation properties of the transport mapping fitted by the developed approach.
    \item Developing a class of Input Convex Neural Networks whose gradients are used to approximate cyclically monotone mappings.
    % \item Demonstrating the performance of the method in high dimensional optimal transport problems. 
    \item Demonstrating the performance of the method in practical problems of image-to-image color transfer, mass transport in latent spaces, image-to-image style translation and domain adaptation.
\end{enumerate}
Our algorithm extends the approach of \cite{makkuva2019optimal}, eliminates minimax optimization imposing cyclic regularization and solves non-minimax optimization problem. At the result, the algorithm \textbf{scales well} to high dimensions and \textbf{converges} up to 10x times \textbf{faster} than its predecessors.

\textbf{The paper is structured as follows.} Section \ref{sec-related-work} is devoted to Related Work. In Section \ref{sec-w2-ot}, we give the necessary mathematical tools on Wasserstein-2 optimal transport. In Section \ref{sec-main}, we derive our algorithm and state our main theoretical results. In Section \ref{sec-experiments}, we provide the results of computational experiments. In Appendix \ref{sec-appendix-theory}, we prove our theoretical results. In Appendix \ref{sec-icnn}, we describe the particular architectures of ICNN that we use for experiments. In Appendix \ref{sec-exp2}, additional experiments and training details are provided.

 \section{Related Work}
 \label{sec-related-work}

\vspace{-1.5mm}Modern generative learning is mainly associated with \textbf{Generative Adversarial Networks} (GANs) \cite{goodfellow2014generative,arjovsky2017wasserstein}. Basic GAN model consists of two competing networks: generator $g$ and discriminator $d$. Generator $g$ takes as input samples $x$ from given distribution $\mathbb{P}$ and tries to produce realistic samples from real data distribution $\mathbb{Q}$. Discriminator $d$ attempts to distinguish between generated and real distributions $g\circ \mathbb{P}$ and $\mathbb{Q}$ respectively. Formally, it approximates a dissimilarity measure between $g\circ \mathbb{P}$ and $\mathbb{Q}$ (e.g. $f$-divergence \cite{nowozin2016f} or Wasserstein-1 distance \cite{arjovsky2017wasserstein}). Although superior performance is reported for many applications of GANs \cite{karras2017progressive,mirza2014conditional}, training such models is always hard due to the minimax nature of the optimization objective.

Another important branch of generative learning is related to the theory of \textbf{Optimal Transport} (OT) \cite{villani2008optimal,peyre2019computational}. OT methods seek generative mapping\footnote{Commonly, in OT it is assumed that $\dim\mathcal{X}=\dim\mathcal{Y}$.} ${g:\mathcal{X}\rightarrow \mathcal{Y}}$,  optimal in the sense of the given transport cost ${c:\mathcal{X}\times\mathcal{Y}\rightarrow \mathbb{R}}$: 
\vspace{-0.8mm}\begin{equation}\text{Cost}(\mathbb{P},\mathbb{Q})=\min_{g\circ\mathbb{P}=\mathbb{Q}}\int_{\mathcal{X}}c\big(x,g(x)\big)d\mathbb{P}(x).
\label{ot-cost}
 \end{equation}
Equation \eqref{ot-cost} is also known as Monge's formulation of optimal transportation \cite{villani2008optimal}.

The principal OT generative method \cite{seguy2017large} is based on optimizing the \textbf{regularized dual form} of the transport cost \eqref{ot-cost}. It fits two potentials $\psi, \overline{\psi}$ (primal and conjugate) and then uses the barycentric projection to establish the desired (third) generative network $g$. Although the method uses non-minimax optimization objective, it is not \textbf{end-to-end} (consists of two sequential steps).

% \footnote{The essence of the discriminator in OT is different from GANs. In GANs discriminator $d$ is a function that compares generated distribution $g\circ\mathbb{P}$ with real $\mathbb{Q}$. In OT discriminators $\psi, \overline{\psi}$ compare the input distribution $\mathbb{P}$ directly with the desired output $\mathbb{Q}$.}

In the case of quadratic transport cost ${c(x,y)=\frac{\|x-y\|^{2}}{2}}$, the value \eqref{ot-cost} is known as the square of \textbf{Wasserstein-2} distance:
\vspace{-1.5mm}\begin{equation}\mathbb{W}_{2}^{2}(\mathbb{P},\mathbb{Q})=\min_{g\circ\mathbb{P}=\mathbb{Q}}\int_{\mathcal{X}}\frac{\|x-g(x)\|^{2}}{2}d\mathbb{P}(x).
\label{w2-primal}
\end{equation}
It has been well studied in literature \cite{brenier1991polar,mccann1995existence,villani2003topics,villani2008optimal} and has many useful properties which we discuss in Section \ref{sec-w2-ot} in more detail. The optimal mapping for the quadratic cost is cyclically monotone. Several algorithms exist \cite{lei2019geometric,taghvaei20192,makkuva2019optimal} for finding this mapping.

% A semi-discrete algorithm based on Alexandrov's potentials \cite{alexandrov2005convex} is developed in \cite{lei2019geometric}. Yet the proposed method is computationally expensive and experiences generalization issues, see \cite{makkuva2019optimal}.

The recent approach by \cite{taghvaei20192} uses the gradient-descent-based algorithm by \cite{chartrand2009gradient} for computing $\mathbb{W}_{2}$. The key idea is to approximate the optimal potential $\psi^{*}$ by an ICNN \cite{amos2017input}, and extract the optimal generator $g^{*}$ from its gradient $\nabla\psi^{*}$. The method is impractical due to high computational complexity: during the main optimization cycle, it solves an additional optimization sub-problem. The inner problem is convex but computationally costly. This was noted in the original paper and de-facto confirmed by the lack of experiments with complex distributions. A refinement of this approach is proposed by \cite{makkuva2019optimal}. The inner optimization sub-problem is removed, and a network is used to approximate its solution. This speeds up the computation, but the problem is still minimax.

\vspace{-2mm}\section{Preliminaries}
\label{sec-w2-ot}

\vspace{-2mm}In the section, we recall the properties of $\mathbb{W}_{2}$ distance \eqref{w2-primal} and its relation to cycle monotone mappings.

Throughout the paper, we assume that $\mathbb{P}$ and $\mathbb{Q}$ are \textbf{continuous} distributions on $\mathcal{X}=\mathcal{Y}=\mathbb{R}^{D}$ with \textbf{finite second moments}.\footnote{In practice, the continuity condition can be assumed to hold true. Indeed, widely used heuristics, such as adding small Gaussian noise to data \cite{sonderby2016amortised}, make considered distributions to be continuous.} This condition guarantees that \eqref{ot-cost} is well-defined in the sense that the optimal mapping $g^{*}$ always exists. It follows from \cite[Brenier's Theorem 2.12]{villani2003topics} that its restriction to the support of $\mathbb{P}$ is unique (up to the values on the small sets) and invertible. The symmetric characteristics apply to its inverse $(g^{*})^{-1}$, which induces symmetry to definition \eqref{w2-primal} for quadratic cost. According to \cite{villani2003topics}, the dual form of \eqref{w2-primal} is given by
\vspace{-0.7mm}\begin{eqnarray}\mathbb{W}_{2}^{2}(\mathbb{P}, \mathbb{Q})=\underbrace{\int_{\mathcal{X}}\frac{\|x\|^{2}}{2}d\mathbb{P}(x)+\int_{\mathcal{Y}}\frac{\|y\|^{2}}{2}d\mathbb{Q}(y)}_{\text{Const}(\mathbb{P},\mathbb{Q})}-
\underbrace{\min_{\psi\in\text{Convex}}\bigg[\int_{\mathcal{X}}\psi(x)d\mathbb{P}(x)+\int_{\mathcal{Y}}\overline{\psi}(y)d\mathbb{Q}(y)\bigg]}_{\text{Corr}(\mathbb{P},\mathbb{Q})},
\label{w2-dual-form}
\end{eqnarray}
where the minimum is taken over all the convex functions (potentials) $\psi:\mathcal{X}\rightarrow \mathbb{R}\cup \{\infty\}$, and ${\overline{\psi}(y)=\max_{x\in\mathcal{X}}\big(\langle x, y\rangle - \psi(x)\big)}$
% \begin{equation}
% \nonumber
% % \label{conv-conjugate}
% \end{equation}
is the \textbf{convex conjugate} \cite{fenchel1949conjugate} to $\psi$, which is also a convex function, $\overline{\psi}:\mathcal{Y}\rightarrow \mathbb{R}\cup \{\infty\}$. 

We call the value of the minimum in \eqref{w2-dual-form} \textbf{cyclically monotone correlations} and denote it by $\text{Corr}(\mathbb{P},\mathbb{Q})$. By equating \eqref{w2-dual-form} with \eqref{w2-primal}, one may derive the formula
\begin{equation}\text{Corr}(\mathbb{P},\mathbb{Q})=\max_{g\circ \mathbb{P}=\mathbb{Q}}\int_{\mathcal{X}}\langle x,g(x)\rangle d\mathbb{P}(x).\label{corr-cost}\end{equation}
Note that ${\big(-\text{Corr}(\mathbb{P},\mathbb{Q})\big)}$ can be viewed as an optimal transport cost for \textbf{bilinear cost function} ${c(x,y)=-\langle x,y\rangle}$, see \cite{mccann1995existence}. Thus, searching for optimal transport map $g^{*}$ for $\mathbb{W}_{2}$ is equivalent to finding the mapping which maximizes correlations \eqref{corr-cost}.

It is known for $\mathbb{W}_{2}$ distance that the gradient $g^{*}=\nabla\psi^{*}$ of optimal potential $\psi^{*}$ readily gives the minimizer of \eqref{w2-primal}, see \cite{villani2003topics}. Being a gradient of a convex function, it is necessarily cycle monotone. In particular, the inverse mapping can be obtained by taking the gradient w.r.t. input of the conjugate of optimal potential $\overline{\psi^{*}}(y)$ \cite{mccann1995existence}. Thus, we have
\vspace{-0.8mm}\begin{equation}(g^{*})^{-1}(y)=\big(\nabla\psi^{*}\big)^{-1}(y)=\nabla\overline{\psi^{*}}(y).
\label{inverse-mapping}
\end{equation}
In fact, one may approximate the primal potential $\psi$ by a parametric class $\Theta$ of input convex functions $\psi_{\theta}$ and optimize correlations
\vspace{-0.8mm}\begin{eqnarray}
\min_{\theta\in\Theta}\text{Corr}(\mathbb{P},\mathbb{Q}\mid\psi_{\theta})=\min_{\theta\in\Theta}\bigg[\int_{\mathcal{X}}\psi_{\theta}(x)d\mathbb{P}(x)+\int_{\mathcal{Y}}\overline{\psi_{\theta}}(y)d\mathbb{Q}(y)\bigg]
\label{base-w2-optimization}
\end{eqnarray}
in order to extract the approximate optimal generator ${g_{\theta^{\dagger}}:\mathcal{X}\rightarrow \mathcal{Y}}$ from the approximate potential $\psi_{\theta^{\dagger}}$. Note that in general it is not true that $g_{\theta^{\dagger}}\circ\mathbb{P}$ will be equal to $\mathbb{Q}$. However, we prove that if $\text{Corr}(\mathbb{P},\mathbb{Q}\mid\psi_{\theta^{\dagger}})$ is close to $\text{Corr}(\mathbb{P},\mathbb{Q})$, then $g_{\theta^{\dagger}}\circ\mathbb{P}\approx \mathbb{Q}$, see our Theorem \ref{thm-main-easy} in Appendix \ref{sec-proof}.

The optimization of \eqref{base-w2-optimization} can be performed via stochastic gradient descent. It is possible to get rid of conjugate $\overline{\psi_{\theta}}$ and extract an analytic formula for the gradient of \eqref{base-w2-optimization} w.r.t. parameters $\theta$ by using $\psi_{\theta}$ only, see the derivations in \cite{taghvaei20192,chartrand2009gradient}:
$$\frac{\partial \text{Corr}(\mathbb{P},\mathbb{Q}\mid\psi_{\theta})}{\partial \theta}=\int_{\mathcal{X}}\frac{\partial \psi_{\theta}(x)}{\partial \theta}d\mathbb{P}(x)-\int_{\mathcal{Y}}\frac{\partial \psi_{\theta}(\hat{x})}{\partial \theta}d\mathbb{Q}(y),$$
where $\frac{\partial\psi_{\theta}}{\partial\theta}$ in the second integral is computed at ${\hat{x}=(\nabla\psi_{\theta})^{-1}(y)}$, i.e. inverse value of $y$ for $\nabla\psi_{\theta}$.

In practice, both integrals are replaced by their Monte Carlo estimates over random mini-batches from $\mathbb{P}$ and $\mathbb{Q}$. Yet to compute the second integral, one needs to recover the inverse values of the current mapping $\nabla\psi_{\theta}$ for all $y\sim\mathbb{Q}$ in the mini batch. To do this, the following optimization sub-problem has to be solved
\begin{eqnarray}\hat{x}=(\nabla\psi_{\theta})^{-1}(y)\Leftrightarrow \hat{x}=\argmax_{x\in\mathcal{X}}\big(\langle x,y\rangle-\psi_{\theta}(x)\big)
\label{y-inversion}
\end{eqnarray}
for each $y\sim\mathbb{Q}$ in the mini batch. The optimization problem \eqref{y-inversion} is convex but \textbf{complex} because it requires computing the gradient of $\psi_{\theta}$ multiple times. It is computationally costly since $\psi_{\theta}$ is in general a large neural network. Besides, during iterations over $\theta$, each time a new independent batch of samples arrives. This makes it hard to use the information on the solution of \eqref{y-inversion} from the previous gradient descent step over $\theta$ in \eqref{base-w2-optimization}.

\section{An End-to-end Non-Minimax Algorithm}
\label{sec-main}
In Subsection \ref{sec-algorithm}, we describe our novel end-to-end algorithm with non-minimax optimization objective for fitting cyclically monotone generative mappings. In Subsection \ref{sec-approx}, we state our main theoretical results on approximation properties of the proposed algorithm.

\subsection{Algorithm}
\label{sec-algorithm}
To simplify the inner optimization procedure for inverting the values of current $\nabla\psi_{\theta}$, one may consider the following \textbf{variational approximation} of the main objective:
\vspace{-1.5mm}\begin{eqnarray}
    %\MoveEqLeft
    % \min_{\psi\in\text{Convex}}\bigg[\int_{\mathcal{X}}\psi(x)d\mathbb{P}(x)+\int_{\mathcal{Y}}\overline{\psi(y)}d\mathbb{Q}(y)\bigg]=
    % \nonumber
    % \\
    \min_{\psi\in\text{Convex}}\text{Corr}(\mathbb{P},\mathbb{Q}|\psi)=\min_{\psi\in\text{Convex}}\bigg[\int_{\mathcal{X}}\psi(x)d\mathbb{P}(x)+
    \int_{\mathcal{Y}}\overbrace{\max_{x\in\mathcal{X}}\big[\langle x, y\rangle-\psi(x)\big]}^{=\overline{\psi}(y)}d\mathbb{Q}(y)\bigg]=
    \nonumber
    \\
    \min_{\psi\in\text{Convex}}\bigg[\int_{\mathcal{X}}\psi(x)d\mathbb{P}(x)+
    \max_{T\in \mathcal{Y}^{\mathcal{X}}}\int_{\mathcal{Y}}\big[\langle T(y), y\rangle-\psi(T(y))\big]d\mathbb{Q}(y)\bigg],
    \label{minimax-optimization}
\end{eqnarray}
where by considering arbitrary measurable  functions $T$, we obtain a \textbf{variational lower bound} which matches the entire value for ${T=\big(\nabla\psi\big)^{-1}(y)=\nabla\overline{\psi}(y)}$. Thus, a possible approach is to approximate both primal and dual potentials by two different networks $\psi_{\theta}$ and $\overline{\psi_{\omega}}$ and solve the optimization problem w.r.t. parameters $\theta,\omega$, e.g. by stochastic gradient descent/ascent \cite{makkuva2019optimal}. Yet such a problem is still minimax. Thus, it suffers from typical problems such as convergence to local saddle points, instabilities during training and usually requires non-trivial hyperparameters choice.

We propose a method to get rid of the minimax objective by imposing additional regularization. Our \textbf{key idea} is to add regularization term $R_{\mathcal{Y}}(\theta,\omega)$ which stimulates \textbf{cycle consistency} \cite{zhu2017unpaired}, i.e. optimized generative mappings $g_{\theta}=\nabla\psi_{\theta}$ and $g_{\omega}^{-1}=\nabla\overline{\psi_{\omega}}$ should be mutually inverse:
\begin{eqnarray}R_{\mathcal{Y}}(\theta,\omega)=\int_{\mathcal{Y}}\|g_{\theta}\circ g_{\omega}^{-1}(y)-y\|^{2}d\mathbb{Q}(y)=\int_{\mathcal{Y}}\|\nabla\psi_{\theta}\circ \nabla\overline{\psi_{\omega}}(y)-y\|^{2}d\mathbb{Q}(y).
\label{y-regularizer}\end{eqnarray}
From the previous discussion and equation \eqref{inverse-mapping}, we see that cycle consistency is a quite natural condition for $\mathbb{W}_{2}$ distance. More precisely, if $\nabla\psi_{\theta}$ and $\nabla\overline{\psi_{\omega}}$ are exactly inverse to each other (assuming $\nabla\psi_{\theta}$ is injective), then $\overline{\psi_{\omega}}$ is a convex conjugate to $\psi_{\theta}$ up to a constant.

In contrast to regularization used in \cite{seguy2017large}, the proposed penalties use not the values of the potentials $\psi_{\theta},\overline{\psi_{\omega}}$ itself but the values of their gradients (generators). This helps to stabilize the value of the regularization term which in the case of \cite{seguy2017large} may take extremely high values due to the fact that convex potentials grow  fast in absolute value.\footnote{For example, in the case of identity map ${g(x)=\nabla\psi(x)=x}$, we have quadratic growth: ${\psi=\frac{\|x\|^{2}}{2}+c}$.}

Our proposed regularization leads to the following \textbf{non-minimax optimization objective} ($\lambda>0$):
\begin{eqnarray}\min_{\theta,\omega}\underbrace{\bigg[\bigg(\int_{\mathcal{X}}\psi_{\theta}(x)d\mathbb{P}(x)+
\int_{\mathcal{Y}}\big[\langle \nabla\overline{\psi_{\omega}}(y), y\rangle-\psi_{\theta}(\nabla\overline{\psi_{\omega}}(y))\big]d\mathbb{Q}(y)\bigg)+\frac{\lambda}{2} R_{\mathcal{Y}}(\theta,\omega)\bigg]}_{\text{\normalfont Corr}(\mathbb{P},\mathbb{Q}\mid \psi_{\theta},\overline{\psi_{\omega}};\lambda)}.
\label{main-objective}
\end{eqnarray}
The practical optimization procedure is given in Algorithm \ref{algorithm-main}. We replace all the integrals by Monte Carlo estimates over random mini-batches from $\mathbb{P}$ and $\mathbb{Q}$. To perform optimization, we use the stochastic gradient descent over parameters $\theta,\omega$ of primal $\psi_{\theta}$ and dual $\overline{\psi_{\omega}}$ potentials.
\begin{algorithm}

\SetAlgorithmName{Algorithm}{empty}{Empty}
\SetKwInOut{Input}{Input}
\Input{Distributions $\mathbb{P},\mathbb{Q}$ with sample access; cycle-consistency regularizer coefficient $\lambda> 0$\;
a pair of input-convex neural networks $\psi_{\theta}$ and $\overline{\psi_{\omega}}$; batch size $K>0$\;}

\For{$t=1,2,\dots$}{
1. Sample batches $X\sim \mathbb{P}$ and $Y\sim \mathbb{Q}$\;
2. Compute the Monte-Carlo estimate of the correlations:
\vspace{-2mm}$$\mathcal{L}_{\text{Corr}}=\frac{1}{K}\bigg[\sum_{x\in X}\psi_{\theta}(x)+
\sum_{y\in Y}\big[\langle \nabla\overline{\psi_{\omega}}(y), y\rangle-\psi_{\theta}\big(\nabla\overline{\psi_{\omega}}(y)\big)\big]\bigg];$$\\
\vspace{-2mm}3. Compute  the Monte-Carlo estimate of the cycle-consistency regularizer:
\vspace{-2mm}$$\mathcal{L}_{\text{Cycle}}:=\frac{1}{K}\sum_{y\in Y}\|\nabla\psi_{\theta}\circ \nabla\overline{\psi_{\omega}}(y)-y\|_{2}^{2};$$\\
\vspace{-2.3mm}4. Compute the total loss $\mathcal{L}_{\text{Total}}:=\mathcal{L}_{\text{Corr}}+\frac{\lambda}{2}\cdot\mathcal{L}_{\text{Cycle}}$\;
5. Perform a gradient step over $\{\theta,\omega\}$ by using $\frac{\partial \mathcal{L}_{\text{Total}}}{\partial \{\theta,\omega\}}$\;
 }
\caption{Numerical Procedure for Optimizing Regularized Correlations \eqref{main-objective}}
\label{algorithm-main}
\end{algorithm}

We use the automatic differentiation to evaluate $\nabla\overline{\psi_{\omega}}$, $\nabla\psi_{\theta}$ and the gradients of \eqref{main-objective} w.r.t. parameters $\theta,\omega$.
The time required to compute the gradient of \eqref{main-objective} w.r.t. $\theta,\omega$ is comparable by a constant factor to the time required to compute the value of $\psi_{\theta}(x)$. We empirically measured that this factor roughly equals 8-12, depending on the particular architecture of ICNN $\psi_{\theta}(x)$. We discuss the time complexity of a gradient step of our method in a more detail in Appendix \ref{sec-exp-complexity}.

In Subsection \ref{sect:gauss}, we show that our non-minimax approach \textbf{converges up to 10x times faster} than minimax alternatives by \cite{makkuva2019optimal} and \cite{taghvaei20192}.

\subsection{Approximation Properties}
\label{sec-approx}
Our gradient-descent-based approach described in Subsection \ref{sec-algorithm} computes $\text{Corr}(\mathbb{P},\mathbb{Q})$ by approximating it with a restricted sets of convex potentials. Let $(\psi^{\dagger},\overline{\psi^{\ddagger}})$ be a pair of potentials obtained by the optimization of correlations. Formally, the fitted generators $g^{\dagger}=\nabla\psi^{\dagger}$ and $(g^{\ddagger})^{-1}=\nabla\overline{\psi^{\ddagger}}$ are byproducts of optimization \eqref{main-objective}. We provide guarantees that the generated distribution $g^{\dagger}\circ \mathbb{P}$ is indeed close to $\mathbb{Q}$ as well as the inverse mapping $(g^{\ddagger})^{-1}$ pushes $\mathbb{Q}$ close to $\mathbb{P}$.

\begin{theorem}[Generative Property for  Approximators of Regularized Correlations]
Let $\mathbb{P}, \mathbb{Q}$ be two continuous probability distributions on $\mathcal{X}=\mathcal{Y}=\mathbb{R}^{D}$ with finite second moments. Let $\psi^{*}:\mathcal{X}\rightarrow \mathbb{R}$ be the optimal convex potential:
\begin{equation}\psi^{*}=\argmin_{\psi\in \mbox{\normalfont \small Convex}}\text{\normalfont Corr}(\mathbb{P},\mathbb{Q}|\psi)=\argmin_{\psi\in \mbox{\normalfont \small Convex}}\bigg[\int_{\mathcal{X}} \psi(x)d\mathbb{P}(x)+\int_{\mathcal{Y}} \overline{\psi}(y)d\mathbb{Q}(y)\bigg].
\label{opt-discriminator}
\end{equation}
Let two differentiable convex functions $\psi^{\dagger}:\mathcal{X}\rightarrow \mathbb{R}$ and $\overline{\psi^{\ddagger}}:\mathcal{Y}\rightarrow \mathbb{R}$ satisfy for some $\epsilon\in\mathbb{R}$:
\begin{eqnarray}\text{\normalfont Corr}\big(\mathbb{P},\mathbb{Q}\mid \psi^{\dagger},\overline{\psi^{\ddagger}}; \lambda\big)\leq \bigg[\int_{\mathcal{X}} \psi^{*}(x)d\mathbb{P}(x)+\int_{\mathcal{Y}} \overline{\psi^{*}}(y)d\mathbb{Q}(y)\bigg]+\epsilon=\underbrace{\text{\normalfont Corr}(\mathbb{P},\mathbb{Q})}_{\text{Equals } \eqref{corr-cost}}+\epsilon.
\label{thm-error-def}
\end{eqnarray}
Assume that $\psi^{\dagger}$ is $\beta^{\dagger}$-strongly convex ($\beta^{\dagger}>\frac{1}{\lambda}>0$) and $\mathcal{B}^{\dagger}$-smooth ($\mathcal{B}^{\dagger}\geq \beta^{\dagger}$). Assume that $\overline{\psi^{\ddagger}}$ has bijective gradient $\nabla\overline{\psi^{\ddagger}}$. Then the following inequalities hold true:

\begin{enumerate}[leftmargin=*]
    \item \textbf{Correlation Upper Bound} (regularized correlations dominate over the true ones)
$$\text{\normalfont Corr}\big(\mathbb{P},\mathbb{Q}\mid \psi^{\dagger},\overline{\psi^{\ddagger}}; \lambda\big)\geq \text{\normalfont Corr}\big(\mathbb{P},\mathbb{Q}\big)\qquad (\text{i.e. } \epsilon\geq 0);$$
    \item \textbf{Forward Generative Property} (mapping $g^{\dagger}=\nabla\psi^{\dagger}$ pushes $\mathbb{P}$ to be $O(\epsilon)$-close to $\mathbb{Q}$)
    $$\mathbb{W}_{2}^{2}(g^{\dagger}\circ\mathbb{P}, \mathbb{Q})=\mathbb{W}_{2}^{2}(\nabla\psi^{\dagger}\circ\mathbb{P}, \mathbb{Q})\leq \frac{(\mathcal{B}^{\dagger})^{2}\cdot \epsilon}{\lambda\beta^{\dagger}-1}\cdot\big[\frac{1}{\sqrt{\beta^{\dagger}}}+\sqrt{\lambda}\big]^{2};$$
    \item \textbf{Inverse Generative Property} (mapping $(g^{\ddagger})^{-1}=\nabla\overline{\psi^{\ddagger}}$ pushes $\mathbb{Q}$ to be $O(\epsilon)$-close to $\mathbb{P}$)
    $$\mathbb{W}_{2}^{2}\big((g^{\ddagger})^{-1}\circ\mathbb{Q},\mathbb{P}\big)=\mathbb{W}_{2}^{2}(\nabla\overline{\psi^{\ddagger}}\circ\mathbb{Q},\mathbb{P})\leq \frac{\epsilon}{\beta^{\dagger}-\frac{1}{\lambda}}.$$
\end{enumerate}

\label{thm-main}\end{theorem}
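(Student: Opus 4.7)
The plan is to prove the three parts in the order 1, 3, 2: Part 1 establishes a quadratic bound linking the regularizer to the Fenchel--Young gap, Part 3 extracts an $L^{2}$-bound between $h := \nabla\overline{\psi^{\ddagger}}$ and the optimal inverse map $h^{*} := \nabla\overline{\psi^{*}}$ via completing the square, and Part 2 follows by a triangle-inequality argument that combines Part 3 with the Lipschitz property of $\nabla\psi^{\dagger}$. Throughout, I write $r(y) := \nabla\psi^{\dagger}(h(y)) - y$ so that $R_{\mathcal{Y}} = \int\|r\|^{2}d\mathbb{Q}$, and repeatedly use that $\beta^{\dagger}$-strong convexity of $\psi^{\dagger}$ is equivalent to $\tfrac{1}{\beta^{\dagger}}$-smoothness of $\overline{\psi^{\dagger}}$.

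\textbf{Part 1.} The key pointwise inequality comes from applying the descent lemma for $\overline{\psi^{\dagger}}$ at $y^{*} := \nabla\psi^{\dagger}(h(y))$, where $\nabla\overline{\psi^{\dagger}}(y^{*}) = h(y)$ by Legendre duality, yielding
\[
\overline{\psi^{\dagger}}(y) - \bigl[\langle h(y), y\rangle - \psi^{\dagger}(h(y))\bigr] \leq \tfrac{1}{2\beta^{\dagger}}\|r(y)\|^{2}.
\]
Integrating against $\mathbb{Q}$ and adding $\tfrac{\lambda}{2}R_{\mathcal{Y}}$ to both sides gives
\[
\text{\normalfont Corr}(\mathbb{P},\mathbb{Q}\mid\psi^{\dagger},\overline{\psi^{\ddagger}};\lambda) \geq \int\psi^{\dagger}d\mathbb{P} + \int\overline{\psi^{\dagger}}d\mathbb{Q} + \tfrac{1}{2}\bigl(\lambda - \tfrac{1}{\beta^{\dagger}}\bigr) R_{\mathcal{Y}}.
\]
The hypothesis $\beta^{\dagger} > 1/\lambda$ makes the last term non-negative, and $\psi^{*}$ being the minimizer in \eqref{opt-discriminator} makes the unregularized piece dominate $\text{\normalfont Corr}(\mathbb{P},\mathbb{Q})$, proving Part 1. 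As a byproduct, $R_{\mathcal{Y}} \leq 2\epsilon/(\lambda - 1/\beta^{\dagger})$, which feeds into Part 2.

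\textbf{Part 3.} Rewrite $\text{\normalfont Corr}(\mathbb{P},\mathbb{Q}) = \int\langle h^{*}(y), y\rangle d\mathbb{Q}(y)$ using $h^{*}\circ\mathbb{Q} = \mathbb{P}$ and change variables in $\int\psi^{\dagger}d\mathbb{P} = \int\psi^{\dagger}(h^{*}(y))d\mathbb{Q}(y)$. Then \eqref{thm-error-def} becomes
\[
\epsilon \geq \int\bigl[\psi^{\dagger}(h^{*}) - \psi^{\dagger}(h) + \langle h - h^{*},\, y\rangle\bigr] d\mathbb{Q} + \tfrac{\lambda}{2}\int\|r\|^{2} d\mathbb{Q}.
\]
Applying $\beta^{\dagger}$-strong convexity of $\psi^{\dagger}$ to $\psi^{\dagger}(h^{*}) - \psi^{\dagger}(h)$ and combining $\nabla\psi^{\dagger}(h) - y = r$ into the bilinear term, the integrand is bounded below by $\tfrac{\beta^{\dagger}}{2}\|u\|^{2} + \langle r, u\rangle + \tfrac{\lambda}{2}\|r\|^{2}$ with $u := h^{*} - h$. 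The crucial algebraic step, and the main technical subtlety of the whole proof, is the completion of the square
\[
\tfrac{\beta^{\dagger}}{2}\|u\|^{2} + \langle r, u\rangle + \tfrac{\lambda}{2}\|r\|^{2} = \tfrac{1}{2}\bigl(\beta^{\dagger} - \tfrac{1}{\lambda}\bigr)\|u\|^{2} + \tfrac{\lambda}{2}\bigl\|r + \tfrac{u}{\lambda}\bigr\|^{2} \geq \tfrac{1}{2}\bigl(\beta^{\dagger} - \tfrac{1}{\lambda}\bigr)\|u\|^{2},
\]
showing why $\beta^{\dagger} > 1/\lambda$ is the sharp threshold. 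The deterministic coupling $y \mapsto (h(y), h^{*}(y))$ transports $h\circ\mathbb{Q}$ to $\mathbb{P}$, so $\mathbb{W}_{2}^{2}(h\circ\mathbb{Q},\mathbb{P}) \leq \int\|u\|^{2}d\mathbb{Q}$, giving Part 3 up to the constant.

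\textbf{Part 2.} Take the coupling $y \mapsto (\nabla\psi^{\dagger}(h^{*}(y)), y)$, whose first marginal is $\nabla\psi^{\dagger}\circ h^{*}\circ\mathbb{Q} = g^{\dagger}\circ\mathbb{P}$ and second marginal is $\mathbb{Q}$. Decomposing $\nabla\psi^{\dagger}(h^{*}) - y = [\nabla\psi^{\dagger}(h^{*}) - \nabla\psi^{\dagger}(h)] + r$ and applying Minkowski's inequality in $L^{2}(\mathbb{Q})$ together with $\mathcal{B}^{\dagger}$-Lipschitzness of $\nabla\psi^{\dagger}$ yields
\[
\mathbb{W}_{2}(g^{\dagger}\circ\mathbb{P}, \mathbb{Q}) \leq \mathcal{B}^{\dagger}\sqrt{\textstyle\int\|u\|^{2}d\mathbb{Q}} + \sqrt{R_{\mathcal{Y}}}.
\]
Plugging in the Part 3 bound $\int\|u\|^{2}d\mathbb{Q} \leq 2\epsilon/(\beta^{\dagger} - 1/\lambda)$ and the Part 1 byproduct $R_{\mathcal{Y}} \leq 2\epsilon\beta^{\dagger}/(\lambda\beta^{\dagger} - 1)$, squaring, and using $\mathcal{B}^{\dagger} \geq \beta^{\dagger}$ to replace $\sqrt{\beta^{\dagger}}$ by $\mathcal{B}^{\dagger}/\sqrt{\beta^{\dagger}}$ so as to factor $(\mathcal{B}^{\dagger})^{2}$ outside, produces the claimed form. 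The main obstacle throughout is the algebra in Part 3: the indefinite cross term $\langle r, u\rangle$ must be absorbed jointly by the strongly-convex term and the regularizer, and the threshold $\lambda\beta^{\dagger} > 1$ is precisely what keeps the resulting quadratic form strictly positive definite.
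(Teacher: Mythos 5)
Your proposal is correct, and in several places it takes a genuinely different (and somewhat cleaner) route than the paper. For Part 1, the paper changes variables to the $\mathbb{Q}$-side, applies $\beta^{\dagger}$-strong convexity of $\psi^{\dagger}$ at the point $\nabla\overline{\psi^{\ddagger}}(y)$, and completes a square jointly in $v=\nabla\overline{\psi^{*}}-\nabla\overline{\psi^{\ddagger}}$ and $u=\nabla\psi^{\dagger}\circ\nabla\overline{\psi^{\ddagger}}-\text{id}_{\mathcal{Y}}$, arriving at \eqref{explicit-corr-upper-bound}; you instead bound the Fenchel--Young gap $\overline{\psi^{\dagger}}(y)-[\langle h(y),y\rangle-\psi^{\dagger}(h(y))]$ by $\frac{1}{2\beta^{\dagger}}\|r(y)\|^{2}$ via $\frac{1}{\beta^{\dagger}}$-smoothness of $\overline{\psi^{\dagger}}$ and then invoke the optimality of $\psi^{*}$ in \eqref{opt-discriminator}, which delivers both $\epsilon\geq 0$ and the byproduct $R_{\mathcal{Y}}\leq 2\epsilon/(\lambda-\frac{1}{\beta^{\dagger}})$ (the paper's \eqref{u-upper-bound}) in one stroke. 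For the inverse property, your completion of the square that absorbs the cross term $\langle r,u\rangle$ into the $\frac{\lambda}{2}\|r\|^{2}$ term yields $\epsilon\geq\frac{1}{2}(\beta^{\dagger}-\frac{1}{\lambda})\|u\|^{2}_{\mathbb{Q}}$ directly, avoiding the paper's maximization over $\|u\|_{\mathbb{Q}}$ after \eqref{unfinished-upper-bound}; the resulting $\mathcal{L}^{2}$ bound is identical. For the forward property, you couple through $\nabla\psi^{\dagger}\circ\nabla\overline{\psi^{*}}$ on the $\mathbb{Q}$-side and use Minkowski plus $\mathcal{B}^{\dagger}$-Lipschitzness of $\nabla\psi^{\dagger}$, whereas the paper passes through $\nabla\overline{\psi^{\dagger}}$ and lower-bounds $\|\nabla\overline{\psi^{\dagger}}-\nabla\overline{\psi^{*}}\|_{\mathbb{Q}}$ by $\frac{1}{\mathcal{B}^{\dagger}}\|\nabla\psi^{*}-\nabla\psi^{\dagger}\|_{\mathbb{P}}$; your intermediate bound $\frac{\epsilon}{\lambda\beta^{\dagger}-1}\big(\mathcal{B}^{\dagger}\sqrt{\lambda}+\sqrt{\beta^{\dagger}}\big)^{2}$ is in fact slightly sharper than the stated one, and the claimed form follows from $\beta^{\dagger}\leq\mathcal{B}^{\dagger}$ exactly as you indicate.

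One bookkeeping point: the paper normalizes the quadratic cost as $\frac{1}{2}\|x-y\|^{2}$, so Lemma \ref{l2-property-lemma} reads $\mathbb{W}_{2}^{2}(T_{1}\circ\mathbb{P},T_{2}\circ\mathbb{P})\leq\frac{1}{2}\|T_{1}-T_{2}\|^{2}_{\mathbb{P}}$. Your coupling inequalities should therefore carry the factor $\frac{1}{2}$, i.e. $\mathbb{W}_{2}^{2}(h\circ\mathbb{Q},\mathbb{P})\leq\frac{1}{2}\int\|u\|^{2}d\mathbb{Q}$ and $\mathbb{W}_{2}^{2}(g^{\dagger}\circ\mathbb{P},\mathbb{Q})\leq\frac{1}{2}\big(\mathcal{B}^{\dagger}\|u\|_{\mathbb{Q}}+\sqrt{R_{\mathcal{Y}}}\big)^{2}$; as literally written (without the $\frac{1}{2}$) your final constants in Parts 2 and 3 come out a factor of $2$ too large. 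With the paper's normalization restored, your argument reproduces the stated constants exactly, so this is a constant-tracking slip rather than a gap.
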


\vspace{-3mm}Informally, Theorem \ref{thm-main} states that the better we approximate correlations between $\mathbb{P}$ and $\mathbb{Q}$ by potentials $\psi^{\dagger},\overline{\psi^{\ddagger}}$, the closer we expect generated distributions $g^{\dagger}\circ \mathbb{P}$ and $(g^{\ddagger})^{-1}\circ \mathbb{Q}$ to be to $\mathbb{Q}$ and $\mathbb{P}$ respectively in the $\mathbb{W}_{2}$ sense. We prove Theorem \ref{thm-main} and provide extra discussion on smoothness and strong convexity in Section \ref{sec-proof}. Additionally, we derive Theorem \ref{thm-main-easy} which states analogous generative properties for the mapping obtained by the base method \eqref{base-w2-optimization} with single potential and no regularization.

Due to the Forward Generative property of Theorem \ref{thm-main}, one may view the optimization of regularized correlations \eqref{main-objective} as a process of minimizing $\mathbb{W}_{2}^{2}$ between the forward generated $g^{\dagger}\circ \mathbb{P}$ and true $\mathbb{Q}$ distributions (same applies to the inverse property). Wasserstein-2 distance prevents \textbf{mode dropping} for distant modes due to the quadratic cost. See the experiment in Figure \ref{fig:100gau} in Appendix \ref{sec-exp-toy}. 

The following Theorem demonstrates that we actually can approximate correlations as well as required if the approximating classes of functions for potentials are large enough.

\begin{theorem}[Approximability of Correlations]
Let $\mathbb{P}, \mathbb{Q}$ be two continuous probability distributions on $\mathcal{X}=\mathcal{Y}=\mathbb{R}^{D}$ with finite second moments. Let $\psi^{*}:\mathcal{Y}\rightarrow \mathbb{R}$ be the optimal convex potential.

Let $\Psi_{\mathcal{X}}, \overline{\Psi}_{\mathcal{Y}}$ be classes of differentiable convex functions $\mathcal{X}\rightarrow\mathbb{R}$ and $\mathcal{Y}\rightarrow \mathbb{R}$ respectively and 
\begin{enumerate}[leftmargin=*]
    \item $\exists \psi^{\mathcal{X}}\in\Psi_{\mathcal{X}}$ with $\epsilon_{\mathcal{X}}$-close gradient to the forward mapping $\nabla \psi^{*}$ in $\mathcal{L}^{2}(\mathcal{X}\rightarrow\mathbb{R}^{D},\mathbb{P})$ sense:
    $$\|\nabla\psi^{\mathcal{X}}-\nabla \psi^{*}\|_{\mathbb{P}}^{2}\stackrel{\text{def}}{=}\int_{\mathcal{X}}\|\nabla\psi^{\mathcal{X}}(y)-\nabla \psi^{*}(y)\|^{2}d\mathbb{P}(y)\leq \epsilon_{\mathcal{X}},$$
    and $\psi^{\mathcal{X}}$ is $\mathcal{B}^{\mathcal{X}}$-smooth;
    \item $\exists \overline{\psi^{\mathcal{Y}}}\in\overline{\Psi_{\mathcal{Y}}}$ with $\epsilon_{\mathcal{Y}}$-close gradient to the inverse mapping $\nabla \overline{\psi^{*}}$ in $\mathcal{L}^{2}(\mathcal{Y}\rightarrow\mathbb{R}^{D},\mathbb{Q})$ sense:
    $$\|\nabla\overline{\psi^{\mathcal{Y}}}-\nabla \overline{\psi^{*}}\|_{\mathbb{Q}}^{2}\stackrel{\text{def}}{=}\int_{\mathcal{Y}}\|\nabla\overline{\psi^{\mathcal{Y}}}(y)-\nabla \overline{\psi^{*}}(y)\|^{2}d\mathbb{Q}(y)\leq \epsilon_{\mathcal{Y}}.$$
\end{enumerate}
Let $(\psi^{\dagger},\overline{\psi^{\ddagger}})$ be the minimizers of the regularized correlations within $\Psi_{\mathcal{X}}\times\overline{\Psi_{\mathcal{Y}}}$:
\begin{equation}(\psi^{\dagger},\overline{\psi^{\ddagger}})=\argmin_{\psi\in \Psi_{\mathcal{X}}, \overline{\psi'}\in\overline{\Psi_{\mathcal{Y}}}}\text{\normalfont Corr}\big(\mathbb{P},\mathbb{Q}\mid \psi,\overline{\psi'}; \lambda\big).
\label{our-algorithmn-general}
\end{equation}
Then the regularized correlations for $(\psi^{\dagger},\overline{\psi^{\ddagger}})$ satisfy the following inequality:
\begin{eqnarray}\text{\normalfont Corr}\big(\mathbb{P},\mathbb{Q}\mid \psi^{\dagger},\overline{\psi^{\ddagger}}; \lambda\big)\leq\text{\normalfont Corr}(\mathbb{P},\mathbb{Q})+\big[\frac{\lambda}{2}(\mathcal{B}^{\mathcal{X}}\sqrt{\epsilon_{\mathcal{Y}}}+\sqrt{\epsilon_{\mathcal{X}}})^{2}+(\mathcal{B}^{\mathcal{X}}\sqrt{\epsilon_{\mathcal{Y}}}+\sqrt{\epsilon_{\mathcal{X}}})\cdot(\sqrt{\epsilon_{\mathcal{Y}}})+\frac{\mathcal{B}^{\mathcal{X}}}{2}\epsilon_{\mathcal{Y}}\big],
\label{thm2-corr-approx-bound}
\end{eqnarray}
i.e. regularized correlations do not exceed true correlations plus $O(\epsilon_{\mathcal{X}}+\epsilon_{\mathcal{Y}})$ term.
\label{thm-main2}
\end{theorem}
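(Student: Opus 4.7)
The plan is to bound the target correlations from above by plugging the assumed approximators $(\psi^{\mathcal{X}},\overline{\psi^{\mathcal{Y}}})$ into the regularized objective and using the fact that $(\psi^{\dagger},\overline{\psi^{\ddagger}})$ is its minimizer over $\Psi_{\mathcal{X}}\times\overline{\Psi_{\mathcal{Y}}}$. This immediately gives $\text{Corr}(\mathbb{P},\mathbb{Q}\mid\psi^{\dagger},\overline{\psi^{\ddagger}};\lambda)\leq \text{Corr}(\mathbb{P},\mathbb{Q}\mid\psi^{\mathcal{X}},\overline{\psi^{\mathcal{Y}}};\lambda)$, so the whole task reduces to proving inequality \eqref{thm2-corr-approx-bound} with $(\psi^{\mathcal{X}},\overline{\psi^{\mathcal{Y}}})$ in place of the argmin.

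Write $T^{*}=\nabla\overline{\psi^{*}}$ and $T^{\mathcal{Y}}=\nabla\overline{\psi^{\mathcal{Y}}}$. Brenier's theorem gives the pushforward identity $T^{*}\circ\mathbb{Q}=\mathbb{P}$, and the Fenchel equality $\psi^{*}(T^{*}(y))+\overline{\psi^{*}}(y)=\langle T^{*}(y),y\rangle$ collapses the true correlations to $\text{Corr}(\mathbb{P},\mathbb{Q})=\int\langle T^{*}(y),y\rangle d\mathbb{Q}(y)$. Subtracting this from $\text{Corr}(\mathbb{P},\mathbb{Q}\mid\psi^{\mathcal{X}},\overline{\psi^{\mathcal{Y}}};\lambda)$, and using the same pushforward to rewrite $\int\psi^{\mathcal{X}}d\mathbb{P}$ as $\int\psi^{\mathcal{X}}(T^{*}(y))d\mathbb{Q}(y)$, the difference $\Delta$ reduces to the sum of $\int\langle T^{\mathcal{Y}}(y)-T^{*}(y),y\rangle d\mathbb{Q}(y)$, $\int[\psi^{\mathcal{X}}(T^{*}(y))-\psi^{\mathcal{X}}(T^{\mathcal{Y}}(y))]d\mathbb{Q}(y)$, and the regularizer $\frac{\lambda}{2}R_{\mathcal{Y}}$.

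The next step is to apply the $\mathcal{B}^{\mathcal{X}}$-smoothness upper bound to the second integrand, expanding it into a linear plus a quadratic-in-$\|T^{*}-T^{\mathcal{Y}}\|$ term. The linear piece combines with the first integral (the $y$-factors telescope) into the single inner product $\int\langle \nabla\psi^{\mathcal{X}}(T^{\mathcal{Y}}(y))-y,\, T^{*}(y)-T^{\mathcal{Y}}(y)\rangle d\mathbb{Q}(y)$, which I bound by Cauchy-Schwarz in $\mathcal{L}^{2}(\mathbb{Q})$. Setting $a=\|\nabla\psi^{\mathcal{X}}\circ T^{\mathcal{Y}}-\mathrm{id}\|_{\mathbb{Q}}$ and $b=\|T^{\mathcal{Y}}-T^{*}\|_{\mathbb{Q}}$, and noting that $R_{\mathcal{Y}}=a^{2}$ by construction, this produces the clean quadratic estimate $\Delta\leq ab+\frac{\mathcal{B}^{\mathcal{X}}}{2}b^{2}+\frac{\lambda}{2}a^{2}$.

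The remaining task is to control $a$ and $b$ by the approximation errors. The hypothesis on $\overline{\psi^{\mathcal{Y}}}$ gives $b\leq\sqrt{\epsilon_{\mathcal{Y}}}$ verbatim. For $a$, which I expect to be the main obstacle, I would exploit the inversion identity $\nabla\psi^{*}\circ T^{*}=\mathrm{id}$ from \eqref{inverse-mapping} to rewrite $a=\|\nabla\psi^{\mathcal{X}}\circ T^{\mathcal{Y}}-\nabla\psi^{*}\circ T^{*}\|_{\mathbb{Q}}$ and split by the triangle inequality through the intermediate $\nabla\psi^{\mathcal{X}}\circ T^{*}$. Lipschitz continuity of $\nabla\psi^{\mathcal{X}}$ (from $\mathcal{B}^{\mathcal{X}}$-smoothness) controls one piece by $\mathcal{B}^{\mathcal{X}}b\leq\mathcal{B}^{\mathcal{X}}\sqrt{\epsilon_{\mathcal{Y}}}$, while the pushforward $T^{*}\circ\mathbb{Q}=\mathbb{P}$ identifies the other piece with $\|\nabla\psi^{\mathcal{X}}-\nabla\psi^{*}\|_{\mathbb{P}}\leq\sqrt{\epsilon_{\mathcal{X}}}$, yielding $a\leq\mathcal{B}^{\mathcal{X}}\sqrt{\epsilon_{\mathcal{Y}}}+\sqrt{\epsilon_{\mathcal{X}}}$. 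Substituting these bounds into the inequality on $\Delta$ reproduces \eqref{thm2-corr-approx-bound} term by term; the only genuinely non-routine manoeuvre is the pushforward that converts an $\mathcal{L}^{2}(\mathbb{Q})$ defect in $y$-space into an $\mathcal{L}^{2}(\mathbb{P})$ defect covered by hypothesis (1), and the insertion of the exact identity $\nabla\psi^{*}\circ T^{*}=\mathrm{id}$ that makes this conversion possible.
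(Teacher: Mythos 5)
Your proposal is correct and follows essentially the same route as the paper's proof: reduce to the candidate pair $(\psi^{\mathcal{X}},\overline{\psi^{\mathcal{Y}}})$ via minimality, change variables with $\nabla\overline{\psi^{*}}\circ\mathbb{Q}=\mathbb{P}$, apply the $\mathcal{B}^{\mathcal{X}}$-smoothness upper bound, arrive at the quadratic estimate $\frac{\lambda}{2}a^{2}+ab+\frac{\mathcal{B}^{\mathcal{X}}}{2}b^{2}$, and bound $a\leq\mathcal{B}^{\mathcal{X}}\sqrt{\epsilon_{\mathcal{Y}}}+\sqrt{\epsilon_{\mathcal{X}}}$ by the triangle inequality through $\nabla\psi^{\mathcal{X}}\circ\nabla\overline{\psi^{*}}$ together with the identity $\nabla\psi^{*}\circ\nabla\overline{\psi^{*}}=\mathrm{id}$. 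Your use of Cauchy--Schwarz in place of the paper's complete-the-square-plus-triangle-inequality step is only a cosmetic difference yielding the same bound.
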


By combining Theorem \ref{thm-main2} with Theorem \ref{thm-main}, we conclude that solutions $\psi^{\dagger},\overline{\psi^{\ddagger}}$ of \eqref{our-algorithmn-general} push $\mathbb{P}$ and $\mathbb{Q}$ to be $O(\epsilon_{\mathcal{X}}+\epsilon_{\mathcal{Y}})$-close to $\mathbb{Q}$ and $\mathbb{P}$ respectively. In practice, it is reasonable to use input-convex neural networks as classes of functions $\Psi_{\mathcal{X}}, \overline{\Psi_{\mathcal{Y}}}$. Fully-connected ICNNs satisfy universal approximation property \cite{chen2018optimal}.

In Appendix \ref{sec-theory-latent}, we prove that our method can be applied to the latent space scenario. Theorem \ref{thm-latent} states that the distance between the target and generated (latent space generative map combined with the decoder) distributions can upper bounded by the quality of the latent fit and the reconstruction loss of the auto-encoder. In Appendix \ref{sec-non-smooth}, we prove Theorem \ref{thm-de-smoothing}, which demonstrates how our method can be applied to non-continuous distributions $\mathbb{P}$ and $\mathbb{Q}$.

\section{Experiments}
\label{sec-experiments}
In this section, we experimentally evaluate the proposed model. In Subsection \ref{sect:gauss}, we apply our method to estimate optimal transport maps in the Gaussian setting. In Subsection \ref{sect:latent}, we consider latent space mass transport. In Subsection \ref{sect:cgan}, we experiment with image-to-image style translation. In Appendix \ref{sec-exp2}, we provide training details and \textbf{additional experiments} on color transfer, domain adaptation and toy examples. The architectures of input convex networks that we use (Dense/Conv ICNNs) are described in Appendix \ref{sec-icnn}. The provided results are not intended to represent the state-of-the-art for any particular task -- the goal is to show the feasibility of our approach and architectures.
\subsection{Optimal Transport between Gaussians}
\label{sect:gauss}
We test our method in the Gaussian setting $\mathbb{P},\mathbb{Q}=\mathcal{N}(\mu_{\mathbb{P}},\Sigma_{\mathbb{P}}),\mathcal{N}(\mu_{\mathbb{Q}},\Sigma_{\mathbb{Q}})$. It is one of the few setups with the ground truth OT mapping existing in a closed form. We compare our method [W2GN] with quadratic regularization approach by \cite{seguy2017large} [LSOT] and minimax approaches by \cite{taghvaei20192} [MM-1] and by \cite{makkuva2019optimal} [MM-2].

To assess the quality of the recovered transport map $\nabla\psi^{\dagger}$, we consider \textbf{unexplained variance percentage}: ${\mathcal{L}^{2}\mbox{-UVP}(\nabla\psi^{\dagger})=100\cdot\big[\|\nabla\psi^{\dagger}-\nabla\psi^{*}\|_{\mathbb{P}}^{2}/\mbox{\small Var}(\mathbb{Q})\big]\%}$. Here  $\nabla\psi^{*}$ is the optimal transport map. For values $\approx 0\%$, $\nabla\psi^{\dagger}$ is a good approximation of OT map. For values $\geq 100\%$, map $\nabla\psi^{\dagger}$ is nearly useless. Indeed, a trivial benchmark $\nabla\psi^{0}(x)=\mathbb{E}_{\mathbb{Q}}[y]$ provides $\mathcal{L}^{2}\mbox{-UVP}(\nabla\psi^{0})=100\%$.

As it is seen from Table 1, LSOT leads to high error which grows drastically with the dimension. W2GN, MM-1 and MM-2 approaches perform nearly equally in metrics. It is expected since they both optimize analogous objectives. These methods compute optimal transport maps with low error ($\mathcal{L}^{2}$-UVP<$3\%$ even in $\mathbb{R}^{4096}$). However, as it is seen from the convergence plot in Figure \ref{fig:convergence-4096}, our approach \textbf{converges several times faster}: it naturally follows from the fact that MM approaches contain an inner optimization cycle. We discuss the experiment in more detail in Appendix \ref{sec-comparison}.
\begin{table}[!h]
	\begin{minipage}{0.56\linewidth}
		\centering\vspace{2.7mm}
		\scriptsize\begin{tabular}{|c|@{\hskip1.9pt}c@{\hskip1.9pt}|@{\hskip1.9pt}c@{\hskip1.9pt}|@{\hskip1.9pt}c@{\hskip1.9pt}|@{\hskip1.9pt}c@{\hskip1.9pt}|@{\hskip1.9pt}c@{\hskip1.9pt}|@{\hskip1.9pt}c@{\hskip1.9pt}|@{\hskip1.9pt}c@{\hskip1.9pt}|@{\hskip1.9pt}c@{\hskip1.9pt}|@{\hskip1.9pt}c@{\hskip1.9pt}|@{\hskip1.9pt}c@{\hskip1.9pt}|@{\hskip1.9pt}c@{\hskip1.9pt}|@{\hskip1.9pt}c@{\hskip1.9pt}|}
    \toprule
    \textit{Dim} &  \textit{2}    &  \textit{4}    &  \textit{8}    &  \textit{16}   &  \textit{32}   &  \textit{64}   &  \textit{128}  &  \textit{256}  &  \textit{512} &  \textit{1024} &  \textit{2048} &  \textit{4096} \\
    \midrule
    \textbf{LSOT} &   <$1$ &   $3.7$ &   $7.5$ &  $14.3$ &  $23$ &  $34.7$ &  \color{red}{$46.9$} &  \multicolumn{5}{c|}{\color{red}{>$50$}} \\
    \midrule
    \textbf{MM-1}   &   <$1$ &   <$1$ &   <$1$ &   <$1$ &   <$1$ &   $1.2$ &   $1.4$ &   $1.3$ &   $1.5$ &   $1.6$ &   $1.8$ &   $2.7$ \\
    \midrule
    \textbf{MM-2}   &   <$1$ &   <$1$ &   <$1$ &   <$1$ &   <$1$ &   <$1$ &   $1$ &   $1.1$ &   $1.2$ &   $1.3$ &   $1.5$ &   $2.1$ \\
    \midrule
    \textbf{W2GN} &   <$1$ &   <$1$ &   <$1$ &   <$1$ &   <$1$ &   <$1$ &   $1$ &   $1.1$ &   $1.3$ &   $1.3$ &   $1.8$ &   $1.5$ \\
    \bottomrule
    \end{tabular}
    \vspace{-2mm}\caption{Comparison of $\mathcal{L}^{2}$-UVP (\%) for LSOT, MM-1, MM-2 and W2GN (ours) methods in $D=2,4,\dots,2^{12}$.}
	\label{table:l2uvp}
	\end{minipage}\hspace{4mm}
	\begin{minipage}{0.41\linewidth}
		\centering
		\includegraphics[width=0.9\textwidth]{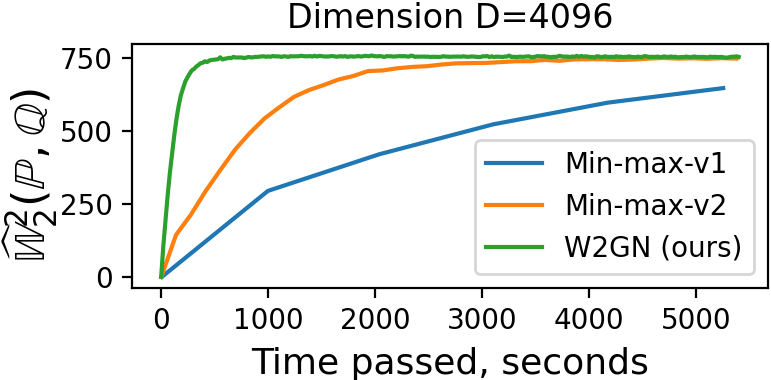}
		\vspace{-1.5mm}\captionof{figure}{Comparison of convergence of W2GN (ours), MM-1, MM-2 approaches.}
		\label{fig:convergence-4096}
	\end{minipage}
\end{table}

%  This is NOT controversal to high-dimensional results of Seguy et.al. (2017): e.g. provided high-dimensional MNIST experiment ($D=28\times28$) in LSOT is actually a much lower dimensional result since dataset is supported on \textit{linear} sub-manifold (hyper-plane).

% To compare with MM, we include much higher dimensions (up to 4096).
% In addition to EVR, we plot the convergence of $\widehat{\mathbb{W}}_{2}^{2}(\mathbb{P},\mathbb{Q})$ [obtained from $-\mbox{Corr}(\mathbb{P},\mathbb{Q})$, see Eq. (5) of Section 3] for $D=64,256,1024,4096$. We also report the \textit{relative error} for computed OT \textit{cost}: $\mbox{REC}=|\mathbb{W}_{2}^{2}(\mathbb{P},\mathbb{Q})-\widehat{\mathbb{W}}_{2}^{2}(\mathbb{P},\mathbb{Q})|/\mathbb{W}_{2}^{2}(\mathbb{P},\mathbb{Q})$.

\subsection{Latent Space Optimal Transport} \label{sect:latent}
\begin{wraptable}{r}{0.35\linewidth}  %for the scenario shown in Figure \ref{fig:latent-ot}
\vspace{-4mm}\begin{tabular}{|c|c|c|}
\hline
Method & FID  \\ \hline
AE: $Dec(Enc(X))$ & 7.5    \\ \hline
AE Raw Decode: $Dec(Z)$ & 31.81    \\
\hline
\textbf{W2GN+AE:} $Dec(g^{\dagger}(Z))$ & 17.21   \\
\hline
WGAN-QC : $Gen(Z)$ & $14.41$    \\
\hline
\end{tabular}
\caption{FID scores for $64\times 64$ generated images.}
\label{table-fid-celeba}\vspace{-4mm}
% \cite{arjovsky2017wasserstein}
\end{wraptable} We test our algorithm on \textbf{CelebA} image generation ($64\times 64$). First, we construct the latent space distribution by using non-variational convolutional auto-encoder to encode the images to $128$-dimensional latent vectors. Next, we use a pair of DenseICNNs to fit a cyclically monotone mapping to transform standard normal noise into the latent space distribution.

In Figure \ref{fig:iccn-latent-ot-celeba}, we present images generated directly by sampling from standard normal noise before (1st row) and after (2nd row) applying out transport map. While our generative mapping does not perform significant changes, its effect is seen visually as well as confirmed by improvement of Frechet Inception Distance (FID, see \cite{heusel2017gans}), see Table \ref{table-fid-celeba}. For comparison, we also provide the score of a recent Wasserstein GAN by \cite{liu2019wasserstein}. In Appendix \ref{sec-exp2-latent}, we provide additional examples and the visualization of the latent space.
\begin{figure}[h!]
    \includegraphics[width=\linewidth]{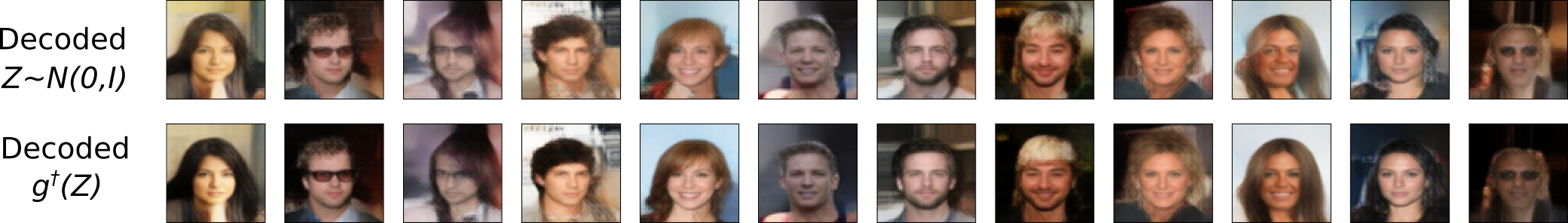}
    \caption{Images decoded from standard Gaussian latent noise (1st row) and decoded from the same noise transferred by our cycle monotone map (2nd row).}
    \label{fig:iccn-latent-ot-celeba}
    \vspace{-7mm}
\end{figure}
% (since the latent distribution is standardized and can be already thought as being close to Gaussian)

\subsection{Image-to-Image Style Translation} \label{sect:cgan}

\vspace{-2mm}In the problem of unpaired style transfer, the learner gets two image datasets, each with its own attributes, e.g. each dataset consists of landscapes related to a particular season. The goal is to fit a mapping capable of \textbf{transferring attributes} of one dataset to the other one, e.g. changing a winter landscape to a corresponding summer landscape.

The principal model for solving this problem is \textbf{CycleGAN} by \cite{zhu2017unpaired}. It uses 4 networks and optimizes a minimax objective to train the model. Our method uses \textbf{2 networks}, and has non-minimax objective.

We experiment with ConvICNN potentials on publicly availaible \textbf{Winter2Summer} and \textbf{Photo2Cezanne} datasets containing $256\times 256$ pixel images. We train our model on mini-batches of randomly cropped ${128\times 128}$ pixel RGB image parts. The results on Winter2Summer and Photo2Cezanne datasets applied to random $128\times 128$ crops are shown in Figure \ref{fig:style-transfer-all}.

\begin{figure}[h!]
     \begin{subfigure}[b]{0.57\columnwidth}
    \includegraphics[width=0.9\linewidth]{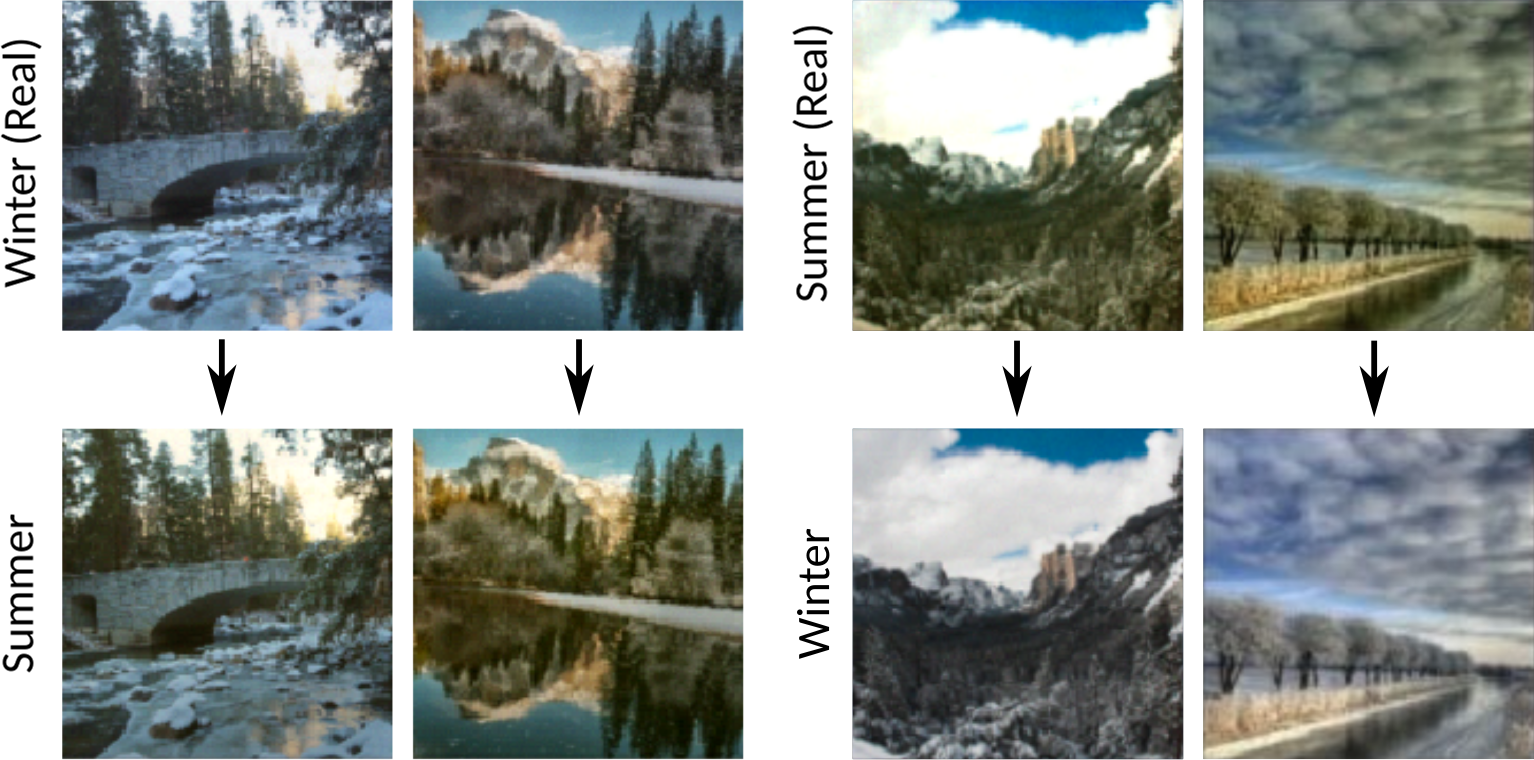}
    \caption{Winter2Summer dataset results.}
    \label{fig:winter2summer}
     \end{subfigure}
    \begin{subfigure}[b]{0.43\columnwidth}
    \includegraphics[width=0.93\linewidth]{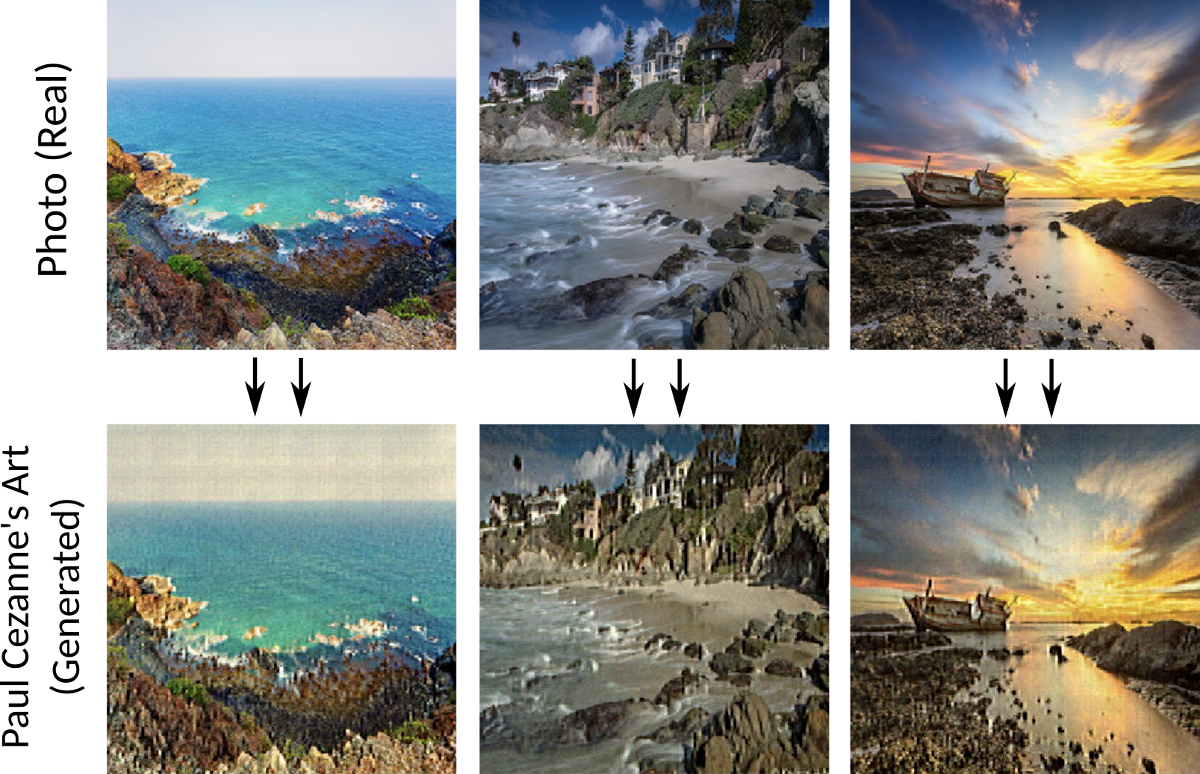}
    \caption{Photo2Cezanne dataset results.}
    \label{fig:photocezanne}
     \end{subfigure}
     \vspace{-3mm}
    \caption{Results of image-to-image style transfer by ConvICNN, $128\times 128$ pixel images.}
    \label{fig:style-transfer-all}
\end{figure}
Our generative model fits a cycle monotone mapping. However, the desired style transfer may not be cycle monotone. Thus, our model may transfer only some of the required attributes. For example, for winter-to-summer transfer our model learned to colorize trees to green. Yet the model experiences problems with replacing snow masses with grass.

As \cite{seguy2017large} noted, OT is permutation invariant. It does not take into account the relation between dimensions, e.g. neighbouring pixels or channels of one pixel. Thus, OT may struggle to fit the optimal generative mapping via convolutional architectures (designed to preserve the local structure of the image). Figures \ref{fig:winter2summer} and \ref{fig:photocezanne} demonstrate highlights of our model. Yet we provide examples when the model does not perform well in Appendix \ref{sec-exp-image-to-image-2}.

To fix the above mentioned issue, one may consider OT for the quadratic cost defined on the Gaussian pyramid of an image \cite{burt1983laplacian} or, similar to perceptual losses used for super-resolution \cite{johnson2016perceptual}, consider perceptual quadratic cost. This statement serves as the challenge for our further research.\vspace{-1mm}

\section{Conclusion}

In this paper, we developed an end-to-end algorithm with a non-minimax objective for training cyclically monotone generative mappings, i.e. optimal transport mappings for the quadratic cost. Additionally, we established theoretical justification for our method from the approximation point of view. The results of computational experiments confirm the potential of the algorithm in various practical problems: latent space mass transport, image-to-image color/style transfer, domain adaptation.

\bibliography{references}
\bibliographystyle{iclr2021_conference}
%%%%%%%%%%%%%%%%%%%%%%%%%%%%%%%%%%%%%%%%%%%%%%%%%%%%%%%%%%%%%%%%%%%%%%%%%%%%%%%
%%%%%%%%%%%%%%%%%%%%%%%%%%%%%%%%%%%%%%%%%%%%%%%%%%%%%%%%%%%%%%%%%%%%%%%%%%%%%%%
%%%%%Appendix%%%%%%%%%
\clearpage

\begin{appendices}

\section{Proofs}
\label{sec-appendix-theory}
In Subsection \ref{sec-proof-preliminaries}, we provide important additional properties of Wasserstein-2 distance and related $\mathcal{L}^{2}$-spaces required to prove our main theoretical results.

In Subsection \ref{sec-proof}, we prove our main Theorems \ref{thm-main} and \ref{thm-main2}. Additionally, we show how our proofs can be translated to the simpler case (Theorem \ref{thm-main-easy}), i.e. optimizing correlations with a single potential \eqref{base-w2-optimization} by using the basic approach of \cite{taghvaei20192}.

In Subsection \ref{sec-theory-latent}, we justify the pipeline of Figure \ref{fig:latent-ot-pipeline}. To do this, we prove a theorem that makes it possible to estimate the quality of the latent space generative mapping combined with the decoding part of the encoder.

In Subsection \ref{sec-non-smooth}, we prove a useful fact which makes it possible to apply our method to distributions which do not have a density.

\subsection{Properties of Wasserstein-2 Metric and Relation to $\mathcal{L}^{2}$ Spaces}
\label{sec-proof-preliminaries}

To prove our results, we need to introduce \textbf{Kantorovich}'s formulation of Optimal Transport \cite{kantorovitch1958translocation} which extends Monge's formulation \eqref{ot-cost}. For a given transport cost ${c:\mathcal{X}\times \mathcal{Y}\rightarrow \mathbb{R}}$ and probability distributions $\mathbb{P}$ and $\mathbb{Q}$ on $\mathcal{X}$ and $\mathcal{Y}$ respectively, we define 
\begin{equation}\text{Cost}(\mathbb{P},
\mathbb{Q})=\min_{\mu\in\Pi(\mathbb{P},\mathbb{Q})}\int_{\mathcal{X}\times\mathcal{Y}}c(x,y)d\mu(x,y),
\label{ot-cost-kantorovich}
\end{equation}
where $\Pi(\mathbb{P},\mathbb{Q})$ is the set of all probability measures on $\mathcal{X}\times\mathcal{Y}$ whose marginals are $\mathbb{P}$ and $\mathbb{Q}$ respectively (\textbf{transport plans}). If the optimal transport solution exists in the form of mapping ${g^{*}:\mathcal{X}\rightarrow\mathcal{Y}}$ minimizing \eqref{ot-cost}, then the optimal transport plan in \eqref{ot-cost-kantorovich} is given by ${\mu^{*}=[\text{id},g^{*}]\circ \mathbb{P}}$. Otherwise, formulation \eqref{ot-cost-kantorovich} can be viewed as a relaxation of \eqref{ot-cost}.

For quadratic cost ${c(x,y)=\frac{1}{2}\|x-y\|^{2}}$, the root of \eqref{ot-cost-kantorovich} defines Wasserstein-2 distance ($\mathbb{W}_{2}$), a \textbf{metric} on the space of probability distributions. In particular, it satisfies the \textbf{triangle inequality}, i.e. for every triplets of probability distributions $\mathbb{P}_{1},\mathbb{P}_{2},\mathbb{P}_{3}$ on $\mathcal{X}\subset \mathbb{R}^{D}$ we have
\begin{equation}
\mathbb{W}_{2}(\mathbb{P}_{1},\mathbb{P}_{3})\leq \mathbb{W}_{2}(\mathbb{P}_{1},\mathbb{P}_{2})+\mathbb{W}_{2}(\mathbb{P}_{2},\mathbb{P}_{3}).
\label{triangle-inequality-w2}
\end{equation}

We will also need the following lemma.

\begin{lemma}[Lipschitz property of Wasserstein-2 distance]
Let $\mathbb{P},\mathbb{P}'$ be two probability distributions with finite second moments on ${\mathcal{X}_{1}\subset\mathbb{R}^{D_{1}}}$. Let $T:\mathcal{X}_{1}\rightarrow\mathcal{X}_{2}\subset \mathbb{R}^{D_{2}}$ be a measurable mapping with Lipschitz constant bounded by $L$. Then the following inequality holds true:
\begin{equation}\mathbb{W}_{2}(T\circ \mathbb{P},T\circ \mathbb{P}')\leq L\cdot \mathbb{W}_{2}(\mathbb{P}, \mathbb{P}'),
\label{lipschitz-property}
\end{equation}
i.e. the distribution distance between $\mathbb{P},\mathbb{P}'$ mapped by $T$ does not exceed the initial distribution distance multiplied by Lipschitz constant $L$.
\label{lipschitz-property-lemma}
\end{lemma}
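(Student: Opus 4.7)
The plan is to construct a suboptimal transport plan between $T\circ\mathbb{P}$ and $T\circ\mathbb{P}'$ by pushing forward an optimal plan between $\mathbb{P}$ and $\mathbb{P}'$ through the joint map $(T,T)$, and then use the Lipschitz bound to compare costs. Concretely, I would work in Kantorovich's formulation (equation \eqref{ot-cost-kantorovich}) and use that, since $\mathbb{P}$ and $\mathbb{P}'$ have finite second moments, there exists an optimal plan $\mu^{*}\in\Pi(\mathbb{P},\mathbb{P}')$ attaining
\[
\mathbb{W}_{2}^{2}(\mathbb{P},\mathbb{P}')=\int_{\mathcal{X}_{1}\times\mathcal{X}_{1}}\tfrac{1}{2}\|x-x'\|^{2}\,d\mu^{*}(x,x').
\]

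First I would define $\nu=(T,T)_{\#}\mu^{*}$, the pushforward of $\mu^{*}$ under $(x,x')\mapsto(T(x),T(x'))$, and verify that its marginals are $T\circ\mathbb{P}$ and $T\circ\mathbb{P}'$ respectively, so that $\nu\in\Pi(T\circ\mathbb{P},T\circ\mathbb{P}')$. I would also note that the $L$-Lipschitz assumption gives $\|T(x)\|\le\|T(0)\|+L\|x\|$, so both pushforward measures have finite second moments and $\mathbb{W}_{2}(T\circ\mathbb{P},T\circ\mathbb{P}')$ is well-defined. Then, since $\nu$ is only a candidate plan (not necessarily optimal), I would apply the change-of-variables formula and the Lipschitz bound on $T$ to obtain
\[
\mathbb{W}_{2}^{2}(T\circ\mathbb{P},T\circ\mathbb{P}')\le\int\tfrac{1}{2}\|y-y'\|^{2}\,d\nu(y,y')=\int\tfrac{1}{2}\|T(x)-T(x')\|^{2}\,d\mu^{*}(x,x')\le L^{2}\cdot\mathbb{W}_{2}^{2}(\mathbb{P},\mathbb{P}'),
\]
and conclude by taking square roots.

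There is no real obstacle here; the only subtleties are (i) knowing that an optimal Kantorovich plan exists under the finite-second-moment assumption, which is a standard consequence of the lower semicontinuity of the quadratic cost and tightness of $\Pi(\mathbb{P},\mathbb{P}')$, and (ii) checking that the pushforward really produces a valid element of $\Pi(T\circ\mathbb{P},T\circ\mathbb{P}')$, which is immediate from how pushforwards compose with coordinate projections. Everything else is a one-line application of the Lipschitz inequality inside the integral, so the lemma reduces to the two display equations above.
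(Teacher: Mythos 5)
Your proposal is correct and follows essentially the same route as the paper's proof: push the optimal Kantorovich plan between $\mathbb{P}$ and $\mathbb{P}'$ forward component-wise through $T$, observe that the result is a (possibly suboptimal) plan in $\Pi(T\circ\mathbb{P},T\circ\mathbb{P}')$, and bound its cost using the Lipschitz property inside the integral. The additional remarks on existence of the optimal plan and finiteness of second moments of the pushforwards are fine but not needed beyond what the paper already assumes.
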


\begin{proof}
Let $\mu^{*}\in \Pi(\mathbb{P}_{1},\mathbb{P}_{2})$  be the optimal transport plan between $\mathbb{P}_{1}$ and $\mathbb{P}_{2}$. Consider the distribution on $\mathcal{X}_{2}\times \mathcal{X}_{2}$ given by $\mu=T\circ \mu^{*}$, where mapping $T$ is applied component-wise. The left and the right marginals of $\mu$ are equal to $T\circ\mathbb{P}_{1}$ and $T\circ\mathbb{P}_{2}$ respectively. Thus, $\mu\in \Pi(T\circ \mathbb{P}_{1},T\circ \mathbb{P}_{2})$ is a transport plan between $T\circ \mathbb{P}_{1}$ and $T\circ \mathbb{P}_{2}$. The cost of $\mu$ is not smaller than the optimal cost, i.e.
\begin{equation}
\mathbb{W}_{2}^{2}(T\circ \mathbb{P}_{1},T\circ\mathbb{P}_{2})\leq \int_{\mathcal{X}_{2}\times\mathcal{X}_{2}}\frac{\|x-x'\|^{2}}{2}d\mu(x,x').
\label{optimal-smaller}
\end{equation}
Next, we use the Lipschitz property of $T$ and derive
\begin{eqnarray}
\int_{\mathcal{X}_{2}\times\mathcal{X}_{2}}\frac{\|x-x'\|^{2}}{2}d\mu(x,x')=\big[\mu=T\circ\mu^{*}\big]= \int_{\mathcal{X}_{1}\times\mathcal{X}_{1}}\frac{\|T(x)-T(x')\|^{2}}{2}d\mu^{*}(x,x')\leq
\nonumber
\\
\int_{\mathcal{X}_{1}\times\mathcal{X}_{1}}\frac{L^{2}\|x-x'\|^{2}}{2}d\mu^{*}(x,x')=L^{2}\int_{\mathcal{X}_{1}\times\mathcal{X}_{1}}\frac{\|x-x'\|^{2}}{2}d\mu^{*}(x,x')=L^{2}\cdot \mathbb{W}_{2}^{2}(\mathbb{P}_{1},\mathbb{P}_{2}).
\label{lipschitz-usage}
\end{eqnarray}
To finish the proof, we combine \eqref{optimal-smaller} with \eqref{lipschitz-usage} and obtain the desired inequality \eqref{lipschitz-property}.
\end{proof}

Throughout the rest of the paper, we use $\mathcal{L}^{2}(\mathcal{X}\rightarrow \mathbb{R}^{D},\mathbb{P})$ to denote the \textbf{Hilbert space} of functions ${f:\mathcal{X}\rightarrow\mathbb{R}^{D}}$ with integrable square w.r.t. probability measure $\mathbb{P}$. The corresponding inner product for $f_{1},f_{2}\in \mathcal{L}^{2}(\mathcal{X}\rightarrow \mathbb{R}^{D},\mathbb{P})$ is denoted by 
$$\langle f_{1},f_{2}\rangle_{\mathbb{P}}\stackrel{def}{=}\int_{\mathcal{X}}\langle f_{1}(x),f_{2}(x)\rangle d\mathbb{P}(x).$$
We use $\|\cdot\|_{\mathbb{P}}=\sqrt{\langle\cdot,\cdot\rangle_{\mathbb{P}}}$ to denote the corresponding norm induced by the inner product.

\begin{lemma}[$\mathcal{L}^{2}$ inequality for Wasserstein-2 distance]
Let $\mathbb{P}$ be a probability distribution on ${\mathcal{X}_{1}\subset\mathbb{R}^{D_{1}}}$. Let $T_{1},T_{2}\in\mathcal{L}^{2}(\mathcal{X}_{1}\rightarrow \mathbb{R}^{D_{2}},\mathbb{P})$. Then the following inequality holds true:
$$\frac{1}{2}\|T_{1}(x)-T_{2}(x)\|_{\mathbb{P}}^{2}\geq \mathbb{W}_{2}^{2}(T_{1}\circ\mathbb{P},T_{2}\circ\mathbb{P}).$$
\label{l2-property-lemma}
\end{lemma}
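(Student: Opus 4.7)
The plan is to construct an explicit transport plan between $T_1\circ\mathbb{P}$ and $T_2\circ\mathbb{P}$ whose cost is exactly $\tfrac12\|T_1-T_2\|_{\mathbb{P}}^2$, and then invoke the Kantorovich definition \eqref{ot-cost-kantorovich} of $\mathbb{W}_2^2$ as an infimum over all plans. This is the same strategy used in the previous lemma (Lipschitz property), and in fact the desired inequality is morally the ``identity-Lipschitz'' case of a coupling argument.

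First I would define the joint mapping $S:\mathcal{X}_1\to \mathbb{R}^{D_2}\times\mathbb{R}^{D_2}$ by $S(x)=\bigl(T_1(x),T_2(x)\bigr)$ and consider the pushforward measure $\mu = S\circ\mathbb{P}$ on $\mathbb{R}^{D_2}\times\mathbb{R}^{D_2}$. A direct check of marginals shows that the left marginal of $\mu$ is $T_1\circ\mathbb{P}$ and the right marginal is $T_2\circ\mathbb{P}$, so $\mu\in\Pi(T_1\circ\mathbb{P},T_2\circ\mathbb{P})$ is an admissible transport plan.

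Next I would compute the quadratic transport cost of $\mu$ via the change-of-variables formula for pushforward measures:
\begin{equation*}
\int_{\mathbb{R}^{D_2}\times\mathbb{R}^{D_2}}\frac{\|y-y'\|^{2}}{2}\,d\mu(y,y')
=\int_{\mathcal{X}_1}\frac{\|T_1(x)-T_2(x)\|^{2}}{2}\,d\mathbb{P}(x)
=\frac{1}{2}\|T_1-T_2\|_{\mathbb{P}}^{2}.
\end{equation*}
Since $T_1,T_2\in\mathcal{L}^2(\mathcal{X}_1\to\mathbb{R}^{D_2},\mathbb{P})$, the integrand is integrable, so this is finite and the plan is admissible in the Kantorovich problem.

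Finally, by the definition of $\mathbb{W}_2^2$ as the minimum of the quadratic cost over $\Pi(T_1\circ\mathbb{P},T_2\circ\mathbb{P})$, any admissible plan upper bounds the optimum; applying this to $\mu$ yields
\begin{equation*}
\mathbb{W}_2^2(T_1\circ\mathbb{P},T_2\circ\mathbb{P})\;\leq\;\frac{1}{2}\|T_1-T_2\|_{\mathbb{P}}^{2},
\end{equation*}
which is the claim. There is essentially no obstacle here: the only subtlety is checking that the marginals of $S\circ\mathbb{P}$ are indeed $T_i\circ\mathbb{P}$, which follows immediately because $\pi_i\circ S = T_i$ where $\pi_i$ is the coordinate projection, and pushforward commutes with composition. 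No regularity on $T_1,T_2$ beyond measurability and square-integrability is needed.
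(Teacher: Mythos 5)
Your proposal is correct and follows exactly the paper's argument: the paper's proof also takes the plan $\mu=[T_{1},T_{2}]\circ\mathbb{P}$ and bounds $\mathbb{W}_{2}^{2}$ by its cost, just stated more tersely. Your write-up simply makes the marginal check and the change-of-variables computation explicit.
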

\begin{proof}We define the transport plan $\mu=[T_{1},T_{2}]\circ \mathbb{P}$ between $T_{1}\circ \mathbb{P}$ and $T_{2}\circ\mathbb{P}$ and, similar to the previous Lemma \ref{lipschitz-property-lemma}, use the fact that its cost is not smaller than the optimal cost.\end{proof}

\subsection{Proofs of the Main Theoretical Results}
\label{sec-proof}
First, we prove our main Theorem \ref{thm-main}. Then we formulate and prove its analogue (Theorem \ref{thm-main-easy}) for the basic correlation optimization method \eqref{base-w2-optimization} with single convex potential. Next, we prove our main Theorem \ref{thm-main2}. At the end of the subsection, we discuss the constants appearing in theorems: strong convexity and smoothness parameters.
\begin{proof}[Proof of Theorem \ref{thm-main}]We split the proof into three subsequent parts.

\underline{\textbf{Part 1. Upper Bound on Correlations.}}

First, we establish a lower bound for regularized correlations $\text{Corr}\big(\mathbb{P},\mathbb{Q}\mid \psi^{\dagger},\overline{\psi^{\ddagger}}; \lambda\big)$ omitting the regularization term. \begin{eqnarray}
\int_{\mathcal{X}}\psi^{\dagger}(x)d\mathbb{P}(x)+\int_{\mathcal{Y}}\big[\langle y,\nabla\overline{\psi^{\ddagger}}(y)\rangle -\psi^{\dagger}\big(\nabla\overline{\psi^{\ddagger}}(y)\big)\big]d\mathbb{Q}(y)=
\nonumber
\\
\int_{\mathcal{Y}}\psi^{\dagger}\big(\nabla\overline{\psi^{*}}(y)\big)d\mathbb{Q}(y)+\int_{\mathcal{Y}}\big[\langle y,\nabla\overline{\psi^{\ddagger}}(y)\rangle -\psi^{\dagger}\big(\nabla\overline{\psi^{\ddagger}}(y)\big)\big]d\mathbb{Q}(y)=
\label{ubc-change-variables}
\\
\int_{\mathcal{Y}}\big[\psi^{\dagger}\big(\nabla\overline{\psi^{*}}(y)\big)-\psi^{\dagger}\big(\nabla\overline{\psi^{\ddagger}}(y)\big)\big]d\mathbb{Q}(y)+\int_{\mathcal{Y}}\big[\langle y,\nabla\overline{\psi^{\ddagger}}(y)\rangle\big]d\mathbb{Q}(y)\geq 
\nonumber
\\
\int_{\mathcal{Y}}\big[\langle\nabla\psi^{\dagger}\circ \nabla\overline{\psi^{\ddagger}}(y),\nabla\overline{\psi^{*}}(y)-\nabla\overline{\psi^{\ddagger}}(y)\rangle+\frac{\beta^{\dagger}}{2}\|\nabla\overline{\psi^{*}}(y)-\nabla\overline{\psi^{\ddagger}}(y)\|^{2}]d\mathbb{Q}(y)+
\label{ubc-strong-convexity}
\\
\int_{\mathcal{Y}}\langle y,\nabla\overline{\psi^{\ddagger}}(y)\rangle d\mathbb{Q}(y)+\bigg[\underbrace{\int_{\mathcal{Y}}\langle y,\nabla\overline{\psi^{*}}(y)\rangle d\mathbb{Q}(y)}_{\text{Corr}(\mathbb{P},\mathbb{Q})}-\underbrace{\int_{\mathcal{Y}}\langle y,\nabla\overline{\psi^{*}}(x)\rangle d\mathbb{Q}(y)}_{\text{Corr}(\mathbb{P},\mathbb{Q})}\bigg]=
\label{ubc-add-zero}
\\
\langle\nabla\psi^{\dagger}\circ \nabla\overline{\psi^{\ddagger}},\nabla\overline{\psi^{*}}-\nabla\overline{\psi^{\ddagger}}\rangle_{\mathbb{Q}}+\frac{\beta^{\dagger}}{2}\|\nabla\overline{\psi^{*}}-\nabla\overline{\psi^{\ddagger}}\|^{2}_{\mathbb{Q}}-\langle \text{id}_{\mathcal{Y}},\nabla\overline{\psi^{*}}-\nabla\overline{\psi^{\ddagger}}\rangle_{\mathbb{Q}}+\text{Corr}(\mathbb{P},\mathbb{Q})=
\label{ubc-notation}
\\
\langle\nabla\psi^{\dagger}\circ \nabla\overline{\psi^{\ddagger}}-\text{id}_{\mathcal{Y}},\nabla\overline{\psi^{*}}-\nabla\overline{\psi^{\ddagger}}\rangle_{\mathbb{Q}}+\frac{\beta^{\dagger}}{2}\|\nabla\overline{\psi^{*}}-\nabla\overline{\psi^{\ddagger}}\|^{2}_{\mathbb{Q}}+\text{Corr}(\mathbb{P},\mathbb{Q})=
\nonumber
\\
\frac{1}{2\beta^{\dagger}}\|\nabla\psi^{\dagger}\circ \nabla\overline{\psi^{\ddagger}}-\text{id}_{\mathcal{Y}}\|_{\mathbb{Q}}+\langle\nabla\psi^{\dagger}\circ \nabla\overline{\psi^{\ddagger}}-\text{id}_{\mathcal{Y}},\nabla\overline{\psi^{*}}-\nabla\overline{\psi^{\ddagger}}\rangle_{\mathbb{Q}}+\frac{\beta^{\dagger}}{2}\|\nabla\overline{\psi^{*}}-\nabla\overline{\psi^{\ddagger}}\|^{2}_{\mathbb{Q}}+
\nonumber
\\
\text{Corr}(\mathbb{Q},\mathbb{P})-\frac{1}{2\beta^{\dagger}}\|\nabla\psi^{\dagger}\circ \nabla\overline{\psi^{\ddagger}}-\text{id}_{\mathcal{Y}}\|_{\mathbb{Q}}^{2}=
\nonumber
\\
\text{Corr}(\mathbb{Q},\mathbb{P})+
\nonumber
\\
\frac{1}{2}\norm{\frac{1}{\sqrt{\beta^{\dagger}}}\big[\nabla\psi^{\dagger}\circ \nabla\overline{\psi^{\ddagger}}-\text{id}_{\mathcal{Y}}\big]+\sqrt{\beta^{\dagger}}\big[\nabla\overline{\psi^{*}}-\nabla\overline{\psi^{\ddagger}}\big]}_{\mathbb{Q}}^{2}-\frac{1}{2\beta^{\dagger}}\|\nabla\psi^{\dagger}\circ \nabla\overline{\psi^{\ddagger}}-\text{id}_{\mathcal{Y}}\|^{2}_{\mathbb{Q}}.
\label{ubc-no-reg-bound}
\end{eqnarray}
In transition to line \eqref{ubc-change-variables}, we use change of variables formula $\mathbb{P}=\nabla\overline{\psi^{*}}\circ \mathbb{Q}$. In line \eqref{ubc-strong-convexity}, we use $\beta^{\dagger}$-strong convexity of function $\psi^{\dagger}$ and then add zero term in line \eqref{ubc-add-zero}. Next, for simplicity, we replace integral notation with $\mathcal{L}^{2}(\mathcal{Y}\rightarrow\mathbb{R}^{D},\mathbb{Q})$ notation starting from line \eqref{ubc-notation}.

We add the omitted regularization term back to \eqref{ubc-no-reg-bound} and obtain the following bound:
\begin{eqnarray}\text{Corr}\big(\mathbb{P},\mathbb{Q}\mid \psi^{\dagger},\overline{\psi^{\ddagger}}; \lambda\big)\geq \text{Corr}(\mathbb{Q},\mathbb{P})+\frac{1}{2}(\lambda-\frac{1}{\beta^{\dagger}})\cdot \|\nabla\psi^{\dagger}\circ \nabla\overline{\psi^{\ddagger}}-\text{id}_{\mathcal{Y}}\|^{2}_{\mathbb{Q}}+
\nonumber
\\
\frac{1}{2}\norm{\frac{1}{\sqrt{\beta^{\dagger}}}\big[\nabla\psi^{\dagger}\circ \nabla\overline{\psi^{\ddagger}}-\text{id}_{\mathcal{Y}}\big]+\sqrt{\beta^{\dagger}}\big[\nabla\overline{\psi^{*}}-\nabla\overline{\psi^{\ddagger}}\big]}_{\mathbb{Q}}^{2}.
\label{explicit-corr-upper-bound}
\end{eqnarray}
Since $\lambda>\frac{1}{\beta^{\dagger}}$, the obtained inequality proves that the true correlations $\text{Corr}(\mathbb{P},\mathbb{Q})$ are upper bounded by the regularized correlations $\text{Corr}\big(\mathbb{P},\mathbb{Q}\mid \psi^{\dagger},\overline{\psi^{\ddagger}}; \lambda\big)$. Note that if the optimal map $\nabla\psi^{\dagger}$ is ${\geq \beta^{\dagger}}$ strongly convex, the bound \eqref{explicit-corr-upper-bound} is tight. Indeed, it turns into equality when we substitute ${\nabla\overline{\psi^{\ddagger}}=(\nabla\psi^{\dagger})^{-1}=\nabla\overline{\psi^{*}}}$.

\underline{\textbf{Part 2. Inverse Generative Property.}}

We continue the derivations of part 1. Let $u=\nabla\psi^{\dagger}\circ \nabla\overline{\psi^{\ddagger}}-\text{id}_{\mathcal{Y}}$ and $v=\nabla\overline{\psi^{*}}-\nabla\overline{\psi^{\ddagger}}$. By matching \eqref{explicit-corr-upper-bound} with \eqref{thm-error-def}, we obtain
\begin{equation}
\epsilon\geq \frac{1}{2}(\lambda-\frac{1}{\beta^{\dagger}})\|u\|_{\mathbb{Q}}^{2}+\frac{1}{2}\norm{\frac{1}{\sqrt{\beta^{\dagger}}}u+\sqrt{\beta^{\dagger}}v}_{\mathbb{Q}}^{2}.
\label{renamed-bound}
\end{equation}
Now we derive an upper bound for $\|v\|_{\mathbb{Q}}^{2}$. For a fixed $u$ we have
$$\norm{\frac{1}{\sqrt{\beta^{\dagger}}}u+\sqrt{\beta^{\dagger}}v}_{\mathbb{Q}}^{2}\leq 2\epsilon - (\lambda-\frac{1}{\beta^{\dagger}})\|u\|_{\mathbb{Q}}^{2}.$$
Next, we apply the triangle inequality:
\begin{equation}
\|\sqrt{\beta^{\dagger}}v\|_{\mathbb{Q}}\leq \norm{\frac{1}{\sqrt{\beta^{\dagger}}}u+\sqrt{\beta^{\dagger}}v}_{\mathbb{Q}}+\|\frac{1}{\sqrt{\beta^{\dagger}}}u\|_{\mathbb{Q}}\leq \sqrt{2\epsilon - (\lambda-\frac{1}{\beta^{\dagger}})\|u\|_{\mathbb{Q}}^{2}}+\|\frac{1}{\sqrt{\beta^{\dagger}}}u\|_{\mathbb{Q}}.
\label{unfinished-upper-bound}
\end{equation}
The expression of the right-hand side of \eqref{unfinished-upper-bound} attains its maximal value $\sqrt{\frac{2\epsilon}{1-\frac{1}{\lambda\beta^{\dagger}}}}$ at $\|u\|_{\mathbb{Q}}=\sqrt{\frac{2\epsilon}{\lambda^{2}\beta^{\dagger}-\lambda}}$. We conclude that
$$\|\nabla\overline{\psi^{*}}-\nabla\overline{\psi^{\ddagger}}\|_{\mathbb{Q}}^{2}=\|v\|_{\mathbb{Q}}^{2}\leq \frac{2\epsilon}{\beta^{\dagger}-\frac{1}{\lambda}}.$$
Finally, we apply $\mathcal{L}^{2}$-inequality of  Lemma \ref{l2-property-lemma} to distribution $\mathbb{Q}$, mappings $\nabla\overline{\psi^{*}}$ and $\nabla\overline{\psi^{\ddagger}}$, and obtain $\mathbb{W}_{2}^{2}(\nabla\overline{\psi^{\ddagger}}\circ\mathbb{P},\mathbb{Q})\leq \frac{\epsilon}{\beta^{\dagger}-\frac{1}{\lambda}}$, i.e. the desired upper bound on the distance between the generated and target distribution.

\underline{\textbf{Part 3. Forward Generative Property.}}

We recall the bound \eqref{renamed-bound}. Since all the summands are positive, we 
derive
\begin{equation}
\|u\|_{\mathbb{Q}}^{2}\leq \frac{2\epsilon}{\lambda-\frac{1}{\beta^{\dagger}}}.
\label{u-upper-bound}
\end{equation}
We will use \eqref{u-upper-bound} to obtain an upper bound on $\|\nabla\psi^{*}-\nabla\psi^{\dagger}\|_{\mathbb{P}}$.

To begin with, we note that since $\psi^{\dagger}$ is $\beta^{\dagger}$-strongly convex, its convex conjugate $\overline{\psi^{\dagger}}$ is $\frac{1}{\beta^{\dagger}}$-smooth. Thus, gradient $\nabla\overline{\psi^{\dagger}}$ is $\frac{1}{\beta^{\dagger}}$-Lipschitz. We conclude that for all $x,x'\in\mathcal{X}$:
\begin{equation}\|\nabla\overline{\psi^{\dagger}}(x)-\nabla\overline{\psi^{\dagger}}(x')\|\leq \frac{1}{\beta^{\dagger}}\|x-x'\|.
\label{interm-lip}\end{equation}
We raise both parts of \eqref{interm-lip} into the square, substitute $x=\nabla\psi^{\dagger}\circ\nabla\overline{\psi^{\dagger}}(y)$ and $x'=\nabla\psi^{\dagger}\circ\nabla\overline{\psi^{\ddagger}}(y)$ and integrate over $\mathcal{Y}$ w.r.t $\mathbb{Q}$. The obtained inequality is as follows:
\begin{eqnarray}\int_{\mathcal{Y}}\|\nabla\overline{\psi^{\dagger}}(y)-\nabla\overline{\psi^{\ddagger}}(y)\|^{2}d\mathbb{Q}(y)\leq \frac{1}{(\beta^{\dagger})^{2}}\int_{\mathcal{Y}}\|\nabla\psi^{\dagger}\circ\nabla\overline{\psi^{\dagger}}(y)-\nabla\psi^{\dagger}\circ\nabla\overline{\psi^{\ddagger}}(y)\|^{2}d\mathbb{Q}(y)
\label{inv-lip}
\end{eqnarray}
Next, we derive
\begin{eqnarray}
\|\nabla\overline{\psi^{\dagger}}-\nabla\overline{\psi^{\ddagger}}\|_{\mathbb{Q}}^{2}=\int_{\mathcal{Y}}\|\nabla\overline{\psi^{\dagger}}(y)-\nabla\overline{\psi^{\ddagger}}(y)\|^{2}d\mathbb{Q}(y)\leq
\nonumber
\\
\int_{\mathcal{Y}}\frac{1}{(\beta^{\dagger})^{2}}\|\underbrace{\overbrace{\nabla\psi^{\dagger}\circ\nabla\overline{\psi^{\dagger}}}^{=y}(y)-\nabla\psi^{\dagger}\circ\nabla\overline{\psi^{\ddagger}}(y)}_{=-u(y)}\|^{2}d\mathbb{Q}(y)=\frac{\|u\|^{2}_{\mathbb{Q}}}{(\beta^{\dagger})^{2}}.
\label{fgp-from-reg}
\end{eqnarray}
In transition to line \eqref{fgp-from-reg}, we use the previously obtained inequality \eqref{inv-lip}. 

Next, we use the triangle inequality for $\|\cdot\|_{\mathbb{Q}}$ to bound
\begin{eqnarray}\|\nabla\overline{\psi^{\dagger}}-\nabla\overline{\psi^{*}}\|_{\mathbb{Q}}\leq \|\nabla\overline{\psi^{\dagger}}-\nabla\overline{\psi^{\ddagger}}\|_{\mathbb{Q}}+\|\underbrace{\nabla\overline{\psi^{\ddagger}}-\nabla\overline{\psi^{*}}}_{=v}\|_{\mathbb{Q}}\leq 
\nonumber
\\
\sqrt{\frac{2\epsilon}{\lambda-\frac{1}{\beta^{\dagger}}}}\cdot \frac{1}{\beta^{\dagger}}+\sqrt{\frac{2\epsilon}{\beta^{\dagger}-\frac{1}{\lambda}}}=\sqrt{\frac{2\epsilon}{\lambda-\frac{1}{\beta^{\dagger}}}}\cdot (\frac{1}{\beta^{\dagger}}+\sqrt{\frac{\lambda}{\beta^{\dagger}}})=\sqrt{\frac{2\epsilon}{\lambda\beta^{\dagger}-1}}\cdot (\frac{1}{\sqrt{\beta^{\dagger}}}+\sqrt{\lambda})
\label{fgp-triangle}
\end{eqnarray}
Next, we derive a lower bound for the left-hand side of \eqref{fgp-triangle} by using $\mathcal{B}^{\dagger}$-smoothness of $\psi^{\dagger}$. For all $x,x'\in\mathcal{X}$ we have
\begin{equation}
\|\nabla\psi^{\dagger}(x)-\nabla\psi^{\dagger}(x')\|\leq \mathcal{B}^{\dagger}\|x-x'\|.
\label{beta-smooth}
\end{equation}
We raise both parts of \eqref{beta-smooth} to the square, substitute $x=\nabla\overline{\psi^{\dagger}}(y)$ and $x'=\nabla\overline{\psi^{*}}(y)$, and integrate over $\mathcal{Y}$ w.r.t. $\mathbb{Q}$:
\begin{equation}
\int_{\mathcal{Y}}\|\nabla\psi^{\dagger}\circ\nabla\overline{\psi^{\dagger}}(y)-\nabla\psi^{\dagger}\circ\nabla\overline{\psi^{*}}(y)\|^{2}d\mathbb{Q}(y)\leq (\mathcal{B}^{\dagger})^{2} \int_{\mathcal{Y}}\|\nabla\overline{\psi^{\dagger}}(y)-\nabla\overline{\psi^{*}}(y)\|^{2}d\mathbb{Q}(y)
\label{Beta-lip-exp}
\end{equation}
Next, we use \eqref{Beta-lip-exp} to derive
\begin{eqnarray}
\|\nabla\overline{\psi^{\dagger}}-\nabla\overline{\psi^{*}}\|_{\mathbb{Q}}^{2}=\int_{\mathcal{Y}}\|\nabla\overline{\psi^{\dagger}}(y)-\nabla\overline{\psi^{*}}(y)\|^{2}d\mathbb{Q}(y)\geq
\nonumber
\\
\int_{\mathcal{Y}}\frac{1}{(\mathcal{B}^{\dagger})^{2}}\|\underbrace{\nabla\psi^{\dagger}\circ\nabla\overline{\psi^{\dagger}}(y)}_{=y}-\nabla\psi^{\dagger}\circ\nabla\overline{\psi^{*}}(y)\|^{2}d\mathbb{Q}(y)=
\nonumber
\\
\frac{1}{(\mathcal{B}^{\dagger})^{2}}\int_{\mathcal{Y}}\|\nabla\psi^{*}(x)-\nabla\psi^{\dagger}(x)\|^{2}d\mathbb{P}(x)\geq \frac{2}{(\mathcal{B}^{\dagger})^{2}}\mathbb{W}_{2}^{2}(\nabla\psi^{\dagger}\circ\mathbb{P}, \mathbb{Q})
\label{fgp-final}
\end{eqnarray}
In line \eqref{fgp-final}, we use the $\mathcal{L}^{2}$ property of Wasserstein-2 distance (Lemma \ref{l2-property-lemma}). We conclude that
$$\mathbb{W}_{2}^{2}(\nabla\psi^{\dagger}\circ\mathbb{P}, \mathbb{Q})\leq \frac{(\mathcal{B}^{\dagger})^{2}\cdot \epsilon}{\lambda\beta^{\dagger}-1}\cdot\big(\frac{1}{\sqrt{\beta^{\dagger}}}+\sqrt{\lambda}\big)^{2},$$
and finish the proof.\end{proof}

It is quite straightforward to formulate analogous result for the basic optimization method with single potential \eqref{base-w2-optimization}. We summarise the statement in the following

\begin{theorem}[Generative Property for  Approximators of Correlations]
Let $\mathbb{P}, \mathbb{Q}$ be two continuous probability distributions on $\mathcal{Y}=\mathcal{X}=\mathbb{R}^{D}$ with finite second moments. Let $\psi^{*}:\mathcal{Y}\rightarrow \mathbb{R}$ be the convex minimizer of $\text{\normalfont Corr}(\mathbb{P},\mathbb{Q}|\psi)$.

Let differentiable $\psi^{\dagger}$ is $\beta^{\dagger}$-strongly convex and $\mathcal{B}^{\dagger}$-smooth ($\mathcal{B}^{\dagger}\geq \beta^{\dagger}>0$) function $\psi^{\dagger}:\mathcal{X}\rightarrow \mathbb{R}$ satisfy
\begin{eqnarray}\text{\normalfont Corr}\big(\mathbb{P},\mathbb{Q}\mid \psi^{\dagger}\big)\leq \bigg[\int_{\mathcal{X}} \psi^{*}(x)d\mathbb{P}(x)+\int_{\mathcal{Y}} \overline{\psi^{*}}(y)d\mathbb{Q}(y)\bigg]+\epsilon=\text{\normalfont Corr}(\mathbb{P},\mathbb{Q})+\epsilon.
\end{eqnarray}
Then the following inequalities hold true:

\begin{enumerate}[leftmargin=*]
    \item \textbf{Forward Generative Property} (map $g^{\dagger}=\nabla\psi^{\dagger}$ pushes $\mathbb{P}$ to be $O(\epsilon)$-close to $\mathbb{Q}$)
    $$\mathbb{W}_{2}^{2}(g^{\dagger}\circ\mathbb{P}, \mathbb{Q})=\mathbb{W}_{2}^{2}(\nabla\psi^{\dagger}\circ\mathbb{P}, \mathbb{Q})\leq\mathcal{B}^{\dagger}\epsilon;$$
    \item \textbf{Inverse Generative Property} (map $(g^{\dagger})^{-1}=\nabla\overline{\psi^{\dagger}}=(\nabla\psi^{\dagger})^{-1}$ pushes $\mathbb{Q}$ to be $O(\epsilon)$-close to $\mathbb{P}$)
    $$\mathbb{W}_{2}^{2}\big((g^{\dagger})^{-1}\circ\mathbb{Q},\mathbb{P}\big)\leq \frac{\epsilon}{\beta^{\dagger}}.$$
\end{enumerate}

\label{thm-main-easy}\end{theorem}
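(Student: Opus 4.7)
The plan is to adapt the three-part structure of the proof of Theorem \ref{thm-main} to the simpler single-discriminator setting, which can be viewed as the specialization $\overline{\psi^{\ddagger}}\equiv\overline{\psi^{\dagger}}$ (and, informally, $\lambda\to\infty$). Observe that this specialization makes the cycle-consistency regularizer $R_{\mathcal{Y}}(\theta,\omega)$ vanish identically, because $\nabla\psi^{\dagger}\circ\nabla\overline{\psi^{\dagger}}=\mathrm{id}_{\mathcal{Y}}$ is the defining property of the Legendre--Fenchel conjugate. Thus the full objective reduces exactly to $\text{Corr}(\mathbb{P},\mathbb{Q}\mid\psi^{\dagger})$, and the whole argument becomes cleaner than in Theorem \ref{thm-main}.

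The core step is the analogue of Part~1. Expand the correlations using the duality identity $\overline{\psi^{\dagger}}(y)=\langle y,\nabla\overline{\psi^{\dagger}}(y)\rangle-\psi^{\dagger}(\nabla\overline{\psi^{\dagger}}(y))$, apply the change of variables $\mathbb{P}=\nabla\overline{\psi^{*}}\circ\mathbb{Q}$ to the first integral, and then invoke $\beta^{\dagger}$-strong convexity of $\psi^{\dagger}$ at the pair of points $\nabla\overline{\psi^{*}}(y)$ and $\nabla\overline{\psi^{\dagger}}(y)$. Using $\nabla\psi^{\dagger}\circ\nabla\overline{\psi^{\dagger}}(y)=y$, the linear terms telescope exactly as in lines \eqref{ubc-change-variables}--\eqref{ubc-add-zero} of the main proof, and (because there is no $u=\nabla\psi^{\dagger}\circ\nabla\overline{\psi^{\ddagger}}-\mathrm{id}$ slack term any more) we obtain the clean inequality
$$\text{Corr}(\mathbb{P},\mathbb{Q}\mid\psi^{\dagger})\ \geq\ \text{Corr}(\mathbb{P},\mathbb{Q})+\tfrac{\beta^{\dagger}}{2}\|\nabla\overline{\psi^{*}}-\nabla\overline{\psi^{\dagger}}\|_{\mathbb{Q}}^{2}.$$
Combined with the $\epsilon$-optimality hypothesis, this yields $\|\nabla\overline{\psi^{*}}-\nabla\overline{\psi^{\dagger}}\|_{\mathbb{Q}}^{2}\lesssim \epsilon/\beta^{\dagger}$, and applying Lemma~\ref{l2-property-lemma} with $T_{1}=\nabla\overline{\psi^{\dagger}}$, $T_{2}=\nabla\overline{\psi^{*}}$, together with $\nabla\overline{\psi^{*}}\circ\mathbb{Q}=\mathbb{P}$, delivers the Inverse Generative Property immediately.

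For the Forward Generative Property I would run the symmetric derivation. The key observation is that $\mathcal{B}^{\dagger}$-smoothness of $\psi^{\dagger}$ is Fenchel-dual to $\tfrac{1}{\mathcal{B}^{\dagger}}$-strong convexity of $\overline{\psi^{\dagger}}$. Swapping the roles of $\mathcal{X}$ and $\mathcal{Y}$ throughout Part~1 (that is, expand $\psi^{\dagger}$ via $\psi^{\dagger}(x)=\langle x,\nabla\psi^{\dagger}(x)\rangle-\overline{\psi^{\dagger}}(\nabla\psi^{\dagger}(x))$, change variables via $\mathbb{Q}=\nabla\psi^{*}\circ\mathbb{P}$, apply strong convexity of $\overline{\psi^{\dagger}}$ at $\nabla\psi^{*}(x),\nabla\psi^{\dagger}(x)$, and use $\nabla\overline{\psi^{\dagger}}\circ\nabla\psi^{\dagger}(x)=x$) produces the dual lower bound
$$\text{Corr}(\mathbb{P},\mathbb{Q}\mid\psi^{\dagger})\ \geq\ \text{Corr}(\mathbb{P},\mathbb{Q})+\tfrac{1}{2\mathcal{B}^{\dagger}}\|\nabla\psi^{*}-\nabla\psi^{\dagger}\|_{\mathbb{P}}^{2}.$$
A second application of Lemma~\ref{l2-property-lemma} (this time with $\mathbb{P}$ and the maps $\nabla\psi^{\dagger},\nabla\psi^{*}$) converts this into a $\mathbb{W}_{2}^{2}$-bound on $\nabla\psi^{\dagger}\circ\mathbb{P}$ versus $\mathbb{Q}$.

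The main obstacle is really bookkeeping: making sure that for each of the two generative properties I use the \emph{correct} side of the Fenchel-duality dictionary, since using the wrong side (e.g.\ transferring the Inverse bound across $\nabla\psi^{\dagger}$ via the Lipschitz lemma \ref{lipschitz-property-lemma}) would produce a looser bound involving $(\mathcal{B}^{\dagger})^{2}/\beta^{\dagger}$ instead of a constant in a single parameter. The $\mathcal{L}^{2}$ lemma and the Fenchel correspondence together remove all the $\lambda$-dependent terms of Theorem~\ref{thm-main}, leaving a one-line bound for each direction.
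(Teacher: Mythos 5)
Your proposal follows essentially the same route as the paper's own proof: substitute $\overline{\psi^{\ddagger}}:=\overline{\psi^{\dagger}}$ so that $\nabla\psi^{\dagger}\circ\nabla\overline{\psi^{\dagger}}=\text{id}_{\mathcal{Y}}$ kills the cycle term, use $\beta^{\dagger}$-strong convexity to get $\epsilon\geq\frac{\beta^{\dagger}}{2}\|\nabla\overline{\psi^{*}}-\nabla\overline{\psi^{\dagger}}\|_{\mathbb{Q}}^{2}$ and Lemma \ref{l2-property-lemma} for the inverse property, then invoke the Fenchel correspondence ($\mathcal{B}^{\dagger}$-smoothness of $\psi^{\dagger}$ equals $\tfrac{1}{\mathcal{B}^{\dagger}}$-strong convexity of $\overline{\psi^{\dagger}}$) and the symmetry of the objective for the forward property, exactly as the paper does. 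The only caveat is that your (correct) intermediate bounds, like the paper's own derivation, actually give the constants $\epsilon/\beta^{\dagger}$ for the inverse and $\epsilon\,\mathcal{B}^{\dagger}$ for the forward direction, rather than the $\epsilon\beta^{\dagger}$ and $\epsilon/\mathcal{B}^{\dagger}$ written in the theorem statement, which appear to be inverted there.
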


\begin{proof}First, we note that $\epsilon\geq 0$ by the definition of $\psi^{*}$. Next, we repeat the first part of the proof of Theorem \eqref{thm-main} by substituting $\overline{\psi^{\ddagger}}:=\overline{\psi^{\dagger}}$. Thus, by using $\nabla\psi^{\dagger}\circ\nabla\overline{\psi^{\ddagger}}=\text{id}_{\mathcal{Y}}$ we obtain the following simple analogue of formula \eqref{explicit-corr-upper-bound}:

\begin{equation}
\text{Corr}(\mathbb{P},\mathbb{Q}\mid \psi^{\dagger})\geq \text{Corr}(\mathbb{P},\mathbb{Q})+\frac{\beta^{\dagger}}{2}\|\nabla\overline{\psi^{*}}-\nabla\overline{\psi^{\dagger}}\|_{\mathbb{Q}},
\label{explicit-corr-simple-bound}
\end{equation}
i.e. $\epsilon\geq \frac{\beta^{\dagger}}{2}\|\nabla\overline{\psi^{*}}-\nabla\overline{\psi^{\dagger}}\|_{\mathbb{Q}}$. Thus, by using Lemma \eqref{l2-property-lemma}, we immediately derive ${\mathbb{W}_{2}^{2}(\nabla\overline{\psi^{\dagger}}\circ\mathbb{Q},\mathbb{P})\leq \frac{\epsilon}{\beta^{\dagger}}}$, i.e. inverse generative property.

To derive forward generative property, we note that $\mathcal{B}^{\dagger}$-smoothness of $\psi^{\dagger}$ means that $\overline{\psi^{\dagger}}$ is $\frac{1}{\mathcal{B}^{\dagger}}$-strongly convex. Thus, due to symmetry of the objective, we can repeat all the derivations w.r.t. $\overline{\psi^{\dagger}}$ instead of $\psi^{\dagger}$ in order to prove $$\mathbb{W}_{2}^{2}(\nabla\psi^{\dagger}\circ \mathbb{P},\mathbb{Q})\leq \mathcal{B}^{\dagger}\epsilon,$$
i.e. forward generative property.
\end{proof}

\begin{proof}[Proof of Theorem \ref{thm-main2}]We repeat the first part of the proof of Theorem \ref{thm-main}, but instead of exploiting strong convexity of $\psi^{\mathcal{X}}$ to obtain an upper bound on the regularized correlations, we use $\mathcal{B}^{\mathcal{X}}$-smoothness to obtain a lower bound. The resulting analogue \eqref{approx-corr-upper-bound} to \eqref{explicit-corr-upper-bound} is as follows:
\begin{eqnarray}\text{Corr}\big(\mathbb{P},\mathbb{Q}\mid \psi^{\mathcal{X}}, \overline{\psi^{\mathcal{Y}}}, \lambda\big)- \text{Corr}(\mathbb{P},\mathbb{Q})\leq
\nonumber
\\
\frac{1}{2}(\lambda-\frac{1}{\mathcal{B}^{\mathcal{X}}})\cdot \|\nabla\psi^{\mathcal{X}}\circ \nabla\overline{\psi^{\mathcal{Y}}}-\text{id}_{\mathcal{Y}}\|^{2}_{\mathbb{Q}}+
\nonumber
\\
\frac{1}{2}\norm{\frac{1}{\sqrt{\mathcal{B}^{\mathcal{X}}}}\big[\nabla\psi^{\mathcal{X}}\circ \nabla\overline{\psi^{\mathcal{Y}}}-\text{id}_{\mathcal{Y}}\big]+\sqrt{\mathcal{B}^{\mathcal{X}}}\big[\nabla\overline{\psi^{*}}-\nabla\overline{\psi^{\mathcal{Y}}}\big]}_{\mathbb{Q}}^{2}\leq
\label{approx-corr-upper-bound}
\\
\frac{1}{2}(\lambda-\frac{1}{\mathcal{B}^{\mathcal{X}}})\cdot \|\nabla\psi^{\mathcal{X}}\circ \nabla\overline{\psi^{\mathcal{Y}}}-\text{id}_{\mathcal{Y}}\|^{2}_{\mathbb{Q}}+
\nonumber
\\
\frac{1}{2}\bigg[\frac{1}{\sqrt{\mathcal{B}^{\mathcal{X}}}}\|\nabla\psi^{\mathcal{X}}\circ \nabla\overline{\psi^{\mathcal{Y}}}-\text{id}_{\mathcal{Y}}\|_{\mathbb{Q}}+\sqrt{\mathcal{B}^{\mathcal{X}}}\|\nabla\overline{\psi^{*}}-\nabla\overline{\psi^{\mathcal{Y}}}\|_{\mathbb{Q}}\bigg]^{2}=
\label{ub-triangle}
\\
\frac{\lambda}{2}\cdot \|\nabla\psi^{\mathcal{X}}\circ \nabla\overline{\psi^{\mathcal{Y}}}-\text{id}_{\mathcal{Y}}\|^{2}_{\mathbb{Q}}+\|\nabla\psi^{\mathcal{X}}\circ \nabla\overline{\psi^{\mathcal{Y}}}-\text{id}_{\mathcal{Y}}\|_{\mathbb{Q}}\cdot \|\nabla\overline{\psi^{*}}-\nabla\overline{\psi^{\mathcal{Y}}}\|_{\mathbb{Q}}+\frac{\mathcal{B}^{\mathcal{X}}}{2}\|\nabla\overline{\psi^{*}}-\nabla\overline{\psi^{\mathcal{Y}}}\|_{\mathbb{Q}}^{2}
\label{ub-expressed}
\end{eqnarray}
Here in transition from line \eqref{approx-corr-upper-bound} to \eqref{ub-triangle}, we apply the triangle inequality. 

% Next, we obtain upper bounds for all parts of \eqref{ub-expressed}.

For every $y\in\mathcal{Y}$ we have $\|\nabla\psi^{\mathcal{X}}\circ \nabla\overline{\psi^{\mathcal{Y}}}(y)-\nabla\psi^{\mathcal{X}}\circ \nabla \overline{\psi^{*}}(y)\|\leq \mathcal{B}^{\mathcal{X}}\cdot \|\nabla\overline{\psi^{\mathcal{Y}}}(y)-\nabla \overline{\psi^{*}}(y)\|$. We raise both parts of this inequality to the power 2 and integrate over $\mathcal{Y}$ w.r.t. $\mathbb{Q}$. We obtain
\begin{equation}\|\nabla\psi^{\mathcal{X}}\circ \nabla\overline{\psi^{\mathcal{Y}}}-\nabla\psi^{\mathcal{X}}\circ \nabla \overline{\psi^{*}}\|_{\mathbb{Q}}^{2}\leq (\mathcal{B}^{\mathcal{X}})^{2}\cdot \|\nabla\overline{\psi^{\mathcal{Y}}}-\nabla \overline{\psi^{*}}\|_{\mathbb{Q}}^{2}\leq (\mathcal{B}^{\mathcal{X}})^{2}\cdot \epsilon_{\mathcal{Y}}.
\label{ub-first-part-triangle}
\end{equation}
Now we recall that since $\overline{\nabla \psi^{*}
}\circ \mathbb{Q}=\mathbb{P}$, we have
\begin{equation}
    \|\nabla\psi^{\mathcal{X}}\circ \nabla \overline{\psi^{*}}-\underbrace{\nabla \psi^{*}\circ \nabla \overline{\psi^{*}}}_{\text{id}_{\mathcal{Y}}}\|_{\mathbb{Q}}^{2}=\|\nabla\psi^{\mathcal{X}}-\nabla \psi^{*}\|_{\mathbb{P}}^{2}\leq \epsilon_{\mathcal{X}}.
\label{ub-second-part-triangle}
\end{equation}
Next, we combine \eqref{ub-first-part-triangle} and \eqref{ub-second-part-triangle} apply the triangle inequality to bound
\begin{eqnarray}
\|\nabla\psi^{\mathcal{X}}\circ \nabla\overline{\psi^{\mathcal{Y}}}-\text{id}_{\mathcal{Y}}\|_{\mathbb{Q}}\leq 
\nonumber
\\
\|\nabla\psi^{\mathcal{X}}\circ \nabla\overline{\psi^{\mathcal{Y}}}(y)-\nabla\psi^{\mathcal{X}}\circ \nabla \overline{\psi^{*}}(y)\|_{\mathbb{Q}}+ \|\nabla\psi^{\mathcal{X}}\circ \nabla \overline{\psi^{*}}-\underbrace{ \nabla \psi^{*}\circ \nabla\overline{\psi^{*}}}_{\text{id}_{\mathcal{Y}}}\|_{\mathbb{Q}}\leq
\nonumber
\\
\mathcal{B}^{\mathcal{X}}\sqrt{\epsilon_{\mathcal{Y}}}+\sqrt{\epsilon_{\mathcal{X}}}.
\end{eqnarray}

We substitute all the bounds to \eqref{approx-corr-upper-bound}:
\begin{eqnarray}\text{Corr}\big(\mathbb{P},\mathbb{Q}\mid \psi^{\mathcal{X}}, \overline{\psi^{\mathcal{Y}}}, \lambda\big)- \text{Corr}(\mathbb{P},\mathbb{Q})\leq
\nonumber
\\
\frac{\lambda}{2}(\mathcal{B}^{\mathcal{X}}\sqrt{\epsilon_{\mathcal{Y}}}+\sqrt{\epsilon_{\mathcal{X}}})^{2}+(\mathcal{B}^{\mathcal{X}}\sqrt{\epsilon_{\mathcal{Y}}}+\sqrt{\epsilon_{\mathcal{X}}})\cdot(\sqrt{\epsilon_{\mathcal{Y}}})+\frac{\mathcal{B}^{\mathcal{X}}}{2}\epsilon_{\mathcal{Y}}.
\end{eqnarray}
and finish the proof by using
$$\text{Corr}\big(\mathbb{P},\mathbb{Q}\mid \psi^{\dagger}, \overline{\psi^{\ddagger}}, \lambda\big)\leq \text{Corr}\big(\mathbb{P},\mathbb{Q}\mid \psi^{\mathcal{X}}, \overline{\psi^{\mathcal{Y}}}, \lambda\big)$$
which follows from the definition of $\psi^{\dagger}$, $\overline{\psi^{\ddagger}}$.\end{proof}

One may formulate and prove analogous result for the basic optimization method with a single potential \eqref{base-w2-optimization}. However, we do not include this in the paper since a similar result exists \cite{taghvaei20192}.

All our theoretical results require \textbf{smoothness} or \textbf{strong convexity} properties of potentials. We note that the assumption of smoothness and strong convexity also appears in other papers on Wasserstein-2 optimal transport, see e.g. \cite{paty2019regularity}.

The property of $\mathcal{B}$-smoothness of a convex function $\psi$ means that its gradient $\nabla\psi$ has Lipshitz constant bounded by $\mathcal{B}$. In our case, constant $\mathcal{B}$ serves as a reasonable measure of complexity of the fitted mapping $\nabla\psi$: it estimates how much the mapping can warp the space.

Strong convexity is dual to smooothness in the sense that a convex conjugate $\overline{\psi}$ to $\beta$-strongly convex function $\psi$ is $\frac{1}{\beta}$-smooth (and vise-versa) \cite{kakade2009duality}. In our case, $\beta$-strongly convex potential means that its inverse gradient mapping $(\nabla\psi)^{-1}=\nabla\overline{\psi}$ can not significantly warp the space, i.e. has Lipshitz constant bounded by $\frac{1}{\beta}$.

Recall the setting of our Theorem \ref{thm-main2}. Assume that the optimal transport map $\nabla\psi^{*}$ between $\mathbb{P}$ and $\mathbb{Q}$ is a gradient of $\beta$-strongly convex ($\beta>0$) and $\mathcal{B}$-smooth ($\mathcal{B}<\infty$) function. In this case, by considering classes $\Psi_{\mathcal{X}}=\overline{\Psi_{\mathcal{Y}}}$ equal to all $\min(\beta,\frac{1}{\mathcal{B}})$-strongly convex and $\max(\mathcal{B},\frac{1}{\beta})$-smooth functions, by using our method (for any $\lambda>\frac{1}{\beta}$) we will exactly compute correlations and find the optimal $\nabla\psi^{*}$.

\subsection{From Latent Space to Data Space}
\label{sec-theory-latent}
In the setting of the latent space mass transport, we fit a generative mapping to the latent space of an autoencoder and combine it with the decoder to obtain a generative model. The natural question is how close decoded distribution is to the real data distribution $\mathbb{S}$ used to train an encoder. The following Theorem states that the distribution distance of the combined model can be naturally divided into two parts: the quality of the latent fit and the reconstruction loss of the auto-encoder.

\begin{theorem}[Decoding Theorem]Let $\mathbb{S}$ be the real data distribution on $\mathcal{S}\subset \mathbb{R}^{K}$. Let ${u:\mathcal{S}\rightarrow \mathcal{Y}=\mathbb{R}^{D}}$ be the encoder and ${v:\mathcal{Y}\rightarrow \mathbb{R}^{K}}$ be $L$-Lipschitz decoder.

Assume that a latent space generative model has fitted a map $g^{\dagger}:\mathcal{X}\rightarrow\mathcal{Y}$ that pushes some latent distribution $\mathbb{P}$ on $\mathcal{X}=\mathbb{R}^{D}$ to be $\epsilon$ close to $\mathbb{Q}=u\circ \mathbb{S}$ in $\mathbb{W}_{2}^{2}$-sense, i.e.
$$\mathbb{W}_{2}^{2}(g^{\dagger}\circ\mathbb{P},\mathbb{Q})\leq \epsilon.$$
Then the following inequality holds true:
\begin{equation}\mathbb{W}_{2}(\underbrace{v\circ g^{\dagger} \circ \mathbb{P}}_{\substack{\text{Generated data} \\ \text{ distribuon}}},\mathbb{S})\leq L\sqrt{\epsilon}+\big(\frac{1}{2}\underbrace{\mathbb{E}_{\mathbb{S}}\|s-v\circ u(s)\|^{2}_{2}}_{\substack{\text{Autoencoder's} \\ \text{reconstrution loss}}}\big)^{\frac{1}{2}},
\label{thm-latent-error}
\end{equation}
where $v\circ g^{\dagger}$ is the combined generative model.
\label{thm-latent}
\end{theorem}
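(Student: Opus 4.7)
The plan is to apply the Wasserstein-2 triangle inequality (equation \eqref{triangle-inequality-w2}) with the distribution $v\circ u\circ \mathbb{S}$ inserted as an intermediate point between $v\circ g^{\dagger}\circ \mathbb{P}$ and $\mathbb{S}$. This yields
$$\mathbb{W}_{2}(v\circ g^{\dagger}\circ \mathbb{P},\mathbb{S})\leq \mathbb{W}_{2}(v\circ g^{\dagger}\circ \mathbb{P},v\circ u\circ \mathbb{S})+\mathbb{W}_{2}(v\circ u\circ \mathbb{S},\mathbb{S}),$$
and it then suffices to bound each summand separately by the corresponding term on the right-hand side of \eqref{thm-latent-error}.

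For the first summand, I would invoke the Lipschitz property of Wasserstein-2 (Lemma \ref{lipschitz-property-lemma}) applied to the $L$-Lipschitz decoder $v$ acting on the pair of distributions $g^{\dagger}\circ \mathbb{P}$ and $u\circ \mathbb{S}=\mathbb{Q}$. Combined with the assumption $\mathbb{W}_{2}^{2}(g^{\dagger}\circ \mathbb{P},\mathbb{Q})\leq \epsilon$, this gives
$$\mathbb{W}_{2}(v\circ g^{\dagger}\circ \mathbb{P},v\circ u\circ \mathbb{S})\leq L\cdot \mathbb{W}_{2}(g^{\dagger}\circ \mathbb{P},\mathbb{Q})\leq L\sqrt{\epsilon}.$$

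For the second summand, I would construct the explicit transport plan ${\mu=[\text{id}_{\mathcal{S}},v\circ u]\circ \mathbb{S}\in \Pi(\mathbb{S},v\circ u\circ \mathbb{S})}$, as in Kantorovich's formulation \eqref{ot-cost-kantorovich}. Since the optimal cost does not exceed the cost of any feasible plan, we obtain
$$\mathbb{W}_{2}^{2}(v\circ u\circ \mathbb{S},\mathbb{S})\leq \int_{\mathcal{S}}\frac{\|s-v\circ u(s)\|^{2}}{2}d\mathbb{S}(s)=\frac{1}{2}\mathbb{E}_{\mathbb{S}}\|s-v\circ u(s)\|^{2}.$$
Adding the two bounds yields \eqref{thm-latent-error}. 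No genuine obstacle is expected: both key ingredients (the triangle inequality and the Lipschitz lemma) are already established in Subsection \ref{sec-proof-preliminaries}, and the reconstruction-loss bound follows from the trivial observation that the identity-composed coupling is an admissible transport plan.
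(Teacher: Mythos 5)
Your proposal is correct and follows essentially the same route as the paper: the triangle inequality through $v\circ\mathbb{Q}=v\circ u\circ\mathbb{S}$, the Lipschitz property of $\mathbb{W}_{2}$ (Lemma \ref{lipschitz-property-lemma}) for the first term, and a bound on the reconstruction term for the second. The only cosmetic difference is that you build the coupling $[\text{id}_{\mathcal{S}},v\circ u]\circ\mathbb{S}$ explicitly, whereas the paper cites Lemma \ref{l2-property-lemma}, whose proof is exactly that construction.
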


\begin{proof}We apply the triangle inequality and obtain
\begin{eqnarray}
\mathbb{W}_{2}(v\circ g^{\dagger} \circ \mathbb{P},\mathbb{S})\leq \mathbb{W}_{2}(v\circ g^{\dagger} \circ \mathbb{P},v\circ\mathbb{Q}) + \mathbb{W}_{2}(v\circ \mathbb{Q},\mathbb{S}).
\label{triangle-latent}
\end{eqnarray}
Let $\mathbb{P}^{\dagger}=g^{\dagger}\circ \mathbb{P}$ be the fitted latent distribution ($\mathbb{W}_{2}^{2}(\mathbb{P}^{\dagger},\mathbb{Q})\leq \epsilon$). We use Lipschitz Wasserstein-2 property of Lemma \ref{lipschitz-property-lemma} and obtain
\begin{eqnarray}
L\sqrt{\epsilon}\geq L\cdot\mathbb{W}_{2}(\mathbb{P}^{\dagger},\mathbb{Q})\geq  \mathbb{W}_{2}(v\circ \mathbb{P}^{\dagger},v\circ\mathbb{Q})=\mathbb{W}_{2}(v\circ g^{\dagger}\circ \mathbb{P},v\circ\mathbb{Q}).
\label{decoded-latent-bound}
\end{eqnarray}
Next, we apply $\mathcal{L}^{2}$-property (Lemma \ref{l2-property-lemma}) to mappings $\text{id}_{\mathbb{S}}$, $v\circ u$ and distribution $\mathbb{S}$, and derive
\begin{eqnarray}
\frac{1}{2}\mathbb{E}_{\mathbb{S}}\|s-v\circ u(s)\|^{2}_{2}\geq \mathbb{W}_{2}^{2}(\mathbb{S}, v\circ\mathbb{Q}).
\label{ae-bound}
\end{eqnarray}
The desired inequality \eqref{thm-latent-error} immediately results from combining \eqref{triangle-latent} with \eqref{decoded-latent-bound}, \eqref{ae-bound}.
\end{proof}

\subsection{Extension to the Non-Existent Density Case}
\label{sec-non-smooth}
Our main theoretical results require distributions $\mathbb{P},\mathbb{Q}$ to have finite second moments and density on $\mathcal{X}=\mathcal{Y}=\mathbb{R}^{D}$. While the existence of second moments is a reasonable condition, in the majority of practical use-cases the density might not exist. Moreover, it is typically assumed that the supports of distributions are manifold of dimension lower than $D$ or even discrete sets.

One may artificially smooth distributions $\mathbb{P},\mathbb{Q}$ by convolving them with a random white Gaussian noise\footnote{From the practical point of view, smoothing is equal to adding random noise distributed according to $\Lambda$ to samples from $\mathbb{P},\mathbb{Q}$ respectively.} $\Lambda=\mathcal{N}(0,\sigma^{2}I_{D})$ and find a generative mapping $g^{\dagger}:\mathcal{X}\rightarrow \mathcal{Y}$ between smoothed $\mathbb{P}\ast\Lambda$ and $\mathbb{Q}\ast\Lambda$. For Wasserstein-2 distances, it is natural that the generative properties of $g^{\dagger}$ as a mapping between $\mathbb{P}\ast\Lambda$ and $\mathbb{Q}\ast\Lambda$ will transfer to generative properties of $g^{\dagger}$ as a mapping between $\mathbb{P},\mathbb{Q}$, but with some bias depending on the statistics of $\Lambda$.

\begin{theorem}[De-smoothing Wasserstein-2 Property]
Let $\mathbb{P},\mathbb{Q}$ be two probability distributions on $\mathcal{X}=\mathcal{Y}=\mathbb{R}^{D}$ with finite second moments. Let $\Lambda=\mathcal{N}(0,\sigma^{2}I_{D})$ be a Gaussian white noise . Let $\mathbb{P}\ast\Lambda$ and $\mathbb{Q}\ast\Lambda$ be versions of $\mathbb{P}$ and $\mathbb{Q}$ smoothed by $\Lambda$. Let $T:\mathcal{X}\rightarrow\mathcal{Y}$ be a $L$-Lipschitz measurable map satisfying
$$\mathbb{W}_{2}(T\circ [\mathbb{P}\ast\Lambda], [\mathbb{Q}\ast\Lambda])\leq \sqrt{\epsilon}.$$
Then the following inequality holds true:
 $$\mathbb{W}_{2}(T\circ \mathbb{P}, \mathbb{Q})\leq (L+1)\sigma\sqrt{\frac{D}{2}}+\sqrt{\epsilon}.$$
 \label{thm-de-smoothing}
\end{theorem}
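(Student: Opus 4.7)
The plan is to prove the inequality via a three-term triangle decomposition and bound each piece using the two Lemmas established in Subsection \ref{sec-proof-preliminaries} together with a trivial Gaussian coupling. The central observation is that $T\circ\mathbb{P}$ and $\mathbb{Q}$ differ from their smoothed counterparts $T\circ[\mathbb{P}\ast\Lambda]$ and $\mathbb{Q}\ast\Lambda$ by a bounded $\mathbb{W}_{2}$ amount, and that the smoothed pair already sits $\sqrt{\epsilon}$-close by assumption.

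First, I would invoke the triangle inequality \eqref{triangle-inequality-w2} with the intermediate distributions $T\circ[\mathbb{P}\ast\Lambda]$ and $\mathbb{Q}\ast\Lambda$:
\begin{equation*}
\mathbb{W}_{2}(T\circ \mathbb{P}, \mathbb{Q})\leq \mathbb{W}_{2}(T\circ \mathbb{P},\, T\circ[\mathbb{P}\ast\Lambda]) + \mathbb{W}_{2}(T\circ[\mathbb{P}\ast\Lambda],\, \mathbb{Q}\ast\Lambda) + \mathbb{W}_{2}(\mathbb{Q}\ast\Lambda,\, \mathbb{Q}).
\end{equation*}
The middle term is at most $\sqrt{\epsilon}$ directly from the hypothesis, so the job reduces to bounding the first and third terms by $L\sigma\sqrt{D/2}$ and $\sigma\sqrt{D/2}$ respectively.

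For the third term, I would construct the explicit coupling $(Y, Y+\xi)$ with $Y\sim\mathbb{Q}$ and $\xi\sim\Lambda$ independent; its marginals are $\mathbb{Q}$ and $\mathbb{Q}\ast\Lambda$, and its quadratic cost is $\frac{1}{2}\mathbb{E}\|\xi\|^{2}=\frac{\sigma^{2}D}{2}$, yielding $\mathbb{W}_{2}(\mathbb{Q}\ast\Lambda,\mathbb{Q})\leq \sigma\sqrt{D/2}$. The same argument applied to $\mathbb{P}$ gives $\mathbb{W}_{2}(\mathbb{P}\ast\Lambda,\mathbb{P})\leq \sigma\sqrt{D/2}$. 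For the first term, I would apply the Lipschitz-push-forward Lemma \ref{lipschitz-property-lemma} with map $T$ (Lipschitz constant $L$) to distributions $\mathbb{P}$ and $\mathbb{P}\ast\Lambda$, obtaining
\begin{equation*}
\mathbb{W}_{2}(T\circ \mathbb{P},\, T\circ[\mathbb{P}\ast\Lambda]) \leq L\cdot \mathbb{W}_{2}(\mathbb{P},\mathbb{P}\ast\Lambda)\leq L\sigma\sqrt{D/2}.
\end{equation*}
Summing the three bounds gives exactly $(L+1)\sigma\sqrt{D/2}+\sqrt{\epsilon}$.

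There is no serious obstacle here: the argument is a clean assembly of the triangle inequality, the explicit Gaussian-shift coupling (which gives the tight constant $\sigma\sqrt{D/2}$ because $\mathbb{E}\|\xi\|^{2}=\sigma^{2}D$ under the paper's $\frac{1}{2}\|\cdot\|^{2}$ convention \eqref{w2-primal}), and Lemma \ref{lipschitz-property-lemma}. The only mild subtlety is ensuring the coupling is genuinely admissible (independence of $Y$ and $\xi$ makes the sum distributed as $\mathbb{Q}\ast\Lambda$), and keeping the factor of $\tfrac{1}{2}$ in the cost consistent with the definition of $\mathbb{W}_{2}^{2}$ used throughout Section \ref{sec-proof-preliminaries}.
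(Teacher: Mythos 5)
Your proof is correct and follows essentially the same route as the paper: the same three-term triangle decomposition through $T\circ[\mathbb{P}\ast\Lambda]$ and $\mathbb{Q}\ast\Lambda$, the same explicit Gaussian-shift coupling giving $\mathbb{W}_{2}^{2}(\mathbb{Q}\ast\Lambda,\mathbb{Q})\leq \frac{D\sigma^{2}}{2}$ (the paper writes it as a transport plan with conditional density $\Lambda(y'-y)$, which is your $(Y,Y+\xi)$ coupling), and the same application of Lemma \ref{lipschitz-property-lemma} to the first term. No gaps.
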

\begin{proof}We apply the triangle inequality twice and obtain
$$\mathbb{W}_{2}(T\circ \mathbb{P}, \mathbb{Q})\leq \mathbb{W}_{2}(T\circ \mathbb{P}, T\circ [\mathbb{P}\ast\Lambda])+ \mathbb{W}_{2}(T\circ [\mathbb{P}\ast\Lambda], [\mathbb{Q}\ast\Lambda])+\mathbb{W}_{2}([\mathbb{Q}\ast\Lambda], \mathbb{Q}).$$
Consider a transport plan $\mu(y,y')\in\Pi([\mathbb{Q}\ast\Lambda], \mathbb{Q})$ satisfying $\mu(y'\mid y)=\Lambda(y'-y)$. The cost of $\mu$ is given by
$$\int_{\mathcal{Y}}\int_{\mathcal{Y}}\frac{\|y-y'\|^{2}}{2}d\Lambda(y-y')d\mathbb{Q}(y)=\int_{\mathcal{Y}}\frac{D\sigma^{2}}{2}d\mathbb{Q}(y)=\frac{D\sigma^{2}}{2}.$$
Since the plan is not necessarily optimal, we conclude that $\mathbb{W}_{2}^{2}([\mathbb{Q}\ast\Lambda], \mathbb{Q})\leq \frac{D\sigma^{2}}{2}$. Analogously, we conclude that $\mathbb{W}_{2}^{2}(\mathbb{P}, [\mathbb{P}\ast\Lambda])\leq \frac{D\sigma^{2}}{2}$. Next, we apply Lipschitz property of Wasserstein-2 (Lemma \ref{lipschitz-property-lemma}) and obtain:
$$\mathbb{W}_{2}(T\circ \mathbb{P}, T\circ [\mathbb{P}\ast\Lambda])\leq L\cdot \mathbb{W}_{2}(\mathbb{P}, [\mathbb{P}\ast\Lambda])=L\sigma\sqrt{\frac{D}{2}}.$$
Finally, we combine all the obtained bounds and derive
$$\mathbb{W}_{2}(T\circ \mathbb{P}, \mathbb{Q})\leq (L+1)\sigma\sqrt{\frac{D}{2}}+\sqrt{\epsilon}.$$
\end{proof}

\section{Neural Network Architectures}
\label{sec-icnn}
In Subsection \ref{sec-networks}, we describe the general architecture of the input convex networks. In this section, we describe particular realisations of the general architecture that we use in experiments: \textbf{DenseICNN} in Subsection \ref{sec-dense-net} and \textbf{ConvICNN} in Subsection \ref{sec-conv-net}.

\subsection{General Input-Convex Architecture}
\label{sec-networks}
We approximate convex potentials by Input Convex Neural Networks \cite{amos2017input}. The overall architecture is schematically presented in Figure \ref{fig:icnn}.

\begin{figure}[h]
    \centering
    \includegraphics[width=0.6\linewidth]{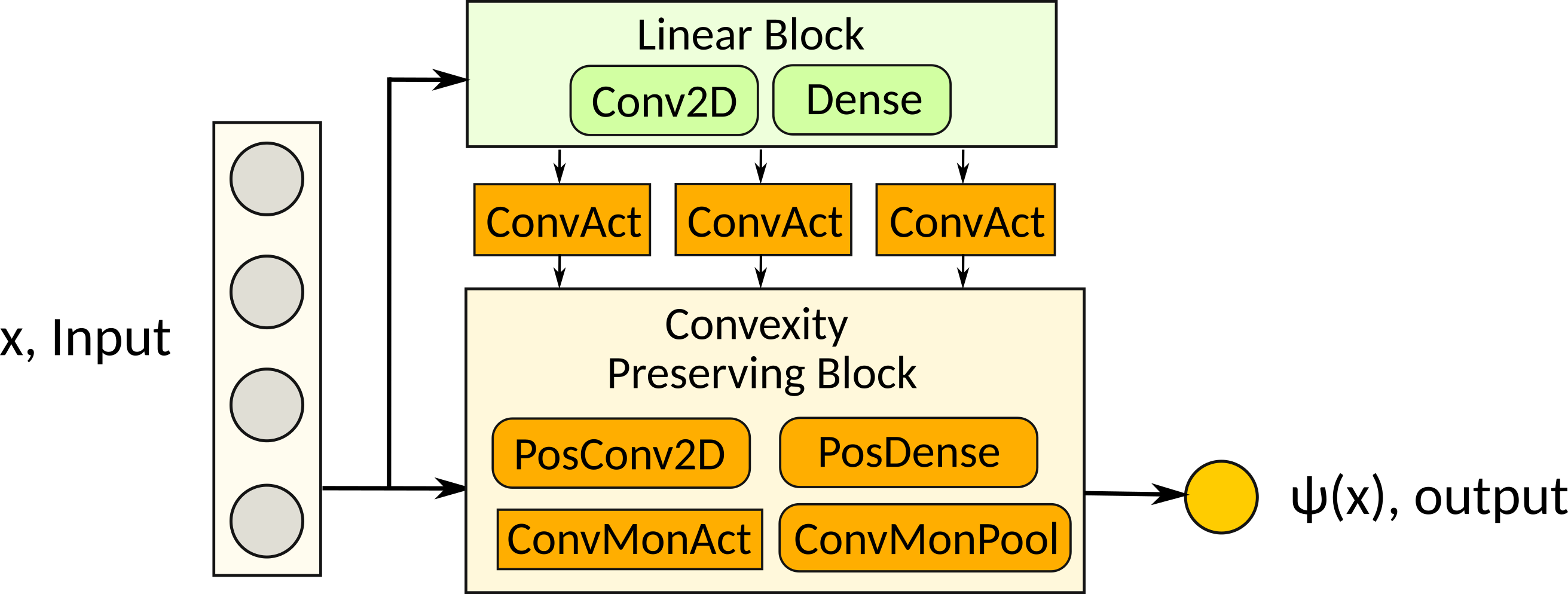}
    \caption{General architecture of an Input Convex Neural Network.}
    \label{fig:icnn}
\end{figure}

Input convex network consists of \textbf{two} principal \textbf{blocks}:
\begin{enumerate}[leftmargin=*]
    \item \textbf{Linear (L) block} consists of linear layers. Activation functions and pooling operators in the block are also linear, e.g. identity activation or average pooling.
    \item \textbf{Convexity preserving (CP) block} consists of linear layers with \textbf{non-negative weights} (excluding biases). Activations and pooling operators in this block are \textbf{convex} and \textbf{monotone}.
\end{enumerate}
Within blocks it is possible to use arbitrary \textbf{skip connections} obeying the stated rules. Neurons of L Block can be arbitrarily connected to those of CP block by applying a convex activation\footnote{Unlike activations within convexity preserving block, convex activation between L and CP block may not be monotone, e.g. $\sigma(x)=x^{2}$ can be used as an activation.} and adding the result with a positive weight. It comes from the convex function arithmetic that every neuron (including the output one) in the architecture of Figure \ref{fig:icnn} is a convex function of the input.\footnote{It is possible into insert \textbf{batch norm} and \textbf{dropout} to L and CP blocks as well as between them. These layers do not affect convexity since they can be considered (during inference) as linear layers with non-negative weights.}

In our case, we expect the network to be able to easily fit the \textbf{identity generative mapping} $$g(x)=\nabla\psi(x)=x,$$
i.e. $\psi(x)=\frac{1}{2}\|x\|^{2}+c$ is a quadratic function. Thus, we mainly insert \textbf{quadratic activations} between L and CP blocks, which differs from \cite{amos2017input} where no activation was used. Gradients of input quadratic functions correspond to linear warps of the input and are intuitively highly useful as building blocks (in particular, for fitting identity mapping).

We use specific architectures which fit to the general scheme shown in Figure \ref{fig:icnn}. \textbf{ConvICNN} is used for image-processing tasks, and \textbf{DenseICNN} is used otherwise. The exact architectures are described in the subsequent subsections.

We use \textbf{CELU} function as a convex and monotone activation (within CP block) in all the networks. We have also tried \textbf{SoftPlus} among some other continuous and differentiable functions, yet this negatively impacted the performance. The usage of \textbf{ReLU} function is also possible, but the gradient of the potential in this case will be discontinuous. Thus, it will not be Lipschitz, and the insights of our Theorems \ref{thm-main} and \ref{thm-main2} may not work.

As a convex and monotone pooling (within CP block), it is possible to use \textbf{Average} and \textbf{LogSumExp} pooling (smoothed max pooling). Pure \textbf{Max} pooling should be avoided for the same reason as \textbf{ReLU} activation. However, in ConvICNN architecture we use convolutions with stride instead of pooling, see Subsection \ref{sec-conv-net}.

In order to use insights of Theorem \ref{thm-main}, we impose strong convexity and smoothness on the potentials. As we noted in Appendix \ref{sec-proof}, $\mathcal{B}$-\textbf{smoothness} of a convex function is equal to $\frac{1}{\mathcal{B}}$ \textbf{strong convexity} of its conjugate function (and vise versa). Thus, we make both networks $\psi_{\theta},\overline{\psi_{\omega}}$ to be $\beta:=\frac{1}{\mathcal{B}}$ strongly convex, and cycle regularization keeps $\lessapprox \frac{1}{\beta}=\mathcal{B}$ smoothness for $\psi_{\theta}\approx (\overline{\psi_{\omega}})^{-1}$ and $\overline{\psi_{\omega}}\approx (\overline{\psi_{\theta}})^{-1}$. In practice, we achieve strong convexity by adding extra value $\frac{\beta}{2}\|x\|^{2}$ to the output of the final neuron of a network. In all our experiments, we set ${\beta^{-1}=1000000}$.\footnote{Imposing smoothness \& strong convexity can be viewed as a regularization of the mapping: it does not perform too large/small warps of the input. See e.g. \cite{paty2019regularity}.} In addition to smoothing, strong convexity guarantees that $\nabla\psi_{\theta}$ and $\nabla\overline{\psi_{\omega}}$ are bijections, which is used in Theorems \ref{thm-main}, \ref{thm-main2}.

\subsection{Dense Input Convex Neural Network}
\label{sec-dense-net}

For DenseICNN, we implement \textbf{Convex Quadratic} layer each output neuron of which is a convex quadratic function of input. More precisely, for each input $x\in\mathbb{R}^{N_{\text{in}}}$ it outputs $(\text{cq}_{1}(x),\dots,\text{cq}_{N_{\text{out}}}(x))\in\mathbb{R}^{N_{\text{out}}}$, with 
$$\text{cq}_{n}(x)=\langle x, A_{n}x\rangle+\langle b_{n},x\rangle+c_{n}$$
for positive semi-definite quadratic form $A\in\mathbb{R}^{N_{\text{in}}\times N_{\text{in}}}$, vector $b\in\mathbb{R}^{N_{\text{in}}}$ and constant $c\in\mathbb{R}$. Note that for large $N_{\text{in}}$, the size of such layer grows fast, i.e. $\geq O(N_{\text{in}}^{2}\cdot N_{\text{out}})$. To fix this issue, we represent each quadratic matrix as a product $A_{n}=F_{n}^{T}F_{n}$, where $F\in \mathbb{R}^{r\times N_{\text{in}}}$ is the matrix of rank at most $r$. This helps to limit optimization to only positive quadratic forms (and, in particular, symmetric), and reduce the number of weights stored for quadratic part to $O(r\cdot N_{\text{in}}\cdot N_{\text{out}})$. Actually, the resulting quadratic forms $A_{n}$ will have rank at most $r$. 

The architecture is shown in Figure \ref{fig:denseicnn}. We use Convex Quadratic Layers in DenseICNN for connecting input directly to layers of a fully connected network. Note that such layers (even at full rank) do not blow the size of the network when the input dimension is low, e.g. in the problem of color transfer.

\begin{figure}[h]
    \centering
    \includegraphics[width=.55\linewidth]{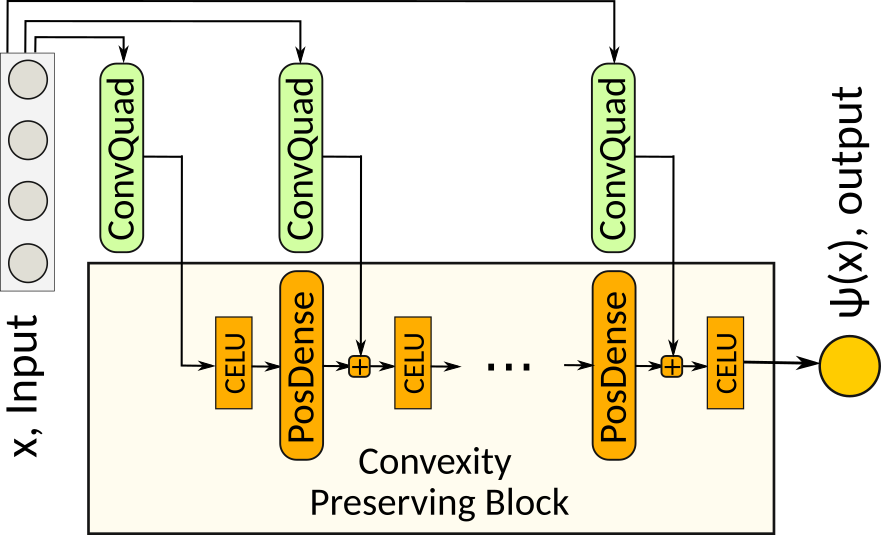}
    \caption{Dense Input Convex Neural Network.}
    \label{fig:denseicnn}
\end{figure}

The hyperparameters of DenseICNN are widths of the layers and ranks of the convex input-quadratic layers. For simplicity, we use the same rank $r$ for all the layers. We denote the width of the first convex quadratic layer by $h_{0}$ and the width of $k+1$-th Convex Quadratic and $k$-th Linear layers by $h_{k}$. The complete hyperparameter set of the network is given by $[r; h_{0}; h_{1},\dots,h_{K}]$.

\subsection{Convolutional Input Convex Neural Network}
\label{sec-conv-net}

We apply convolutional networks to the problem of unpaired image-to-image style transfer. The architecture of ConvICNN is shown in Figure \ref{fig:convicnn}. The network takes an input image ($128\times 128$ with $3$ RGB channels) and outputs a single value. The gradient of the network w.r.t. the input serves as a generator in our algorithm.

\begin{figure}[h]
    \centering
    \includegraphics[width=.98\linewidth]{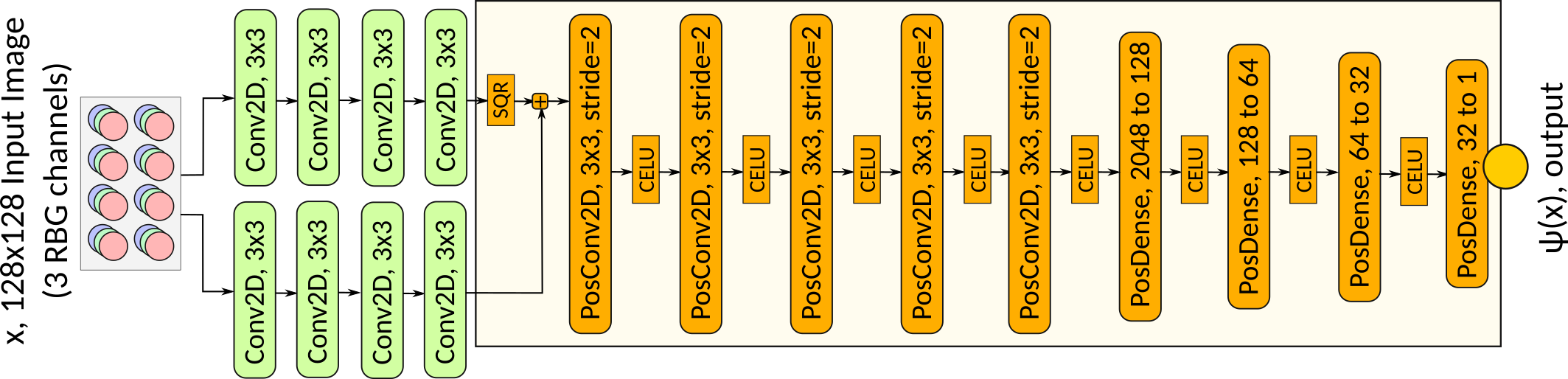}
    \caption{Convolutional Input Convex Neural Network. All convolutional layers have 128 channels.}
    \label{fig:convicnn}
\end{figure}

Linear and Convexity preserving blocks are successive, and no skip connections are used. Block L consists of two separate parts with stacked convolutions without intermediate activation. The square of the second part is added to the first part and is used as an input for the CP block. All convolutional layers of the network have 128 channels (zero-padding with offset $=1$ is used).

\section{Experimental Details and Extra Results}
\label{sec-exp2}
In the first subsection, we describe general training details. In the second subsection, we discuss the computational complexity of a single gradient step of our method. Each subsequent subsection corresponds to a particular problem and provides additional experimental results and training details: toy experiments in Subsection \ref{sec-exp-toy} and comparison with minimax approach in Subsection \ref{sec-comparison}, latent space optimal transport in Subsection \ref{sec-exp2-latent}, image-to-image color transfer in \ref{sec:color-transfer}, domain adaptation in Subsection \ref{sec:domain-adaptation}, image-to-image style transfer in Subsection \ref{sec-exp-image-to-image-2}.

\subsection{General Training Details}
\label{sec-general-details}

The code is written on \textbf{PyTorch} framework. The networks are trained on a single \textbf{GTX 1080Ti}. The numerical optimization procedure is provided in Algorithm \ref{algorithm-main} of Section \ref{sec-algorithm}.

In each experiment, both primal $\psi_{\theta}$ and conjugate $\overline{\psi_{\omega}}$ potentials have the same network architecture. The minimization of \eqref{main-objective} is done via mini batch stochastic gradient descent with \textbf{weight clipping} (excluding biases) in CP block to the $[0,+\infty)$.\footnote{We also tried to use softplus, exponent on weights and regularization instead of clipping, but none of these worked well.} We use \textbf{Adam} \cite{kingma2014adam} optimizer.

For every particular task we pretrain the potential network $\psi_{\theta}$ by minimizing mean squared error to satisfy $\nabla\psi_{\theta}(x)\approx x$ and copy the weights to $\overline{\psi_{\omega}}$. This provides a good initialization for the main training, i.e. $\nabla\psi_{\theta}$ and $\nabla\overline{\psi_{\omega}}$ are mutually inverse.

Doing tests, we noted that our method converges faster if we disable back-propagation through term $\nabla\overline{\psi_{\omega}}$ which appears twice in the second line of \eqref{main-objective}. In this case, the derivative w.r.t. $\omega$ is computed by using the regularization terms only.\footnote{The term $\langle \nabla\overline{\psi_{\omega}}(y), y\rangle$ becomes redundant for the optimization. Yet it remains useful for monitoring the convergence.} This heuristic allows to save additional memory and computational time because a smaller computational graph is built. We used the heuristic in all the experiments.

In the experiments with high dimensional data (latent space optimal transport, domain adaptation and style transfer), we add the following extra regularization term to the main objective \eqref{main-objective}:
\begin{eqnarray}R_{\mathcal{X}}(\theta,\omega)=\int_{\mathcal{X}}\|g_{\omega}^{-1}\circ g_{\theta}(x)-x\|^{2}d\mathbb{P}(x)=\int_{\mathcal{Y}}\|\nabla\overline{\psi_{\omega}}\circ \nabla\psi_{\theta}(x)-x\|^{2}d\mathbb{P}(x).
\label{x-regularizer}\end{eqnarray}
Term \eqref{x-regularizer} is analogous to the term $R_{\mathcal{Y}}(\theta,\omega)$ given by \eqref{y-regularizer}. It also keeps forward $g_{\theta}$ and inverse $g_{\omega}^{-1}$ generative mappings  being approximately inverse. From the theoretical point of view, it is straightforward to obtain approximation guarantees similar to those of Theorems \ref{thm-main} and \ref{thm-main2} for the optimization with two terms: $R_{\mathcal{X}}$ and $R_{\mathcal{Y}}$. However, we do not to include  $R_{\mathcal{X}}$ in the proofs in order to keep them simple.

\subsection{Computational Complexity}
\label{sec-exp-complexity}
The time required to evaluate the value of \eqref{main-objective} and its gradient w.r.t. $\theta,\omega$ is comparable up to a constant factor to that of a single evaluation of $\psi_{\theta}(x)$.

This claim follows from the well-known fact that gradient evaluation $\nabla_\theta h_\theta(x)$ of $h_\theta: \mathbb{R}^{D} \to \mathbb{R}$, when parameterized as a neural network, requires time proportional to the size of the computational graph. Hence gradient computation requires computational time proportional to the time for evaluating the function $h_\theta(x)$ itself. The same holds true when computing the derivative with respect to $x$.
Thus, the number of operations required to compute different terms in \eqref{main-objective}, e.g. $\nabla \overline{\psi_{\omega}}(y)$,  $\psi_{\theta}\big(\nabla \overline{\psi_{\omega}}(y)\big)$ and $\nabla\psi_{\theta}\circ\nabla \overline{\psi_{\omega}}(y)$, is also linear w.r.t. the computation time of $\psi_{\theta}(x)$ or, equivalently, $\overline{\psi_{\omega}}(x)$. As a consequence, the time required for the forward pass of \eqref{main-objective} is larger than the forward pass for $\psi_{\theta}(x)$ only up to a constant factor. Thus, the backward pass for \eqref{main-objective} with respect to parameters of ICNNs $\theta$ and $\omega$ is also linear in the computation time of $\psi_{\theta}(x)$.

We empirically measured that for our DenseICNN potentials, the computation of gradient of \eqref{main-objective} w.r.t. parameters $\theta,\omega$ requires roughly 8-12x more time than the computation of $\psi_{\theta}(x)$. Evaluating $\nabla\psi_{\theta}, \nabla \overline{\psi_{\omega}}$ takes roughly 3-4x more time than evaluating $\psi_{\theta}(x)$.

\subsection{Toy Experiments}
\label{sec-exp-toy}
In this subsection, we test our algorithm on $2D$ toy distributions from \cite{gulrajani2017improved,seguy2017large}. In all the experiments, distribution $\mathbb{P}$ is the standard Gaussian noise and $\mathbb{Q}$ is a Gaussian mixture or a Swiss roll.

Both primal and conjugate potentials $\psi_{\theta}$ and $\overline{\psi_{\omega}}$ have DenseICNN [2; 128; 128, 64] architecture. Each network has roughly 25000 trainable parameters. Some of them vanish during the training because of the weight clipping. For each particular problem the networks are trained for $30000$ iterations with $1024$ samples in a mini batch. Adam optimizer \cite{kingma2014adam} with $\text{lr}=10^{-3}$ is used. We put $\lambda=1$ in our cycle regularization and impose additional $10^{-10}$ $\mathcal{L}^{1}$ regularization on the weights.

For the case when $\mathbb{Q}$ is a mixture of $8$ Gaussians, the intermediate learned distributions are shown in Figure \ref{fig:8gau-stages}. The overall structure of the forward mapping has already been learned on iteration $200$, while the inverse mapping gets learned only on iteration $\approx 2000$. This can be explained by the smoothness of the desired optimal mappings $\nabla\psi^{*}$ and $\nabla\overline{\psi^{*}}$. The inverse mapping $\nabla\overline{\psi^{*}}$ has large Lipschitz constant because it has to unsqueeze dense masses of $8$ Gaussians. In contrast to the inverse mapping, the forward mapping $\nabla\psi^{*}$ has to squeeze  the distribution. Thus, it is expected to have lower Lipschitz constant (everywhere except the neighbourhood of a central point which is a fixed point of $\nabla\psi^{*}$ due to symmetry).
\begin{figure}[!h]
     \centering
     \begin{subfigure}[b]{0.48\columnwidth}
         \centering
         \includegraphics[width=\linewidth]{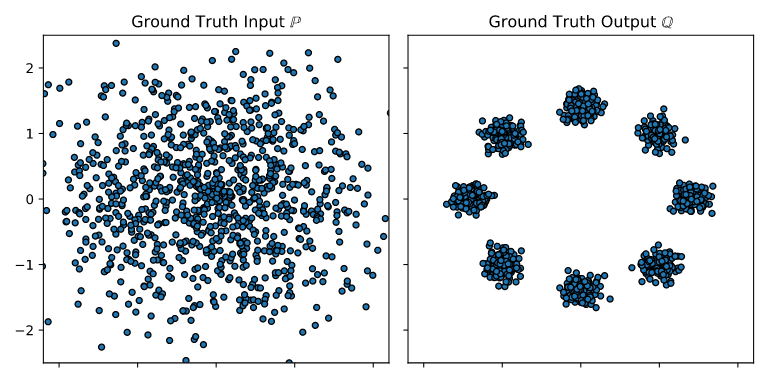}
         \caption{Initial distributions $\mathbb{P}$ and $\mathbb{Q}$.}
         \label{fig:8gau-gt}
     \end{subfigure}
     \hspace{4mm}\begin{subfigure}[b]{0.48\columnwidth}
        \centering
        \includegraphics[width=\linewidth]{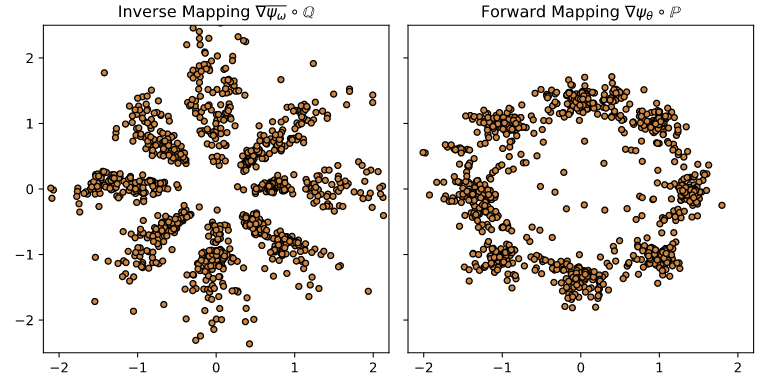}
         \caption{Generated distributions after 200 iterations.}
         \label{fig:8gau-200}
     \end{subfigure}
     
     \vspace{4mm}
     \begin{subfigure}[b]{0.48\columnwidth}
        \centering
        \includegraphics[width=\linewidth]{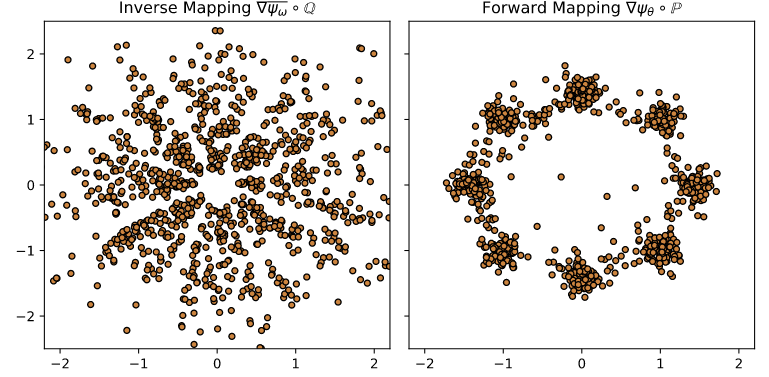}
         \caption{Generated distributions after 2000 iterations.}
         \label{fig:8gau-2000}
     \end{subfigure}
     \hspace{4mm}\begin{subfigure}[b]{0.48\columnwidth}
        \centering
        \includegraphics[width=\linewidth]{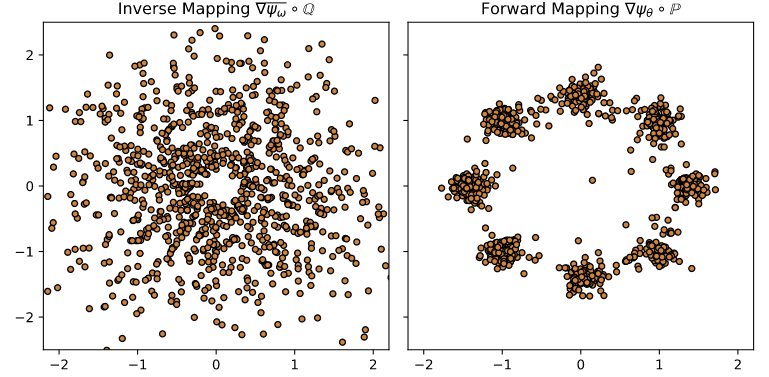}
         \caption{Generated distributions after 30000 iterations.}
         \label{fig:8gau-30000}
     \end{subfigure}
     \caption{Convergence stages of our algorithm applied to fitting cycle monotone mappings (forward and inverse) between distributions $\mathbb{P}$ (Gaussian) and $\mathbb{Q}$ (Mixture of 8 Gaussians).}
     \label{fig:8gau-stages}
\end{figure}

Additional examples (Gaussian Mixtures \& Swiss roll) are shown in Figures \ref{fig:49gau}, \ref{fig:swiss} and \ref{fig:100gau}. When $\mathbb{Q}$ is a mixture of $100$ gaussians (Figure \ref{fig:100gau}), our model learns all of the modes and does not suffer from \textbf{mode dropping}. We do not state that the fit is perfect but emphasize that \textbf{mode collapse} also does not happen.

\begin{figure}[!h]
    \centering
    \includegraphics[width=\linewidth]{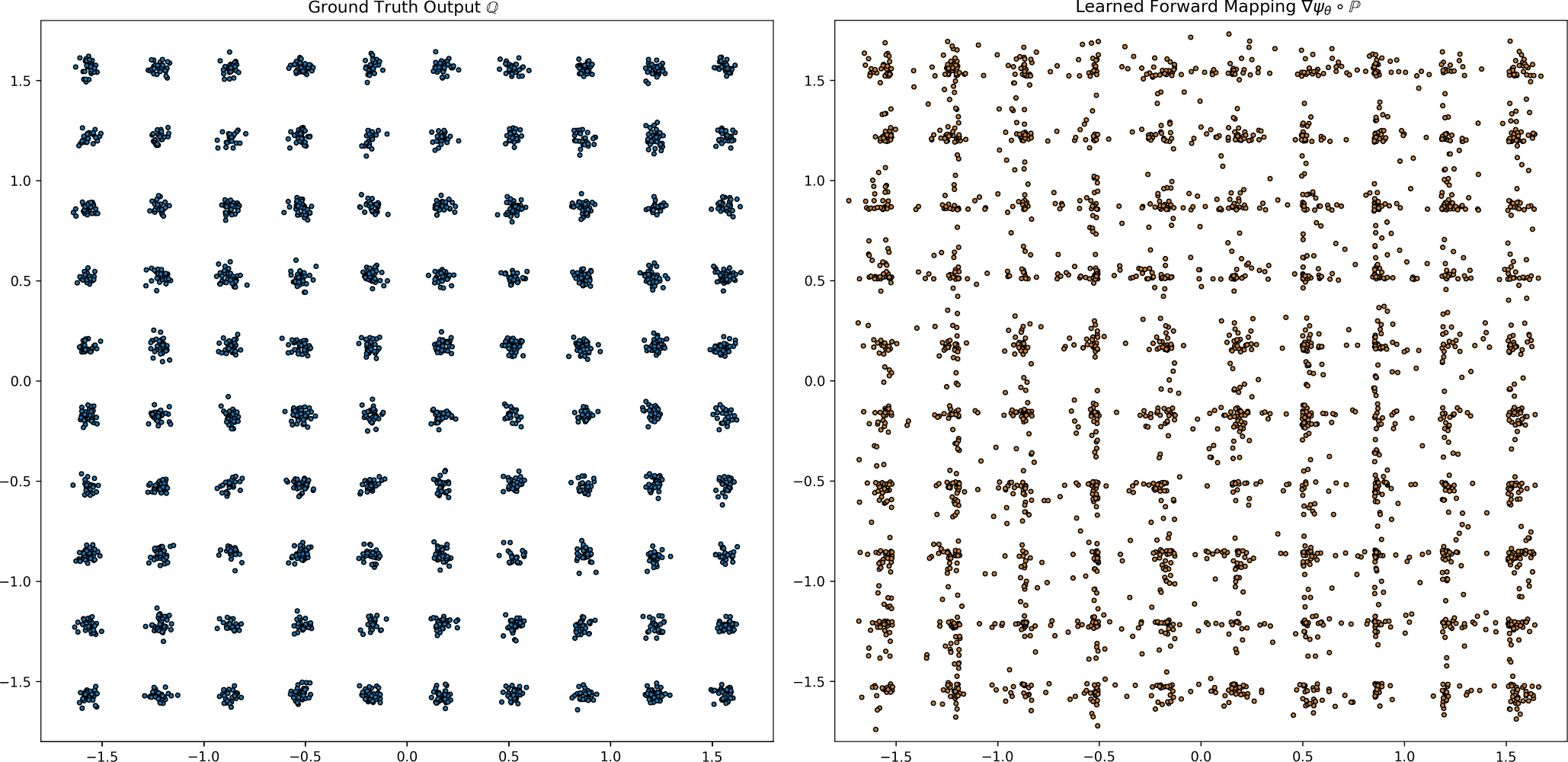}
    \caption{Mixture of 100 Gaussians $\mathbb{Q}$ and distribution ${\nabla\psi_{\theta}\circ \mathbb{P}}$ fitted by our algorithm.}
    \label{fig:100gau}
\end{figure}

\begin{figure}[!h]
     \centering
     \begin{subfigure}[b]{0.48\columnwidth}
         \centering
         \includegraphics[width=\linewidth]{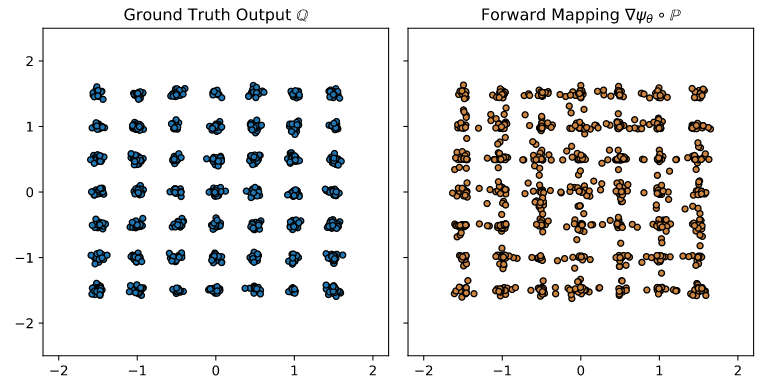}
    \caption{Mixture of 49 Gaussians $\mathbb{Q}$ and distribution $\nabla\psi_{\theta}\circ \mathbb{P}\approx \mathbb{Q}$ fitted by our algorithm.}
    \label{fig:49gau}
     \end{subfigure}
     \hspace{4mm}\begin{subfigure}[b]{0.48\columnwidth}
        \centering
        \includegraphics[width=\linewidth]{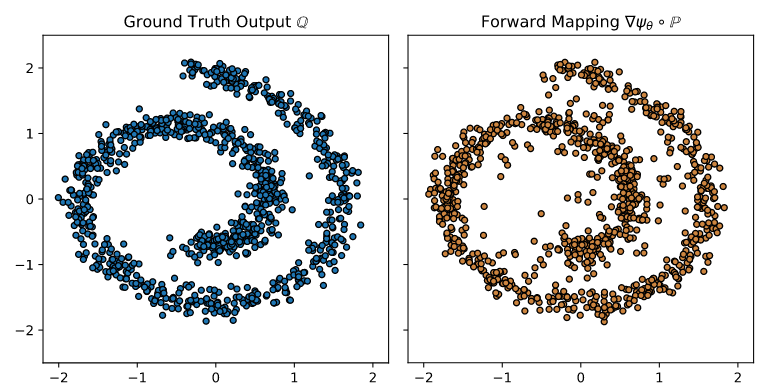}
    \caption{Swiss Roll distribution $\mathbb{Q}$ and distribution $\nabla\psi_{\theta}\circ \mathbb{P}\approx \mathbb{Q}$ fitted by our algorithm.}
    \label{fig:swiss}
     \end{subfigure}
    \caption{Toy distributions fitted by our algorithm.}
\end{figure}

Note that all our theoretical results require distributions $\mathbb{P},\mathbb{Q}$ to be smooth. Our method fits \textbf{continuous} optimal transport map via \textbf{differentiable} w.r.t. input ICNNs with CELU activation. At the same time, \cite{makkuva2019optimal} also uses ICNNs with \textbf{discontinuous gradient} w.r.t. the input, e.g. by using ReLU activations to fit discontinuous generative mappings. We do not know whether our theoretical results can be directly generalized to the discontinuous mappings case (without using smoothing as we suggested in Subsection \ref{sec-non-smooth}). However, we note that the usage of ICNNs with discontinuous gradient naturally leads to ``torn'' generated distributions, see an example in Figure \ref{fig:tornswiss}. While the fitted mapping is indeed close to the true Swiss Roll in $\mathbb{W}_{2}$ sense, it clearly suffers from ``torn'' effect. From the practical point of view, this effect seems to be similar to \textbf{mode collapse}, a well-known disease of GANs.

\begin{figure}[!h]
  \centering
    \vspace{-5mm}\includegraphics[width=0.4\linewidth]{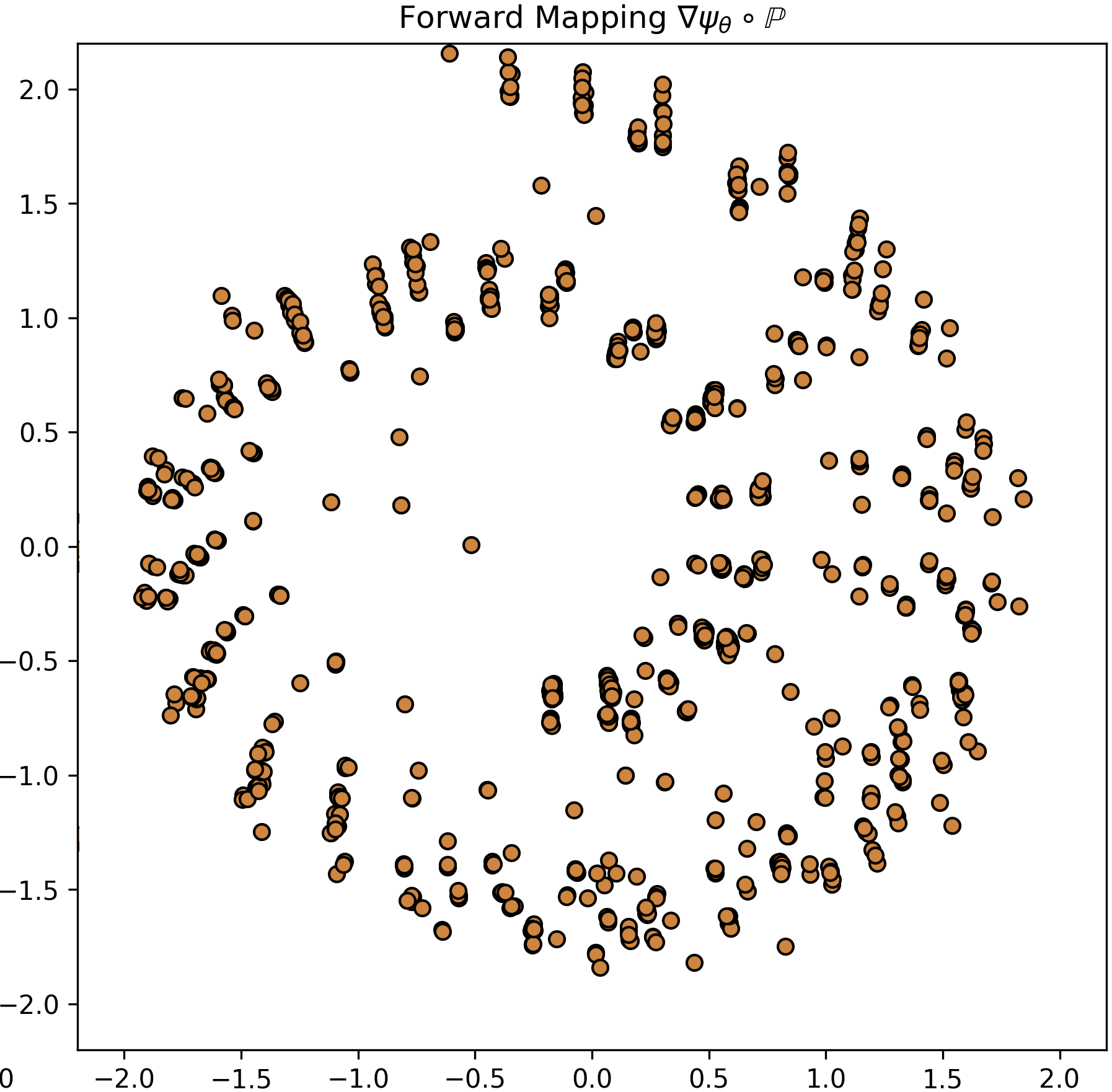}
    \caption{An example of a "torn" generative mapping to a Swiss Roll by a gradient of ICNN with ReLU activations.}
    \label{fig:tornswiss}
\end{figure}

\subsection{Gaussian Optimal Transport Details and Discussion}
\label{sec-comparison}

% Following your suggestion, we include the quantitative comparison with other approaches in high dimensions. We test the method by Seguy et.al. (2017) [LSOT] and the minimax approach by Makkuva et.al. (2020) [MM]. Recall the LSOT's optimization objective:\vspace{-4.5mm}

% $$\min_{\psi_{\theta},\overline{\psi_{\omega}}}\bigg[\int_{\mathcal{X}}\psi_{\theta}(x)d\mathbb{P}(x)+\int_{\mathcal{Y}}\overline{\psi_{\omega}}(y)d\mathbb{Q}(y)+\frac{1}{2\epsilon}\int_{\mathcal{X}\times\mathcal{Y}}\big[\langle x,y\rangle-\psi_{\theta}(x)-\overline{\psi_{\omega}}(y)\big]_{+}^{2}d\big(\mathbb{P}\times\mathbb{Q}\big)(x, y)\bigg],$$
% where we defined $\psi_{\theta}=\frac{\|x\|^{2}}{2}-u_{\theta}(x)$ and $\overline{\psi_{\omega}}=\frac{\|y\|^{2}}{2}-v_{\omega}(y)$ to make LSOT notation ($u_{\theta},v_{\omega}$) match our notation ($\psi_{\theta},\overline{\psi_{\omega}}$). We use L2 regularizer (but not entropy) since it empirically works better (as noted in LSOT paper). Potentials $\psi_{\theta},\overline{\psi_{\omega}}$ are NOT restricted to be convex. The transport plan $\hat{T}\circ\mathbb{P}\approx\mathbb{Q}$ is recovered via the barycentric projection.

We consider the Gaussian setting $\mathbb{P},\mathbb{Q}=\mathcal{N}(0,\Sigma_{\mathbb{P}}),\mathcal{N}(0,\Sigma_{\mathbb{Q}})$ for which the ground truth OT solution has a closed form, see \cite[Theorem 2.3]{alvarez2016fixed}. Considering non-centered $\mathbb{P},\mathbb{Q}$ is unnecessary since $\mathbb{W}_{2}^{2}(\mathbb{P},\mathbb{Q})=\|\mu_{\mathbb{P}}-\mu_{\mathbb{Q}}\|^{2}+\mathbb{W}_{2}^{2}(\mathbb{P}_{0},\mathbb{Q}_{0}),$ where $\mathbb{P}_{0},\mathbb{Q}_{0}$ are centered copies of $\mathbb{P},\mathbb{Q}$. In $D$-dimensional space, $\sqrt{\Sigma_{\mathbb{P}}}$ ($\sqrt{\Sigma_{\mathbb{Q}}}$ - analogously) is initialized as $S_{\mathcal{\mathbb{P}}}^{T}\Lambda S_{\mathcal{\mathbb{P}}}$, where $S_{\mathcal{\mathbb{P}}}\in O_{D}$ is a random rotation, $\Lambda$ is diagonal with eigenvalues $[\frac{1}{2},\dots,\frac{1}{2}b^{k},\dots,2]$, $b=\sqrt[D-1]{4}.$ The ground truth optimal transport map from $\mathbb{P}$ to $\mathbb{Q}$ is linear and given by  $$\nabla\psi^{*}(x)=\Sigma_{\mathbb{P}}^{-\frac{1}{2}}\big(\Sigma_{\mathbb{P}}^{\frac{1}{2}}\Sigma_{\mathbb{Q}}\Sigma_{\mathbb{P}}^{\frac{1}{2}}\big)^{\frac{1}{2}}\Sigma_{\mathbb{P}}^{-\frac{1}{2}}x.$$

We compare our approach with the method by \cite{seguy2017large} [LSOT] and the minimax approaches by \cite{taghvaei20192} [MM-1], \cite{makkuva2019optimal} [MM-2]. We recall the details of LSOT's regularized optimization of OT maps and distances. The objective is given by
$$\min_{\psi_{\theta},\overline{\psi_{\omega}}}\bigg[\int_{\mathcal{X}}\psi_{\theta}(x)d\mathbb{P}(x)+\int_{\mathcal{Y}}\overline{\psi_{\omega}}(y)d\mathbb{Q}(y)+\frac{1}{2\epsilon}\int_{\mathcal{X}\times\mathcal{Y}}\big[\langle x,y\rangle-\psi_{\theta}(x)-\overline{\psi_{\omega}}(y)\big]_{+}^{2}d\big(\mathbb{P}\times\mathbb{Q}\big)(x, y)\bigg],$$
where we defined $\psi_{\theta}=\frac{\|x\|^{2}}{2}-u_{\theta}(x)$ and $\overline{\psi_{\omega}}=\frac{\|y\|^{2}}{2}-v_{\omega}(y)$ to make LSOT notation ($u_{\theta},v_{\omega}$) match our notation ($\psi_{\theta},\overline{\psi_{\omega}}$). We use L2 regularizer (but not entropy) since it empirically works better (as noted in LSOT paper). Potentials $\psi_{\theta},\overline{\psi_{\omega}}$ are NOT restricted to be convex. The transport plan $\hat{T}\circ\mathbb{P}\approx\mathbb{Q}$ is recovered via the barycentric projection, see \cite[Section 4]{seguy2017large}.

We set $\lambda=\min(D,50)$ for our method and $\epsilon=0.01$ for LSOT (chosen empirically). In all the methods we use DenseICNN[$1; D,D,\frac{D}{2}$] of Subsection \ref{sec-dense-net}. In LSOT, we do not convexify nets (do not clamp weights), i.e. they are "usual"\ unrestricted neural networks (empirically selected as the best option) as originally implied in LSOT paper.

It follows from the Table \ref{table:l2uvp} in Section \ref{sect:gauss} that LSOT leads to high bias error which grows drastically with the dimension.  While theoretically $\epsilon\rightarrow 0$ should solve the bias problem, practically small $\epsilon$ leads to optimization instabilities. LSOT's drawback is that estimation of the regularizer requires sampling from joint measure $\mathbb{P}\times\mathbb{Q}$ on $\mathbb{R}^{2D}$. For the majority of pairs $(x,y)$ the L2 regularizer vanishes (the effect worsens with $D\rightarrow \infty$). In contrast to LSOT, our cycle regularizer uses samples only from marginal measures ($\mathbb{Q}$ or $\mathbb{P}$) on $\mathbb{R}^{D}$. The biasing effect is studied theoretically (our Theorems \ref{thm-main}, \ref{thm-main2}), and the bias is \textbf{inexistent} when optimal potentials $\psi^{*},\overline{\psi^{*}}$ are contained in the approximating function classes.
\begin{figure}[h!]
\centering
    \includegraphics[width=0.9\linewidth]{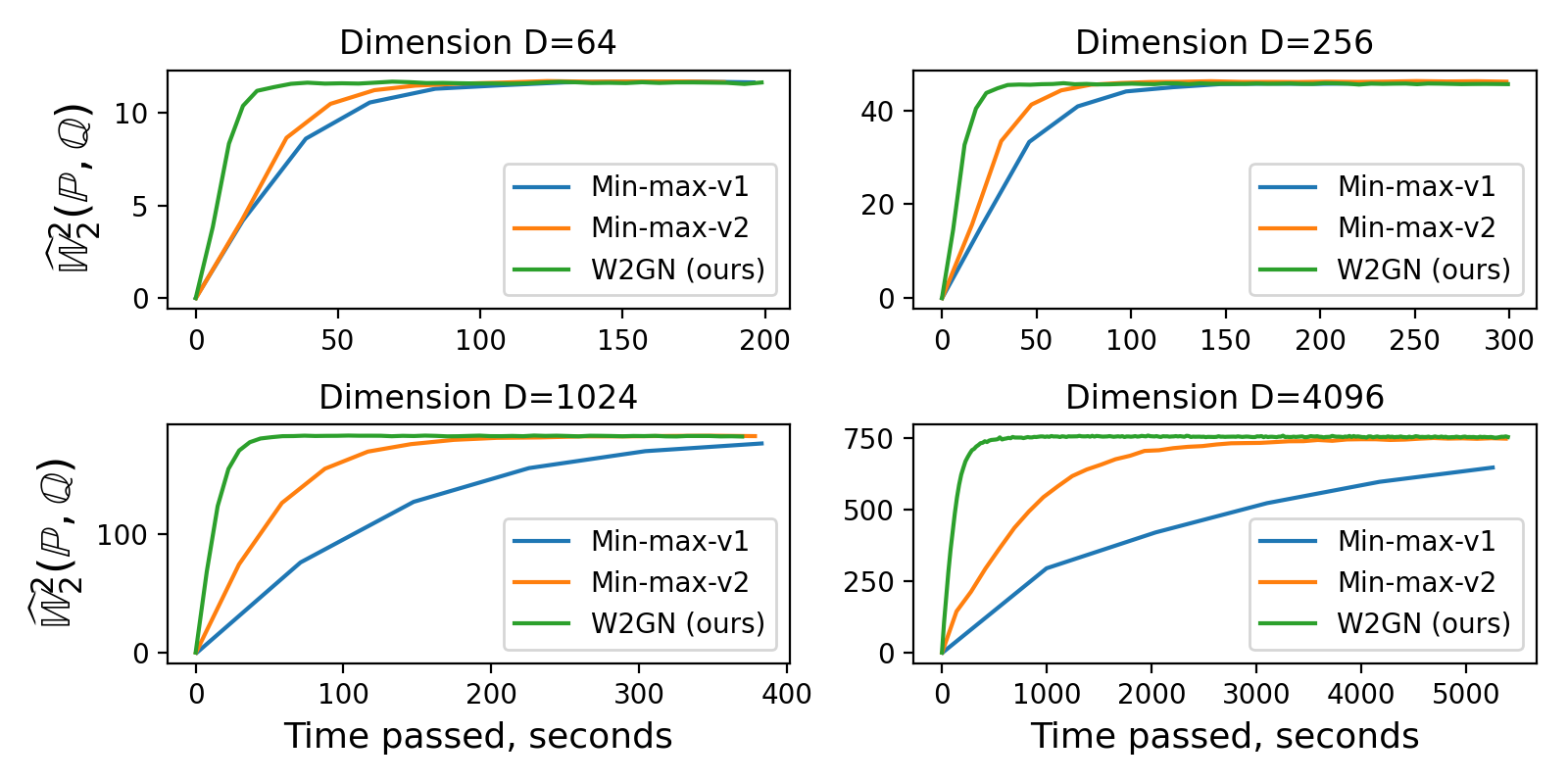}
    \caption{Comparison of convergence speed of W2GN, MM-1 and MM-2 approaches in dimensions $D=64,256,1024,4096$.}
    \label{fig:convergence-all}
\end{figure}

W2GN (ours), MM-1 and MM-2 methods are capable of computing maps and distances with low error ($\mathcal{L}^{2}$-UVP<$3\%$ even in $\mathbb{R}^{4096}$). However, as it is seen from the convergence plots in Figure \ref{fig:convergence-all}, our approach \textbf{converges several times faster}: it naturally follows from the fact that MM-1, MM-2 approaches contain an inner optimization cycle.

\subsection{Latent Space Optimal Transport Details}
\label{sec-exp2-latent}

We follow the pipeline of Figure \ref{fig:latent-ot-pipeline} below. The latent space distribution is constructed by using convolution auto-encoder to encode \textbf{CelebA} images into $128$-dimensional latent vectors respectively. To train the auto-encoder, we use a perceptual loss on features of a pre-trained VGG-16 network.

\begin{figure}[h!]
\centering
    \includegraphics[width=0.57\linewidth]{}
    \caption{The pipeline of latent space mass transport.}
    \label{fig:latent-ot-pipeline}
\end{figure}

We use DenseICNN [4; 256; 256; 128; 64] to fit a cyclically monotone generative mapping to transform standard normal noise into the latent space distribution. For each problem the networks are trained for $100000$ iterations with $128$ samples in a mini batch. Adam optimizer with $\text{lr}=3 \times 10^{-4}$ is used. We put $\lambda=100$ as the cycle regularization parameter.

We provide additional examples of generated images in Figure \ref{fig:iccn-latent-ot-celeba-extra}. We also visualize the latent space distribution of the autoencoder and the distribution fitted by generative map in Figure \ref{fig:celeba_pca}. The FID scores presented in Table \ref{table-fid-celeba} are computed via PyTorch implementation of FID Score\footnote{\url{https://github.com/mseitzer/pytorch-fid}}. As a benchmark score we added WGAN-QC by \cite{liu2019wasserstein}.

\begin{figure}[h!]
    \includegraphics[width=\linewidth]{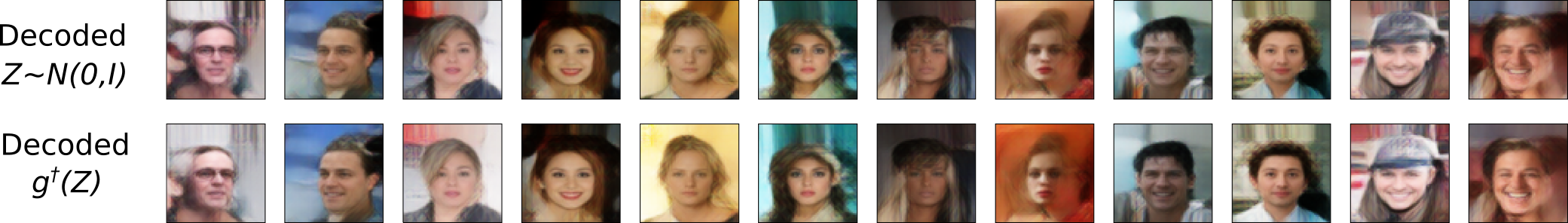}
    \caption{Images decoded from standard Gaussian latent noise (1st row) and decoded from the same noise transferred by our cycle monotone map (2nd row).}
    \label{fig:iccn-latent-ot-celeba-extra}
\end{figure}
\begin{figure}[!h]
    \centering
    \includegraphics[width=\linewidth]{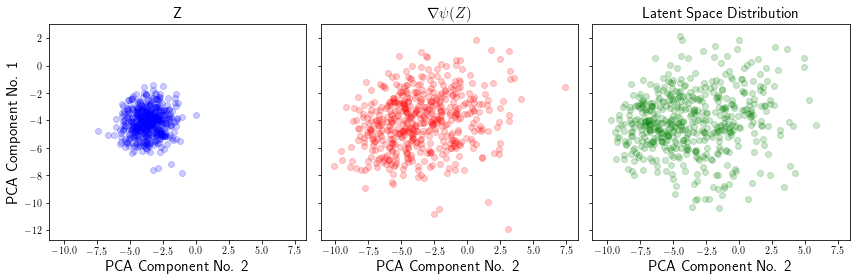}
    \caption{A pair of main principal components of CelebA Autoencoder's latent space. From left to right: $Z\sim\mathcal{N}(0,I)$ [blue], mapped $Z$ by W2GN [red], true autoencoders latent space [green]. PCA decomposition is fitted on autoencoders latent space [green].}
    \label{fig:celeba_pca}
\end{figure}

Finally, we emphasize that cyclically monotone generative mapping that we fit is \textbf{explicit}. Similarly to Normalizing Flows \cite{rezende2015variational} and in contrast to other methods, such as \cite{lei2019geometric}, it provides \textbf{tractable density} inside the latent space. Since $\nabla\psi_{\omega}\approx (\nabla\psi_{\theta})^{-1}$ is differentiable and \textbf{injective}, one may use the change of variables formula for density $q(y)=[\det \nabla^{2}\psi_{\omega}(y)]\cdot p(\nabla\psi_{\omega}(y))$ to study the latent space distribution.

\subsection{Color Transfer}
\label{sec:color-transfer}
The problem of color transfer between images\footnote{Images may have unequal size. Yet they are assumed to have the same number of channels, e.g. RGB ones.} is to map the \textbf{color palette} of the image into the other one in order to make it look and ``feel'' similar to the original.

Optimal transport can be applied to color transfer, but it is sensitive to noise and outliers. To avoid these problems, several relaxations were proposed \cite{rabin2014adaptive,paty2019regularity}. These approaches solve a discrete version of Wasserstein-2 OT problem. The computation of optimal transport cost for large images is barely feasible or infeasible at all due to extreme size of color palettes. Thus, the \textbf{reduction of pixel color palette} by $k$-means clustering is usually performed to make OT computation feasible. Yet such a reduction may lose color information.

Our algorithm uses mini-batch stochastic optimization. Thus, it has no limitations on the size of color palettes. On training, we sequentially input mini-batches of images' pixels ($\in\mathbb{R}^{3}$) into potential networks with DenceICNN [3; 128; 128, 64]  architecture.\footnote{Since our model is \textbf{parametric}, the complexity of fitted generative mapping $g^{\dagger}:\mathbb{R}^{3}\rightarrow \mathbb{R}^{3}$ between the palettes depends on the size of potential networks.} The networks are trained for $5000$ iterations with $1024$ pixels in a mini batch. Adam optimizer with $\text{lr}=10^{-3}$ is used. We put $\lambda=3$ as the cycle regularization parameter. We impose extra $10^{-10}$ $\mathcal{L}^{1}$-penalty on the weights.

The color transfer results for $\approx 10$ megapixel images are presented in Figure \ref{fig:ct-mona-jesus}. The corresponding color palettes are given in Figure \ref{fig:ct-mona-jesus-palette}. Additional example of color transfer are given in Figure \ref{fig:color-transfer-houses-all}.

\begin{figure}[!h]
     \centering
     \begin{subfigure}[b]{0.99\columnwidth}
         \centering
         \includegraphics[width=\linewidth]{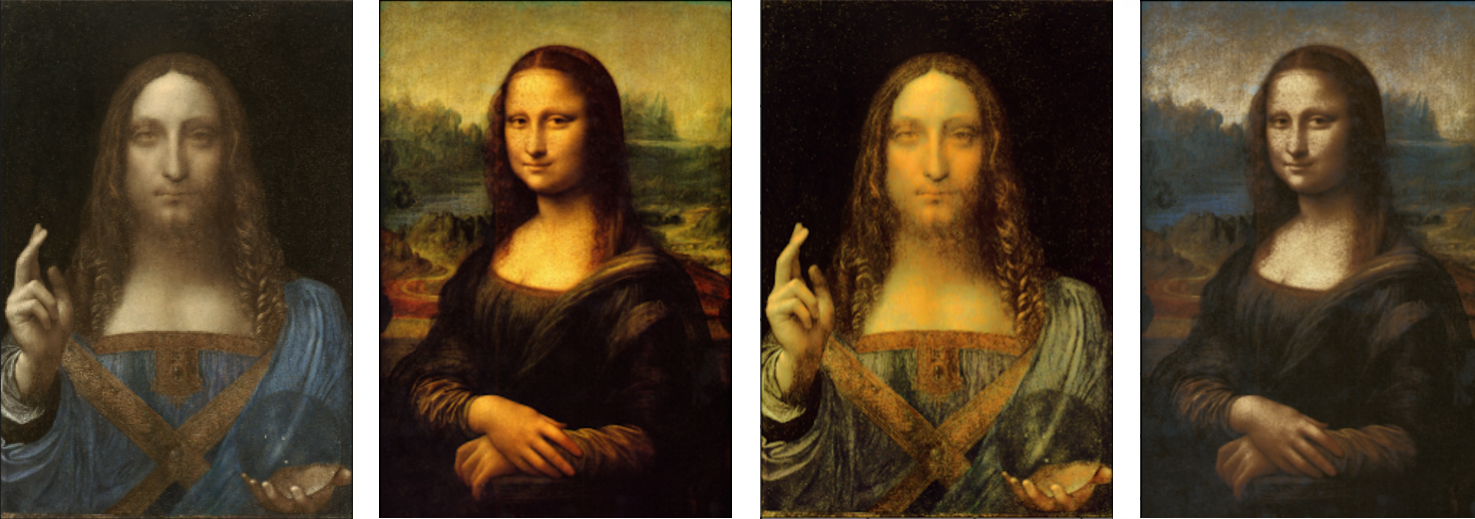}
        \caption{Original images (on the left) and images obtained by color transfer (on the right). The sizes of images are $3300 \times 4856$ (first) and $2835 \times 4289$ (second).}
        \label{fig:ct-mona-jesus}
     \end{subfigure}
     \hfill
     
     \vspace{2mm}\begin{subfigure}[b]{0.99\columnwidth}
        \centering
    \includegraphics[width=\linewidth]{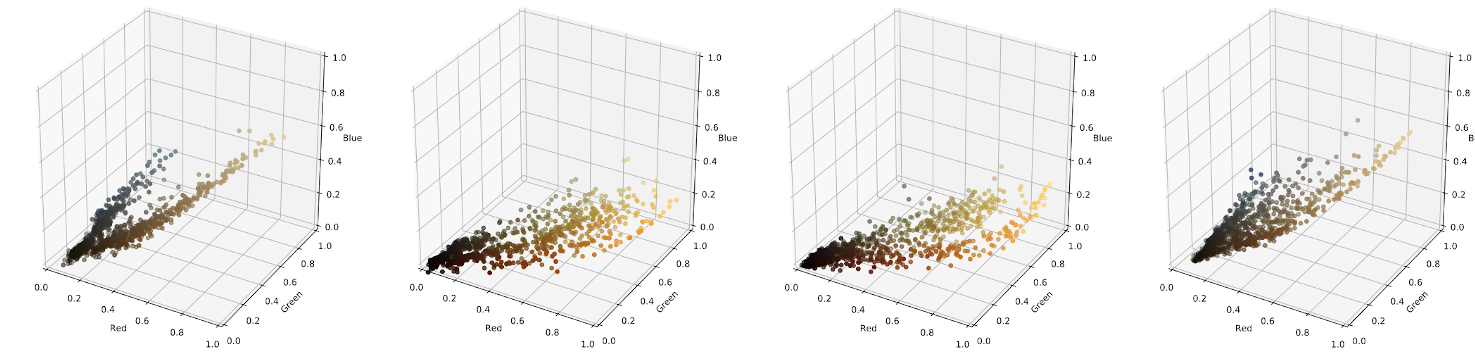}
    \caption{Color palettes (3000 random pixels, best viewed in color) for the original images (on the left) and for images with transferred color (on the right).}
    \label{fig:ct-mona-jesus-palette}
     \end{subfigure}
    \caption{Results of Color Transfer between high resolution  images ($\approx 10$ megapixel) by a pixel-wise cycle monotone mapping.}
    \label{fig:ct-mona-jesus-all}
\end{figure}
\begin{figure}[!h]
     \centering
     \begin{subfigure}[b]{0.99\columnwidth}
         \centering
         \includegraphics[width=\linewidth]{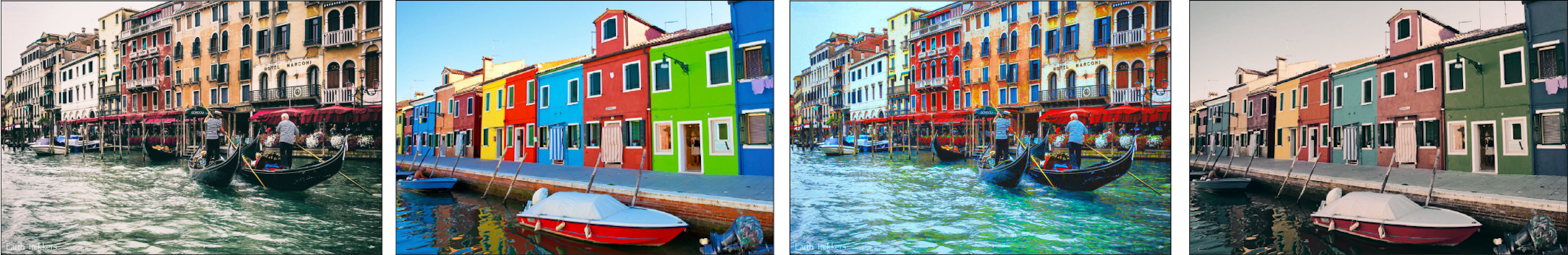}
        \caption{Original images (on the left) and images obtained by color transfer (on the right). }
        \label{fig:ct-houses}
     \end{subfigure}
     \hfill
     
     \vspace{2mm}\begin{subfigure}[b]{0.99\columnwidth}
        \centering
    \includegraphics[width=\linewidth]{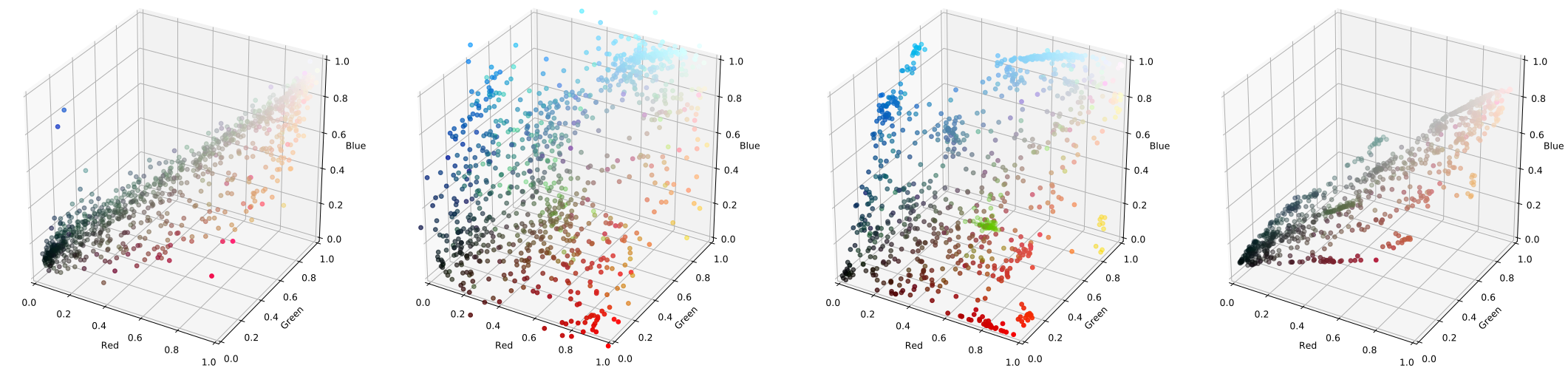}
    \caption{Color palettes (3000 random pixels, best viewed in color) for the original images (on the left) and for images with transferred color (on the right).}
    \label{fig:ct-houses-palette}
     \end{subfigure}
    \caption{Results of Color Transfer between images by a pixel-wise cycle monotone mapping.}
    \label{fig:color-transfer-houses-all}
\end{figure}

% \begin{figure}[!h]
%      \centering
%      \begin{subfigure}[b]{0.99\columnwidth}
%          \centering
%          \includegraphics[width=\linewidth]{}
%         \caption{Original images (on the left) and images obtained by color transfer (on the right). }
%         \label{fig:ct-picasso}
%      \end{subfigure}
%      \hfill
%      \begin{subfigure}[b]{0.99\columnwidth}
%         \centering
%     \includegraphics[width=\linewidth]{}
%     \caption{Color palettes (3000 random pixels) for the original images (on the left) and for images with transferred color (on the right).}
%     \label{fig:ct-picasso-palette}
%      \end{subfigure}
%     \vspace{-2mm}\caption{Results of Color Transfer between  images by a pixel-wise cycle monotone mapping.}
%     \label{fig:color-transfer-picasso-all}
% \end{figure}

\subsection{Domain Adaptation}
\label{sec:domain-adaptation}
The domain adaptation problem is to learn a model $f$ (e.g. a classifier) from a source distribution $\mathbb{Q}$. This model has to perform well on a different (related) target distribution $\mathbb{P}$.

Most of the methods based on OT theory solve domain adaptation explicitly by transforming distribution $\mathbb{P}$ into $\mathbb{Q}$ and then applying the model $f$ to generated samples. In some cases the mapping $g:\mathcal{X}\rightarrow\mathcal{Y}$ (which transforms $\mathbb{P}$ to $\mathbb{Q}$) is obtained by solving a discrete OT problem \cite{courty2016optimal,courty2017joint,redko2018optimal}, while some approaches adopt neural networks to estimate the mapping $g$ \cite{bhushan2018deepjdot,seguy2017large}.

We address the unsupervised domain adaptation problem which is the most difficult variant of this task. Labels are available only in the source domain, so we do not use any information about the labels. Our method trains $g$ as a gradient of a convex function. It can be applied to new arriving samples which are not present in the train set.

We test our model on MNIST ($\approx 60000$ images; $28\times 28$) and USPS ($\approx 10000$ images; rescaled to $28\times 26$) digits datasets. We perform \textbf{USPS} $\rightarrow$ \textbf{MNIST} domain adaptation. To do this, we train LeNet $\geq 99\%$-accuracy classifier $h$ on MNIST. Then, we apply $h$ to both datasets, extract $84$ last layer features. Thus, we form distributions $\mathbb{Q}$ (features for MNIST) and $\mathbb{P}$ (features for USPS).

To fit a cycle monotone domain adaptation mapping, we use DenseICNN [32; 128; 128, 128] potentials. We train our model on mini-batches of $64$ samples for $10000$ iterations with cycle regularization $\lambda=1000$. We use Adam optimizer with $\text{lr}=10^{-4}$ and impose $10^{-7}$ $\mathcal{L}^{1}$-penalty on the weights of the networks.

Similar to \cite{seguy2017large}, we compare the accuracy of MNIST $1$-NN classifier $f$ applied to features $x\sim \mathbb{P}$ of USPS with the same classifier applied to mapped features $g^{\dagger}(x)$. $1$-NN is chosen as the classification model in order to eliminate any influence of the base classification model on the domain adaptation and directly estimate the effect provided by our cycle monotone map.

The results of the experiment are presented in Table \ref{table-domain}. Since domain adaptation quality highly depends on the quality of the extracted features, we repeat the experiment $3$ times, i.e. we train $3$ LeNet MNIST classifiers for feature extraction. We report the results with mean and central tendency. For benchmarking purposes, we also add the score of $1$-NN classifier applied to the features of USPS transported to MNIST features by the discrete optimal transport. It can be considered as the ``most straightforward'' optimal transport map.\footnote{In contrast to our method, it can not be directly applied to out-of-train-sample examples. Moreover, its computation is infeasible for large datasets.}
\begin{table}[!h]
\centering
\begin{tabular}{|c|c|c|c|c|}
\hline
& Repeat 1 & Repeat 2 & Repeat 3 & Average ($\mu\pm\sigma$) \\ \hline
Target features   & $75.7\%$ & $77\%$ & $75.4\%$ & $76\pm 0.8\%$ \\ \hline
\textbf{Mapped features (W2GN)} & $80.6\%$ & $80.3\%$ & $82.7\%$ & $81.2\pm 1\%$\\ \hline
Mapped features (Discrete OT)  & $76\%$ & $75.7\%$ & $76.1\%$ & $75.9\pm 0.4\%$\\ \hline
Mapped features \cite{seguy2017large}  & - & - & - & $77.92\%$\\ \hline
\end{tabular}
\vspace{2mm}\caption{1-NN classification accuracy on USPS $\rightarrow$ MNIST domain adaptation problem.}
\label{table-domain}
\end{table}

Our reported scores are comparable to the ones reported by \cite{seguy2017large}. We did not reproduce their experiments since \cite{seguy2017large} does not provide the source code for domain adaptation. Thus, we refer the reader directly to the paper's reported scores (Table 1 of \cite{seguy2017large}, \textbf{first} column with scores).

For visualization purposes, we plot the two main components of the PCA decomposition of feature spaces (for one of the conducted experiments) in Figure \ref{fig:da_pca}: MNIST features, mapped USPS features by using our method, original USPS features.

\begin{figure}[!h]
    \centering
    \includegraphics[width=\linewidth]{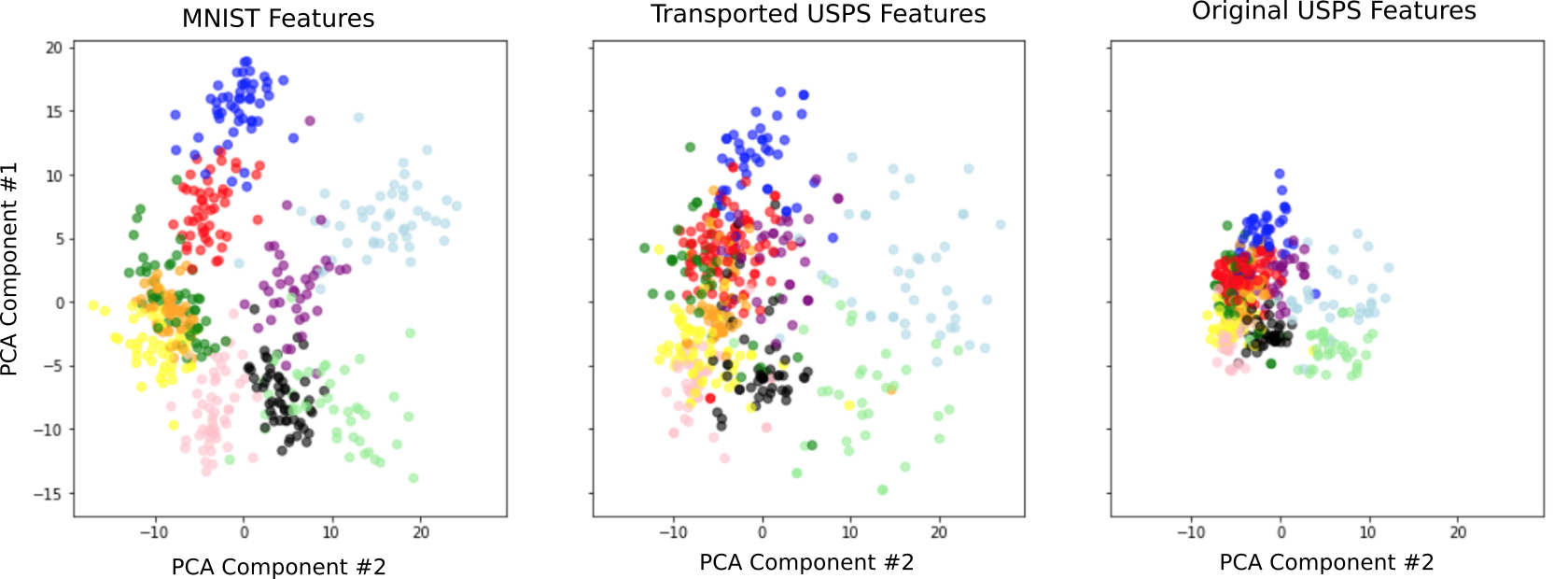}
    \caption{A pair of main principal components of feature spaces. From left to right: MNIST feature space, mapped USPS features by W2GN, original USPS feature space. PCA decomposition is fitted on MNIST features. Best viewed in color (different colors represent different classes of digits $0-9$). }
    \label{fig:da_pca}
\end{figure}

% \subsubsection{Image-based Adaptation}
% \label{sec-domain-image}

% For exploration purposes, we conduct domain adaptation on \textbf{images} rather than on the extracted features. Our goal is to visually assess the fitted mappings.

% We experiment with DenseICNN [32; 128; 128, 64] and ConvICNN\footnote{We use simplified version of ConvICNN from Subsection \ref{sec-conv-net} designed to take $16\times 16$ inputs.} potentials. All the hyperparameters are similar to the previous subsection, and no data augmentation or post-processing is used. The results are presented in Figure \ref{fig:mnist-usps}.

% \begin{figure}[!h]
%      \centering
%      \begin{subfigure}[b]{0.99\columnwidth}
%         \centering
%         \includegraphics[width=\linewidth]{body/pics/mnist_usps_fc.png}
%         \caption{Results for DenseICNN discriminators.}
%         \label{fig:mnistusps-dense}
%      \end{subfigure}
%      \hfill
%      \begin{subfigure}[b]{0.99\columnwidth}
%         \centering
%         \vspace{3mm}\includegraphics[width=\linewidth]{body/pics/mnist_usps_conv.png}
%         \caption{Results for ConvICNN discriminators.}
%         \label{fig:mnistusps-conv}
%      \end{subfigure}
%     \vspace{-2mm}\caption{Results of MNIST $\leftrightarrows$ USPS domain adaptation by our method applied to grayscale images $16\times 16$.}
%     \label{fig:mnist-usps}
% \end{figure}

% DenseICNN networks change images more radically than ConvICNN due to expressive power. Also, ConvICNN mainly performs local transformations, which is expected due to its convolutional structure.

\subsection{Image-to-Image Style Transfer}
\label{sec-exp-image-to-image-2}

We experiment with ConvICNN potentials on publicly available\footnote{\url{https://github.com/junyanz/pytorch-CycleGAN-and-pix2pix}} \textbf{Winter2Summer} and \textbf{Photo2Cezanne} datasets containing $256\times 256$ pixel images.

We train our model on mini batches of $8$ randomly cropped ${128\times 128}$ pixel RGB image parts. As an additional augmentation, we use random rotations ($\pm \frac{\pi}{18}$), random horizontal flips and the addition of small Gaussian noise ($\sigma=0.01$). The networks are trained for $20000$ iterations with cycle regularization $\lambda=35000$. We use Adam optimizer and impose additional $10^{-1}$ $\mathcal{L}^{1}$-penalty on the weights of the networks.
\begin{figure}[!h]
    \centering
    \includegraphics[width=.6\linewidth]{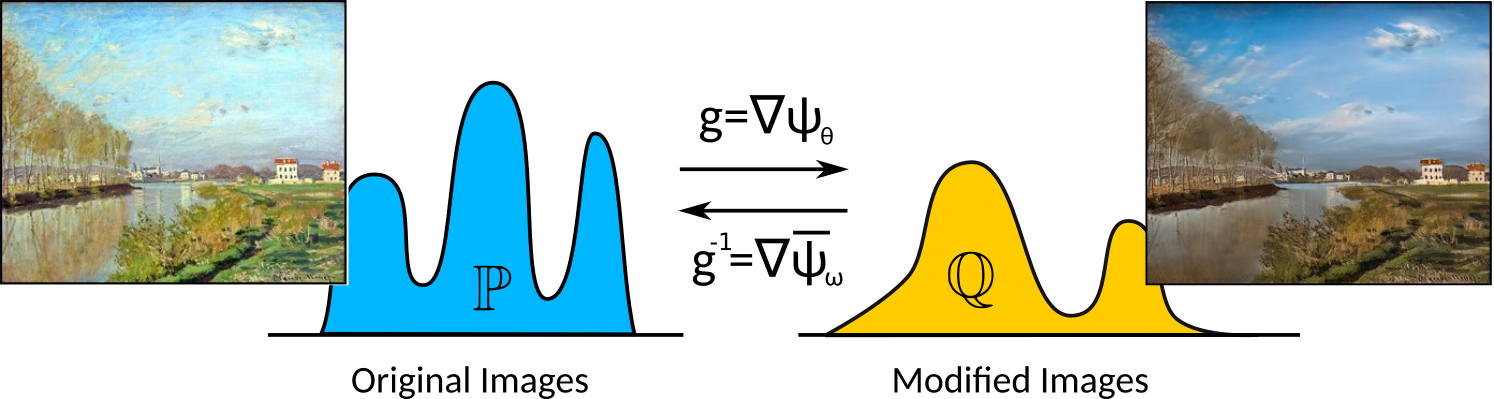}
    \caption{Schematically presented image-to-image style transfer by a pair of ConvICNN fitted by our method.}
    \label{fig:cycle-map-images}
    % \vspace{-13pt}
\end{figure}
Our scheme of style transfer between datasets is presented in Figure \ref{fig:cycle-map-images}. We provide the additional results for Winter2Summer and Photo2Monet datasets in Figures \ref{fig:summer2winter2} and \ref{fig:photocezanne2} respectively.
\begin{figure}[!h]
     \centering
     \begin{subfigure}[b]{0.48\columnwidth}
        \centering
        \includegraphics[width=\linewidth]{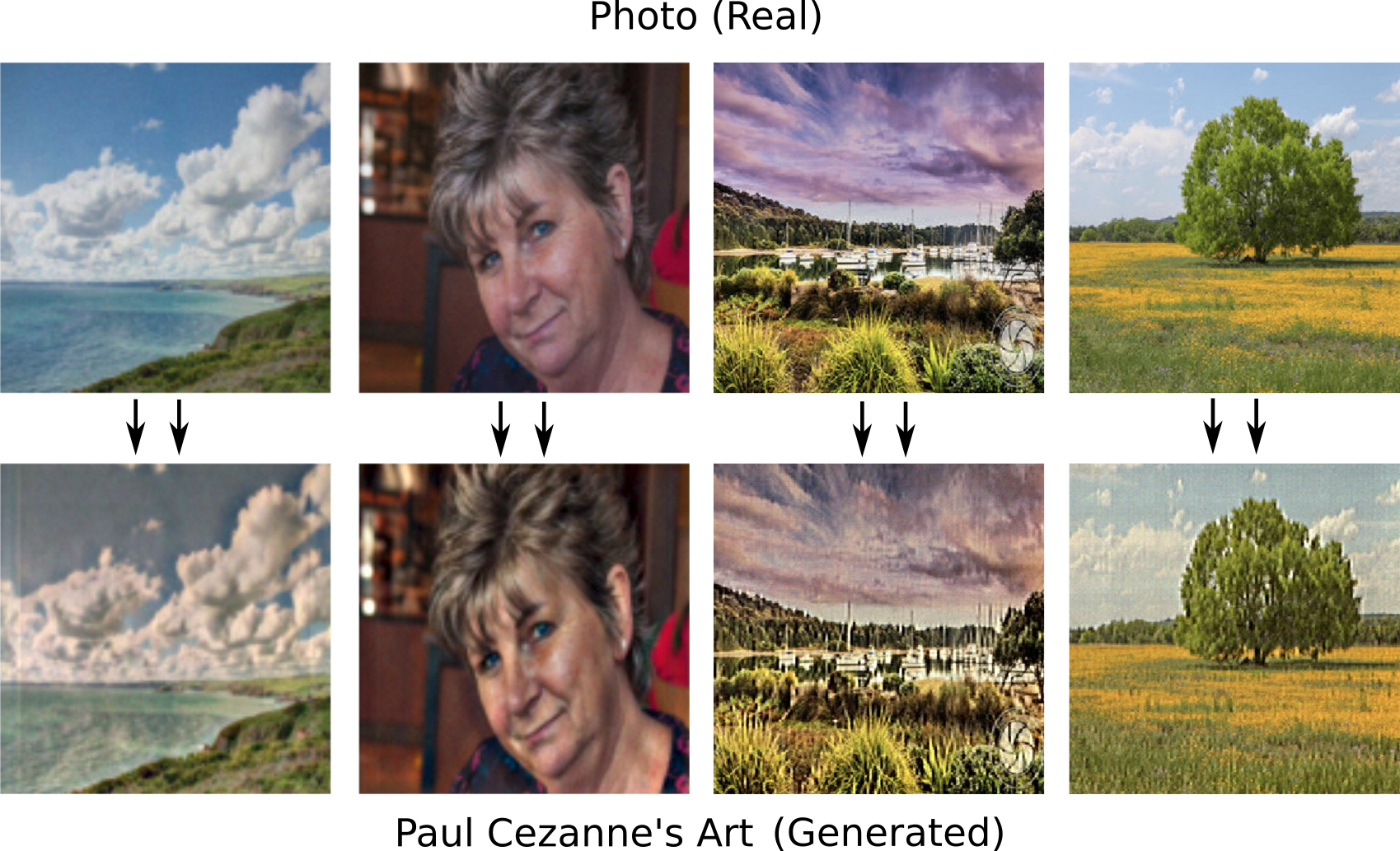}
        \caption{Results of cycle monotone image-to-image style transfer by ConvICNN on Photo2Cezanne dataset, $128\times 128$ pixel images.}
        \label{fig:photocezanne2}
     \end{subfigure}
     \hfill
     \begin{subfigure}[b]{0.48\columnwidth}
        \centering
        \includegraphics[width=\linewidth]{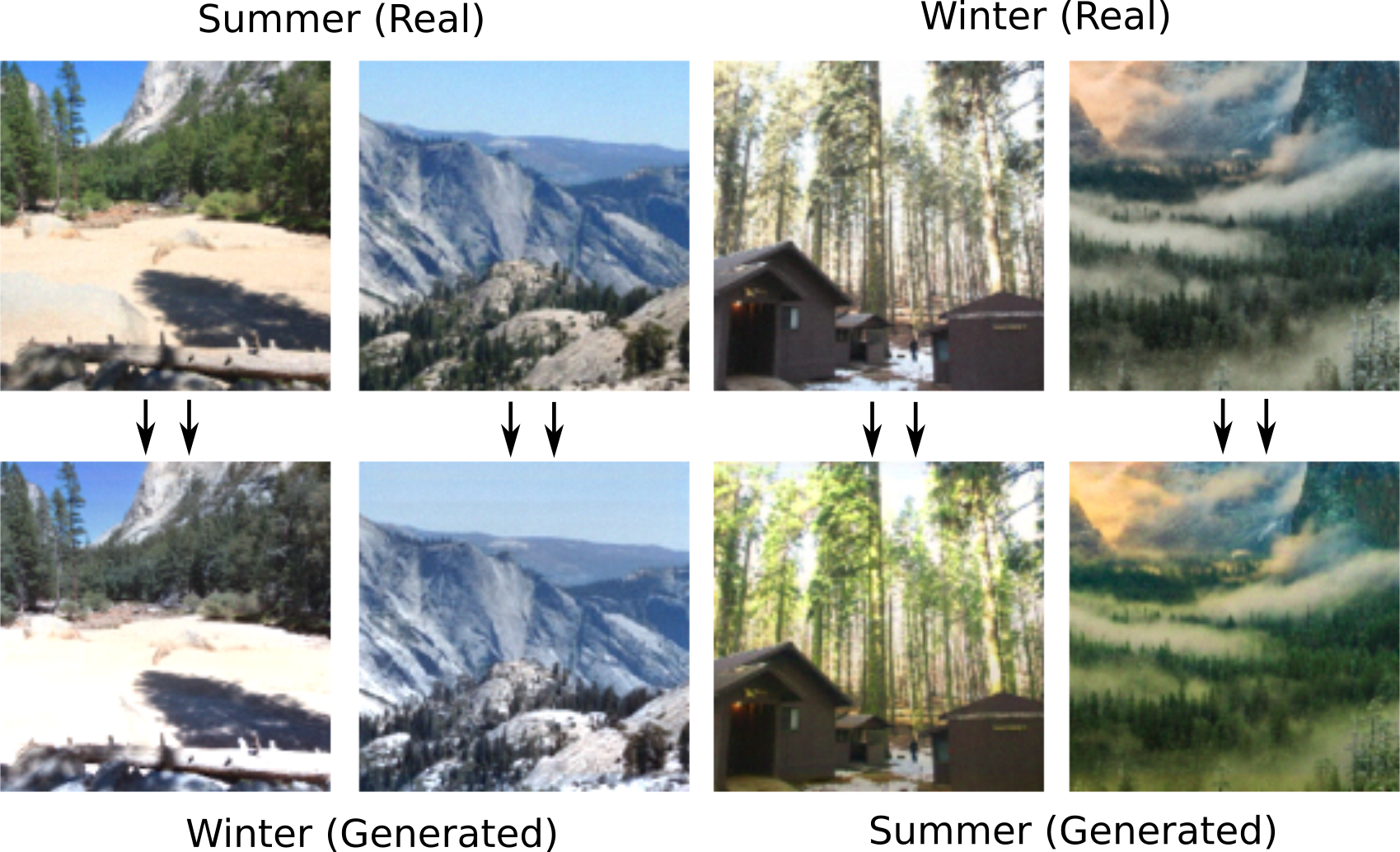}
        \caption{Results of cycle monotone image-to-image style transfer by ConvICNN on Winter2Summer dataset, $128\times 128$ pixel images.}
        \label{fig:summer2winter2}
     \end{subfigure}
    \caption{Additional results of image-to-image style transfer on Winter2Summer and Photo2Monet datasets.}
    \label{fig:style-transfer-bonus}
\end{figure}

In all the cases, our networks change colors but preserve the structure of the image. In none of the results did we note that the model removes large snow masses (for winter-to-summer transform) or covers green trees with white snow (for summer-to-winter). We do not know the exact explanation for this issue but we suppose that the desired image manipulation simply may not be cycle monotone.

For completeness of the exposition, we provide some results of cases when our model does not perform well (Figure \ref{fig:s2wbad}). In Figure \ref{fig:winter2summer_bad} the model simply increases the green color component, while in Figure \ref{fig:summer2winter_bad} it decreases this component. Although in many cases it is actually enough to transform winter to summer (or vice-versa), sometimes more advanced manipulations are required.

\begin{figure}[!t]
     \centering
     \begin{subfigure}[b]{0.49\columnwidth}
        \centering
        \includegraphics[width=\linewidth]{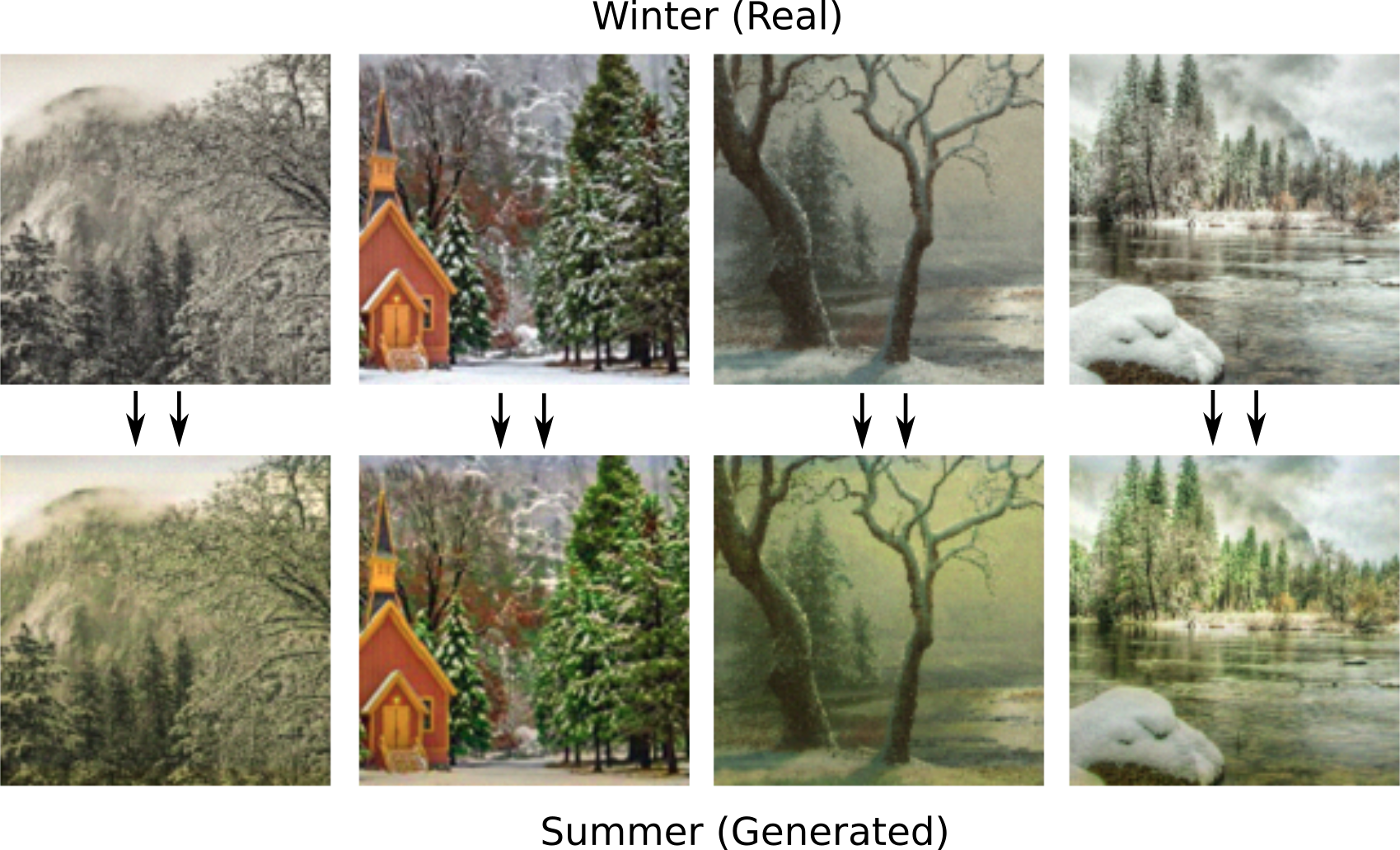}
        \caption{Winter to summer transform.}
        \label{fig:winter2summer_bad}
     \end{subfigure}
     \hfill
     \begin{subfigure}[b]{0.49\columnwidth}
        \centering
        \includegraphics[width=\linewidth]{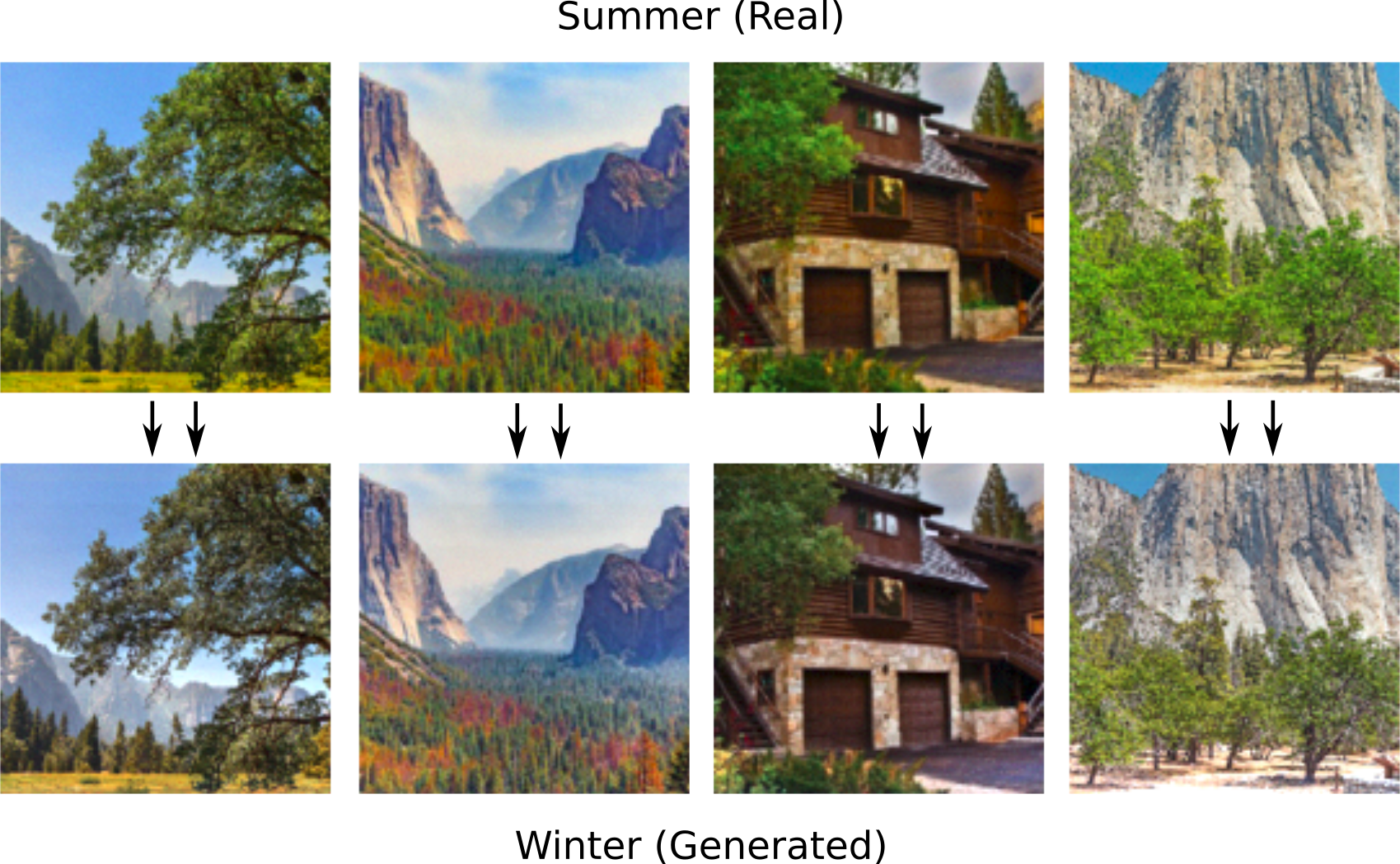}
        \caption{Summer to winter transform.}
        \label{fig:summer2winter_bad}
     \end{subfigure}
    \caption{Cases when the cycle monotone image-to-image style transfer by ConvICNN on Winter2Summer dataset works not well, $128\times 128$ pixel images.}
    \label{fig:s2wbad}
\end{figure}

In the described experiments, we applied our method directly to original images without any specific preprocessing or feature extraction. The model captures some of the required attributes to transfer, but sometimes it does not produce expected results. To fix this issue, one may consider OT for the quadratic cost defined on \textbf{features} extracted from the image or on \textbf{embeddings} of images (similar to the domain adaptation in Subsection \ref{sec:domain-adaptation} or latent space mass transport \ref{sect:latent}). This statement serves as the challenge for our further research.

\end{appendices}

\end{document}